\documentclass{article}

\usepackage{amsmath, amssymb, graphicx, url}
\usepackage{booktabs}
\usepackage{longtable}
\usepackage{mathrsfs}

\usepackage{graphicx}
\usepackage{subfigure}
\usepackage{booktabs}
\usepackage{epstopdf}
\usepackage{hyperref}
\newenvironment{claim}[1]{\par\noindent\underline{Claim:}\space#1}{}

\usepackage{algorithm}
\usepackage{algpseudocode}

\usepackage{amsthm}
\usepackage[numbers]{natbib}
\usepackage{dsfont}
\usepackage{xcolor}
\setcitestyle{authoryear,open={(},close={)}}

\newtheorem{theorem}{Theorem}
\newtheorem{lemma}[theorem]{Lemma}
\newtheorem{corollary}[theorem]{Corollary}

\usepackage{arxiv}

\usepackage[utf8]{inputenc} 
\usepackage[T1]{fontenc}    
\usepackage{hyperref}       
\usepackage{url}            
\usepackage{booktabs}       
\usepackage{amsfonts}       
\usepackage{nicefrac}       
\usepackage{microtype}      
\usepackage{lipsum}
\usepackage{graphicx}
\graphicspath{ {./images/} }

\newcommand{\mathbbm}[1]{\text{\usefont{U}{bbm}{m}{n}#1}}

\title{Learning with a Budget: Identifying the Best Arm with Resource Constraints}

\author{
  Li Zitian \\
  Department of Industrial Systems Engineering \& Management\\
  National University of Singapore\\
  Singapore \\
  \texttt{lizitian@u.nus.edu} \\
   \And
 Cheung Wang Chi \\
  Department of Industrial Systems Engineering \& Management\\
  National University of Singapore\\
  Singapore \\
  \texttt{isecwc@nus.edu.sg} \\
}

\begin{document}
\maketitle
\begin{abstract}
In many applications, evaluating the effectiveness of different alternatives comes with varying costs or resource usage. Motivated by such heterogeneity, we study the Best Arm Identification with Resource Constraints (BAIwRC) problem, where an agent seeks to identify the best alternative (aka arm) in the presence of resource constraints. Each arm pull consumes one or more types of limited resources. We make two key contributions. First, we propose the Successive Halving with Resource Rationing (SH-RR) algorithm, which integrates resource-aware allocation into the classical successive halving framework on best arm identification. The SH-RR algorithm unifies the theoretical analysis for both the stochastic and deterministic consumption settings, with a new \textit{effective consumption measure}. 
We introduce a new complexity term involving the effective complexity measure, 
and prove that SH-RR achieves a near-optimal non-asymptotic rate of convergence in the probability of identifying the optimal arm. 
Second, we uncover a fundamental difference in the convergence behaviors between the deterministic and stochastic resource consumption settings, by proving two different lower bounds. The difference illustrates the impact of uncertainty in resource consumption on the probability of identifying the best arm, confirming that the effective consumption measure is fundamental to the BAIwRC problem.
\end{abstract}

\keywords{Multi-armed Bandits, Best Arm Identification, Pure Exploration}

\section{Introduction}

\label{Introduction}
The best arm identification (BAI) is a central problem in 
pure exploration in the multi-armed bandit setting. The overall goal is to identify an optimal arm (an arm with the highest mean reward) through a sequence of arm pulls. Most existing works focus on the fixed confidence setting or the fixed budget setting. In the fixed confidence setting, an agent needs to output a predicted best arm after a sequence of arm pulls, with a probability of correctness above a given confidence level. The agent needs to determine when to stop dynamically. The fixed budget setting involves an upper bound on the number of arm pulls (budget). The agent aims to maximize the probability of identifying the best arm, subject to the budget constraint. 
Existing studies on BAI shed light on the relationship between the number of arm pulls and the probability of identifying an optimal arm, which provides us \emph{statistical insights} into the strategy's performance. By contrast, in the case of \emph{arm cost heterogeneity} where pulling different arms incurs different costs, the total number of arm pulls does not accurately reflect the total cost of arm pulls. In other words, existing studies, which focus on the total number of arm pulls, are yet to provide \emph{economic insights} into the total experimentation cost in cost-aware settings. 


Arm cost heterogeneity widely exists in many real-world applications, where potentially multiple types of resources are consumed for exploration. Some examples are listed below.
\begin{itemize}
    \item \textit{Advertising}. Consider a retail firm experimenting two marketing campaigns: (a) advertising on an online platform for a day, (b) providing \$5 vouchers to a selected group of recurring customers. The firm wishes to identify the more profitable one out of campaigns (a, b). The executions of (a, b) lead to different costs. A cost-aware retail firm would desire to control the total cost in experimenting these campaign choices, rather than the number of try-outs.
    \item \textit{Simulations}. The designs of transportation or queuing systems require simulation platforms to estimate the efficiency of multiple candidate designs. The simulation time consumptions usually vary across different designs. Instead of imposing a maximum number of simulations, it is more reasonable to impose a maximum on the total time consumption in all simulations conducted. 
    \item \textit{Process designs}. Process design decisions in business domains such as supply chain, service operations, pharmaceutical tests and biochemistry typically involve experimenting on a collection of alternatives, and identifying the best one in terms of profit, social welfare or any desired metric. Compared to the total number of try-outs, it is more natural to keep the total cost of experimentation in check. In applications such as pharmaceutical tests and biochemistry, each experiment consumes multiple types of resources, such as chemicals, manpower and time. The decision agent seeks to ensure all resource consumptions are within their respective budgets.
\end{itemize}

In the above settings, it is natural for the decision agent to maximize the probability of identifying the best alternative, subject to the resource constraints rather than a constraint on the maximum number of experiments / try-outs. 
What is the relationship between the total arm pulling cost and the probability of identifying the best arm? We shed light on this question in this manuscript.

\subsection{Main Contributions}

We study the problem of Best Arm Identification (BAI) under heterogeneous resource consumption across different arms. We design an algorithm, Sequential Halving with Resource Rationing (SH-RR), which takes heterogeneous resource allocation into consideration. We provide a rigorous theoretical analysis of SH-RR and establish its performance guarantees in terms of the probability of identifying the best arm. Our main contributions are summarized as follows.

First, we formulate the Best Arm Identification with Resource Constraints (BAIwRC) problem model. The proposed model considers heterogeneous resource consumption across $L$ types of budget, where $L$ is a positive integer. Different from the Fixed-Budget setting in \cite{audibert2010best}, our formulation introduces arm cost heterogeneity in a pure exploration setting. Our model allows arbitrary correlations among an arm’s random reward and its resource consumption amounts, thereby capturing a broad range of practical scenarios.

Second, we design and analyze the Successive Halving with Resource Rationing (SH-RR) algorithm. SH-RR eliminates sub-optimal arms in multiple phases, and rations adequate amounts of resources to each phase to ensure sufficient exploration in all phases. We prove that SH-RR achieves nearly best possible performance guarantees. We establish an upper bound on the SH-RR's failure probability, which is the probability of returning a sub-optimal arm at the end. Complementarily, we establish a lower bound on the failure probability of any algorithm, and the established lower bound nearly matches the upper bound for SH-RR. 
Both the upper and lower bounds involved the \textit{effective consumption measure} $f(b,\sigma, d)$ (to be defined in Section \ref{sec:main}), a novel ingredient in our analysis that quantifies how the stochasticity in resource consumption impacts the overall difficulty of the problem instance.

Thirdly, we prove lower bound results to show that our proposed algorithm SH-RR is nearly optimal. Fixing the mean value of consumption and rewards, we show that in certain cases, the stochastic consumption setting behaves similarly to the deterministic consumption setting, with the complexity terms differing only by a constant factor. However, we also identify scenarios—such as when the consumption follows a Bernoulli distribution—where the randomness in resource consumption makes the problem strictly harder. Numerical simulations in the case of two arms further support this observation. 
The fundamentally different lower bounds of the deterministic and stochastic consumption settings not only indicate the near-optimality of the algorithm SH-RR, but also shed light on further analysis on the impact of randomness in consumption.

Finally, we point out that a preliminary version of this paper appears in the 27th International Conference on Artificial Intelligence and Statistics (AISTATS 2024) \cite{li2024best}. The current manuscript provides significant additional contributions in two directions. First, we enhance the theoretical analysis of the SH-RR algorithm by unifying the treatment of failure probabilities under both deterministic and stochastic resource consumption settings. This is achieved by proposing a new complexity term $H_{2, \ell}(Q)$, which simultaneously generalizes the two complexity terms $H_{2, \ell}^{\text{det}}$, $H_{2, \ell}^{\text{sto}}$ on the deterministic and stochastic  consumption settings in \cite{li2024best}. 
Second, we strengthen two lower bound theorems in \cite{li2024best}. Firstly, we generalize the assumption of a lower bound and prove that it holds for any probability distribution on the resource consumption. Secondly, we relax the assumptions in both theorems, requiring a smaller minimum value of available budgets. These additions provide a broader theoretical foundation and demonstrate the versatility and robustness of the proposed approach.

The rest of the paper is organized as follows. In Section \ref{sec:lit-review}, we review topics and analysis correlated to the problem BAIwRC. In Section \ref{sec:model-problem_formulation}, we formulate the BAIwRC model and clarify the adopted assumptions for the analysis. In Section \ref{sec:SH-RR_Algorithm}, we present the proposed algorithm SH-RR with some comments. In Section \ref{sec:main}, we rigorously define the complexity term and prove theoretical guarantee of the algorithm SH-RR. In Section \ref{sec:lower-bound-to-Pr(failure)}, we prove lower bounds for the failure probability, indicating the near optimality of the SH-RR. Section \ref{sec:Numeric-Experiment} includes numeric experiments to validate the excellency of SH-RR.

\section{Literature Review}
\label{sec:lit-review}
The BAI problem has been actively studied in the past decades, prominently under the two settings of fixed confidence and fixed budget. In the fixed confidence setting, the agent aims to minimize the number of arm pulls, while constraining $\Pr(\text{fail BAI})$ to be at most an input confidence parameter. In the fixed budget setting, the agent aims to minimize $\Pr(\text{fail BAI})$, subject to an upper bound on the number of arm pulls. The fixed confidence setting is studied in \citep{EvenMM02,MannorT04,audibert2010best,GabillonGL12,KarninKS13,JamiesonMNB14,KaufmanCG16,GarivierK16}, and surveyed in \citep{JamiesonN14}. The fixed budget setting is studied in \citep{GabillonGL12,KarninKS13,KaufmanCG16,CarpentierL16}. The BAI problem is also studied in the anytime setting \citep{audibert2010best,jun_anytime_2016}, where a BAI strategy is required to recommend an arm after each arm pull. A related objective to BAI is the minimization of simple regret, which is the expected optimality gap of the identified arm, is studied \cite{BubeckMS09, ZhaoSSJ22}. Despite the volume of studies on pure exploration problems on multi-armed bandits, existing works focus on analyzing the total number of arm pulls. We provide a new perspective by considering the \emph{total cost of arm pulls}.

BAI problems with constraints have been studied in various works. \cite{WangWJ22} consider a BAI objective where the identified arm must satisfy a safety constraint. \cite{HouTZ2022} consider a BAI objective where the identified arm must have a variance below a pre-specified threshold. Different from these works that impose constraints on the identified arm, we impose constraints on the exploration process. \cite{SuiGBK15,SuiZBY2018} study BAI problems in the Gaussian bandit setting, with the constraints that each sampled arm must lie in a latent safety set. Our BAI formulation is different in that we impose cumulative resource consumption constraints across all the arm pulls, rather than constraints on each individual pull. In addition, \cite{SuiGBK15,SuiZBY2018}  focus on the comparing against the best arm within a certain reachable arm subset, different from our objective of identifying the best arm out of all arms.

Our work is thematically related to the Bandits with Knapsack problem (BwK), where the agent aims to maximize the total reward instead of identifying the best arm under resource constraints. The BwK problem is proposed in \cite{BadanidiyuruKS18}, and an array of different BwK models have been studied \citep{AgrawalD14,AgrawalD16,SankararamanS18}. In the presence of resource constraints, achieving the optimum under the BwK objective does not lead to BAI. For example, in a BwK instance with single resource constraint, it is optimal to pull an arm with the highest mean reward per unit resource consumption, which is generally not an arm with the highest mean reward, when resource consumption amounts differ across arms. 
\cite{li2023optimal} proposes a BAI problem with BwK setting and shares some settings with us, assuming multiple resources with random consumptions, a finite arm set. But their task is to identify the index set ${\cal X}^*$ of all optimal arms in an LP relaxation to a BwK problem. ${\cal X}^*$ depends on both the mean reward and mean consumption of the arms. The difference of the target marks a significant departure from our methodologies and results in the field.


BAI has also been studied in the Operations Research domain. One of the repeatedly studied topics is Ranking and Selection (R\&S). Similar to BAI, R\&S is commonly divided into two paradigms: the fixed-confidence and fixed-budget settings. In the fixed budget setting, existing R\&S work mainly focus on single budget type and deterministic consumption. For example, \cite{chen2000simulation} design algorithm OCBA to address an R\&S problem, aiming to achieve asymptotically optimal allocation of budget. \cite{glynn2004large} also follow the asymptotically optimal manner. Based on the algorithm OCBA, \cite{cao2025budget} propose a Budget-adaptive algorithm, without requiring the total budget $T$ to be sufficiently large. Still working on the budget allocation, there is also R\&S research aiming at minimizing posterior probability of mistaken identification \cite{qin2025dual}, or pursuing asymptotically optimal procedures without parametric assumptions \cite{feng2022robust}. In R\&S literature, the optimal budget allocation rule is usually correlated to the variance of each arm and optimality is frequently depicted in the asymptotic regime. In contrast, our research adopts the subgaussian assumption, and we conduct analysis with finite budget. In addition, there are also other BAI correlated work in the Operations Research domain. For example, \cite{ahn2025feature} consider the impact of parameter mis-specification under the fixed budget setting, and provide asymptotic analysis. And \cite{ni2017efficient} research on the R\&S within parallel computing scenario.

Lastly, our work is related to the research on cost-aware Bayesian optimization (BO). In this area, an arm might correspond to a hyper-parameter or a combination of hyper-parameters. A widely adopted idea in the BO community is to set up different acquisition functions to guide the selection of sampling points. One of the most popular choices is Expected Improvement (EI) \citep{frazier2018tutorial}, which was designed without considering the heterogeneous resource consumptions. To make EI cost-aware, which corresponds to our setting of a single resource, a popular existing approach is to divide EI by an approximated cost function $c(x)$ \citep{snoek2012practical, poloczek2017multi, swersky2013multi}, leading to the method of Expected Improvement per unit (EIpu). However, \cite{lee2020cost} shows this division may encourage the algorithm to explore domains with low consumption, leading to a worse performance when the optimal point consumes more resources. Then \cite{lee2020cost} designs the Cost Apportioned BO (CArBo) algorithm, whose acquisition function gradually evolves from EIpu to EI. For better performance, \cite{guinet2020pareto} develops Contextual EI to achieve Pareto Efficiency. \cite{abdolshah2019cost} discusses Pareto Front when there are multiple objective functions. \cite{luong2021adaptive} considers EI and EIpu as two arms in a multi-arm bandits problem, using Thompson Sampling to determine which acquisition function is suitable in each round. These works focus on minimizing the total cost, different from our resource constrained setting. In addition, we allow the resource consumption model to be unknown, random and heterogeneous among arms. In comparison, existing works either assume one unit of resource consumed per unit pulled \citep{jamieson2016non,li2020system,bohdal2022pasha,zappella2021resource}, or assume heterogeneity (and deterministic) resource consumption but with the resource consumption (or a good estimate of it) of each arm known \cite{snoek2012practical,ivkin2021cost,lee2020cost}. 

There are also other BO research focusing on the cost-aware setting, with many leveraging the concept of multi-fidelity. \cite{pmlr-v70-kandasamy17a, pmlr-v115-wu20a} assume the targeted objective function $f(x)$ is a slice of function $g(x,s_0)$, i.e. $f(x)=g(x,s_0)$ for some fidelity level $s_0$. The agent is able to evaluate $g(x,s)$ for general $s$ with a lower cost, enabling cheaper approximations of $f$. \cite{foumani2023multi} incorporate categorical variables into the input, by considering there exists a latent manifold. In contrast, \cite{forrester2007multi} assumes the targeted function follows a gaussian process $z_e(x)$, which can be formulated as $z_e(x)=\rho z_c(x)+z_d(x)$. The agent can sample points from Gaussian Process $z_c(x)$ with a cheaper price. All these works aim to exploit multi-fidelity evaluations to extract additional information about the objective function. On the other hand, \cite{belakaria2023bayesian} focuses more on Budget planning, by using Gaussian Process to approximate both the objective and cost functions. Notably, all of the aforementioned studies rely on the correlations across input and some prior knowledge of the cost, which deviates from our main focus.

\textbf{Notation.} For an integer $K>0$, denote $[K] = \{1, \ldots, K\}$. For $d\in [0, 1]$, we denote $\text{Bern}(d)$ as the Bernoulli distribution with mean $d$, and $\mathcal{N}(\mu, \sigma^2)$ as the Gaussian distribution with mean reward $\mu$ and variance $\sigma^2$.

\section{Problem Formulation}
\label{sec:model-problem_formulation}
An instance of Best Arm Identification with Resource Constraints (BAIwRC) is specified by the triple $Q = ([K], C ,\nu= \{\nu_k\}_{k\in [K]})$. The set $[K]$ represents the collection of $K$ arms. There are $L$ types of different resources. The quantity $C = (C_\ell)^L_{\ell=1}\in \mathbb{R}_{>0}^L$ is a vector, and $C_\ell$ is the amount of type $\ell$ resource units available to the agent. For each arm $k\in [K]$, $\nu_k$ is the probability distribution on the $(L+1)$-variate outcome $(R_k; D_{1,k}, \ldots, D_{L, k})$, which is received by the agent when s/he pulls arm $k$ once. By pulling arm $k$ once, the agent earns a random amount $R_k$ of reward, and consumes a random amount $D_{\ell, k}$ of the type-$\ell$ resource, for each $\ell\in\{1, \ldots, L\}$. We allow $R_k, D_{1, k}, \ldots, D_{L, k}$ to be arbitrarily correlated. Denote the mean reward $\mathbb{E}[R_k] = r_k$ for each $k\in [K]$, and denote the mean consumption $\mathbb{E}[D_{\ell, k}] = d_{\ell, k}$ for each $\ell\in [L], k\in [K]$. We assume that $r_k\in[0, 1], \forall k$, and $R_k$ is a 1-sub-Gaussian random variable. In addition, we assume that the marginal probability distribution of 
$\{D_{\ell, k}\}_{k\in [K]}$ belongs to the class of probability distributions  $\mathscr{C}_{b_{\ell}, \sigma^2_{\ell}}=\{\nu:\text{supp}(\nu)\subset [0, 1],\Pr_{D\sim \nu}(|D-\mathbb{E}D|\leq b_{\ell})=1,\text{Var}(\nu)\leq \sigma^2_{\ell}\}$. To make the definition meaningful, we require $\sigma^2_{\ell}\leq b_{\ell}^2$.

 Similar to existing works on BAI, we assume that there is a unique arm with the highest mean reward, and without loss of generality we assume that $r_1 > r_2 \geq \ldots, \geq r_K$. We call arm 1 the optimal arm. We emphasize that the mean consumption amounts $\{d_{\ell, k}\}^K_{k=1}$ on any resource $\ell$ need not be ordered in the same way as the mean rewards. We assume that $d_{\ell, k} > 0$ for all $k\in [K], \ell\in [L]$. Crucially, the quantities $r_k, d_{\ell, k}, \nu_k$ for any $k, \ell$ are not known to the agent. 

\textbf{Dynamics. }The agent pulls arms sequentially in time steps $t = 1, 2, \ldots$, according to a non-anticipatory policy $\pi$. We denote the arm pulled at time $t$ as $A(t)\in [K]$, and the corresponding outcome as $O(t) = (R(t); D_1(t), \ldots, D_L(t))\sim \nu_{A(t)}$. A non-anticipatory policy $\pi$ is represented by the sequence $\{\pi_t\}^{\infty}_{t=1}$, where $\pi_t$ is a function that outputs the arm $A(t)$ by inputting the information collected in time $1, \ldots, t-1$. More precisely, we have $A(t) = \pi_t(H(t-1))$, where $H(t-1) = \{O(s)\}^{t-1}_{s=1}$. The agent stops pulling arms at the end of time step $\tau$, where $\tau$ is a finite stopping time\footnote{For any $t$, the event $\{\tau = t\}$ is $\sigma(H(t))$-measurable, and $\Pr(\tau = \infty) = 0$} with respect to the filtration $\{\sigma(H(t))\}^\infty_{t=1}$. Upon stopping, the agent identifies arm $\psi\in [K] $ to be the best arm, using the information $H(\tau)$. Altogether, the agent's strategy is represented as $(\pi, \tau, \psi)$.

\textbf{Objective.} The agent aims to choose a strategy $(\pi, \tau, \psi)$ to maximize $\Pr(\psi=1)$, the probability of BAI, subject to the resource constraint that $\sum^\tau_{t=1} D_\ell(t) \leq C_\ell$ holds for all $\ell\in [L]$ with certainty. We allow $\{D_{\ell, k}\}_{\ell, k}$ to be arbitrary random variables bounded between 0 and 1, including a special case where $\Pr(D_{\ell, k} = d_{\ell, k})=1$ for all $\ell \in [L], k\in [K]$, which means all the resource consumption amounts are deterministic. In the special case when $L=1$ and $\Pr(D_{1, k}=1 \text{ for all $k\in [K]$}) = 1$, the deterministic consumption setting specializes to the fixed budget BAI problem. 


We focus on bounding the failure probability $\Pr(\text{fail BAI}) = \Pr(\psi\neq 1)$ in terms of the underlying parameters in $Q$. The forthcoming bounds are in the form of $\exp(-\gamma(Q))$, where $\gamma(Q) > 0$ can be understood as a complexity term that encodes the difficulty of the underlying BAIwRC instance $Q$. 
To illustrate, in the case of $L=1$, we aim to bound $\Pr(\psi\neq 1)$ in terms of $\exp(-C_1/ H)$, where $H > 0 $ depends on the latent mean rewards and resource consumption amounts. In the subsequent sections, we establish upper bounds on $\Pr(\psi\neq 1)$ for our proposed strategy SH-RR, as well as lower bounds on $\Pr(\psi\neq 1)$ for any feasible strategy. We demonstrate that the complexity term $\gamma(Q)$ crucially on if $Q$ has deterministic or stochastic consumption.

\section{Algorithm}
\label{sec:SH-RR_Algorithm}
Our proposed algorithm, dubbed Sequential Halving with Resource Rationing (SH-RR), is displayed in  Algorithm \ref{alg:Sequential-Halving}. SH-RR iterates in phases $q \in \{0, \ldots, \lceil \log_2 K\rceil\}$. Phase $q$ starts with a \emph{surviving arm set} $\tilde{S}^{(q)}\subseteq [K]$. After the arm pulling in phase $q$, a subset of arms in $\tilde{S}^{(q)}$ is eliminated, giving rise to  $\tilde{S}^{(q+1)}$. After the final phase, the surviving arm set  $\tilde{S}^{(\lceil \log_2 K\rceil)}$ is a singleton set, and its only constituent arm is recommended as the best arm. 
\begin{algorithm}[tb]
   \caption{Sequential Halving with Resource Rationing (SH-RR)}
   \label{alg:Sequential-Halving}
    \begin{algorithmic}[1]
        \State {\bfseries Input:} Total budget $C$, arm set $[K]$.
        \State {\bfseries Initialize} $\tilde{S}^{(0)} = [K]$, $t=1$.
        \State {\bfseries Initialize} $\textsf{Ration}^{(0)}_\ell = \frac{C_\ell}{\lceil\log_2 K\rceil}$ for each $\ell\in [L]$.
        \For{$q=0$ {\bfseries to} $\lceil\log_2 K\rceil-1$}
            \State {\bfseries Initialize} $I^{(q)}_\ell = 0$ $\forall\ell\in [L]$, $H^{(q)} =J^{(q)} = \emptyset$
            \While{$I^{(q)}_\ell \leq \textsf{Ration}^{(q)}_\ell-1$ for all $\ell\in [L]$}\label{alg:SH-RR-while}
                \State Identify the arm index $a(t)\in \{1, \ldots, |\tilde{S}^{(q)}|\}$ such that $a(t) \equiv t ~\text{mod}~|\tilde{S}^{(q)}|.$\label{alg:rr}
                \State Pull arm $A(t) = k^{(q)}_{a(t)}\in \tilde{S}^{(q)}$. 
                \State Observe the outcome $O(t)\sim \nu_{A(t)}$. 
                \State Update $I^{(q)}_\ell\leftarrow I^{(q)}_\ell + D_\ell(t)$ for each $\ell\in [L]$.
                \State Update $H^{(q)}\leftarrow H^{(q)}\cup \{(A(t),O(t))\}$.
                \State Update $J^{(q)}\leftarrow J^{(q)}\cup\{t\}$.
                \State Update $t\leftarrow t+1$.
           \EndWhile
       \State Use $\cup^q_{m=0} H^{(m)}$ to compute empirical means $\{\hat{r}^{(q)}_k\}_{k\in \tilde{S}^{(q)}}$, see (\ref{eq:emp_mean}).\label{alg:emp_mean}
       \State Set $\tilde{S}^{(q+1)}$ be the set of top $\lceil |\tilde{S}^{(q)}| / 2\rceil$ arms with highest empirical mean.\label{alg:elim}
       \State Set $\textsf{Ration}^{(q+1)}_\ell = \frac{C_\ell}{\lceil\log_2 K\rceil} + ( \textsf{Ration}^{(q)}_\ell-I^{(q)}_\ell)$. \label{alg:ration}
   \EndFor
   \State Output the arm in $\tilde{S}^{(\lceil\log_2 K\rceil)}$.
    \end{algorithmic}
\end{algorithm}
We denote $\tilde{S}^{(q)} = \{k^{(q)}_1, \ldots, k^{(q)}_{|\tilde{S}^{(q)}|}\}$. In each phase $q$, the agent pulls arms in $\tilde{S}^{(q)}$ in a round-robin fashion. At a time step $t$, the agent first identifies (see Line \ref{alg:rr}) the arm index $a(t)\in \{1, \ldots, |\tilde{S}^{(q)}|\}$ and pulls the arm $k^{(q)}_{a(t)}\in \tilde{S}^{(q)}$. The round robin schedule ensures that the arms in $\tilde{S}^{(q)}$ are uniformly explored. SH-RR keeps track of the amount of type-$\ell$ resource consumption via $I^{(q)}_\ell$. The \textbf{while} condition (see Line \ref{alg:SH-RR-while}) ensures that at the end of phase $q$, the total amount $I^{(q)}_\ell$ of type-$\ell$ resource consumption during phase $q$ lies in $ (\textsf{Ration}^{(q)}_{\ell}-1, \textsf{Ration}^{(q)}_{\ell}]$ for each $\ell\in [L]$. The lower bound ensures sufficient exploration on $\tilde{S}^{(q)}$, while the upper bound ensures the feasibility of SH-RR to the resource constraints, as formalized in the following claim:
\begin{claim}\label{claim:feasibility}
With certainty, SH-RR consumes at most $C_\ell$ units of resource $\ell$, for each $\ell\in [L]$. 
\end{claim}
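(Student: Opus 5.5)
The plan is to track, for each resource $\ell$, the unused ration carried from phase to phase, and show by induction that it stays nonnegative. Write $Q^\ast=\lceil\log_2 K\rceil$ for the number of phases and $U^{(q)}_\ell = \textsf{Ration}^{(q)}_\ell - I^{(q)}_\ell$ for the leftover at the end of phase $q$. The first ingredient is a per-phase invariant: $I^{(q)}_\ell \le \textsf{Ration}^{(q)}_\ell$ for every $\ell$ and $q$.

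\textbf{Per-phase invariant.} The \textbf{while} test on Line \ref{alg:SH-RR-while} is evaluated before each pull. A pull in phase $q$ therefore happens only if $I^{(q)}_\ell \le \textsf{Ration}^{(q)}_\ell - 1$ for all $\ell$ at that moment. Since every consumption satisfies $D_\ell(t)\in[0,1]$ with certainty (the support of each $\nu\in\mathscr{C}_{b_\ell,\sigma^2_\ell}$ lies in $[0,1]$), the updated value is at most $\textsf{Ration}^{(q)}_\ell - 1 + 1 = \textsf{Ration}^{(q)}_\ell$. If no pull occurs, $I^{(q)}_\ell=0$, and we still need $\textsf{Ration}^{(q)}_\ell\ge 0$. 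That follows from the induction below.

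\textbf{Induction on phases.} The ration update on Line \ref{alg:ration} gives $\textsf{Ration}^{(q+1)}_\ell = C_\ell/Q^\ast + U^{(q)}_\ell$. The induction hypothesis is that $\textsf{Ration}^{(q)}_\ell \ge C_\ell/Q^\ast > 0$. The invariant then gives $U^{(q)}_\ell\ge 0$, so $\textsf{Ration}^{(q+1)}_\ell \ge C_\ell/Q^\ast$, which closes the induction. The base case is $\textsf{Ration}^{(0)}_\ell = C_\ell/Q^\ast$.

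\textbf{Telescoping.} Unrolling the recursion gives, for every $q\le Q^\ast-1$,
\[
\sum_{m=0}^{q} I^{(m)}_\ell = \frac{(q+1)C_\ell}{Q^\ast} - U^{(q)}_\ell .
\]
Take $q=Q^\ast-1$ and use $U^{(Q^\ast-1)}_\ell\ge 0$. The total consumption $\sum_{t=1}^{\tau} D_\ell(t)=\sum_{m=0}^{Q^\ast-1} I^{(m)}_\ell$ is then at most $C_\ell$ for every $\ell$, with certainty, because no step relies on anything probabilistic.

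\textbf{Main obstacle and degenerate cases.} The main step is pinning down the per-phase invariant, which depends on the boundedness $D_\ell(t)\le 1$. The degenerate case $K=1$ needs a short separate remark: there $Q^\ast=0$, no pulls are made, and the consumption is $0\le C_\ell$. Finiteness of each phase is not needed for the bound, since the cumulative consumption is at most $C_\ell$ at every time. It is needed only to make $\tau$ well defined, and that would require positive mean consumption $d_{\ell,k}>0$; I would not address it here.
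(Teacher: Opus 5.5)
Your proof is correct and follows the paper's argument. The paper also telescopes the ration update to obtain total consumption $C_\ell + \textsf{Ration}^{(0)}_\ell - \textsf{Ration}^{(\lceil\log_2 K\rceil)}_\ell$. It then uses the while-loop invariant $I^{(q)}_\ell \le \textsf{Ration}^{(q)}_\ell$ to show the rations never fall below $C_\ell/\lceil\log_2 K\rceil$, which is exactly your $U^{(Q^\ast-1)}_\ell \ge 0$. You additionally make explicit the reliance on $D_\ell(t)\le 1$ and handle the $K=1$ edge case, both of which the paper leaves implicit.
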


Proof of Claim \ref{claim:feasibility} is in Appendix \ref{app:claim_feasibility}. 
Crucially, SH-RR maintains the observation history $H^{(q)}$ that is used to determined the arms to be eliminated from $\tilde{S}^{(q)}$. 
After exiting the \textbf{while} loop, the agent computes (in Line \ref{alg:emp_mean}) the empirical mean
\begin{align}\label{eq:emp_mean}
\hat{r}^{(q)}_k = \frac{\sum^q_{m=0}\sum_{t\in J^{(m)}} R(t) \cdot \mathbbm{1}(A(t) = k)}{\max\{\sum^q_{m=0}\sum_{t\in J^{(m)}} \mathbbm{1}(A(t) = k), 1\}} 
\end{align}
for each $k\in \tilde{S}^{(q)}$. The surviving arm set $\tilde{S}^{(q+1)}$ in the next phase of phase $q+1$ consists of the $\lceil |\tilde{S}^{(q)}| / 2\rceil$ arms in $\tilde{S}^{(q)}$ with the highest empirical means, see Line \ref{alg:elim}. The amounts of resources rationed for phase $q+1$ is in Line \ref{alg:ration}.

\section{Performance Guarantees of SH-RR}
\label{sec:main}
We start with some necessary notation. For each $k\in \{2, \ldots, K\}$, we denote $\Delta_k = r_1 - r_k\in [0, 1]$. We also denote $\Delta_1 = r_1 - r_2 = \Delta_2$. Consequently, we have $\Delta_1 =  \Delta_2 \leq \Delta_3\leq \ldots \leq \Delta_K$. 
For each resource type $\ell\in [L]$, we denote $d_{\ell, (1)},d_{\ell, (2)}, \ldots, d_{\ell, (K)}$ as a permutation of $d_{\ell, 1}, d_{\ell, 2},\ldots, d_{\ell, K}$ such that $d_{\ell, (1)}\geq  d_{\ell, (2)} \geq \ldots \geq d_{\ell, (K)}$. Based on the above notation, we define
\begin{align}
    \label{eq:def_f}
    f(b,\sigma, d)=\frac{4b}{\log (\frac{4b^2}{\sigma^2}+1)}+d,
\end{align}
and
\begin{align}
    H_{2, \ell}(Q)= & \max_{2\leq k\leq K}\frac{\sum_{a=1}^k f(b_{\ell},\sigma_{\ell},  d_{\ell,(a)})}{\Delta_k^2}, \label{eq:sto_H}\\
    H_{2, \ell}^{\text{det}}(Q)= & \max_{2\leq k\leq K}\frac{\sum_{a=1}^k d_{\ell,(a)}}{\Delta_k^2}, \label{eq:det_H}
\end{align}
Definition (\ref{eq:sto_H}) works for any 
$\{D_{\ell, k}\}_{k\in [K]}$ 
whose distribution is in class $\mathscr{C}_{b_{\ell}, \sigma^2_{\ell}}=\{\nu:\text{supp}(\nu)\subset [0, 1],\Pr_{D\sim \nu}(|D-\mathbb{E}D|\leq b_{\ell})=1,\text{Var}(\nu)\leq \sigma^2_{\ell}\}$. In the case that the consumptions are deterministic, i.e $\sigma_{\ell}=0$ holds for all $\ell\in[L]$, we have $f(b,0,d)=d$ and $H_{2, \ell}(Q)= H_{2, \ell}^{\text{det}}(Q)$. The formulation is consistent with the fixed budget BAI setting \cite{audibert2010best,KarninKS13}. 

Our first main result is an upper bound on $\Pr(\text{fail BAI}) = \Pr(\psi\neq 1)$ for our proposed SH-RR.
\begin{theorem}
    \label{theorem:upper-bound-of-failure}
    Consider a BAIwRC instance $Q$. The SH-RR algorithm has BAI failure probability $\Pr(\psi \neq 1)$ at most
    \begin{align}\label{eq:upper-bound-of-failure}
    2LK (\log_2 K) \exp\left(-\frac{1}{4\lceil \log_2 K\rceil}\cdot\gamma^{}(Q) \right),
    \end{align}
    where  $\gamma(Q) = \min_{\ell\in [L]}\{C_\ell / H_{2, \ell}(Q)\}$, and $H_{2, \ell}(Q)$ is defined in (\ref{eq:sto_H}).
\end{theorem}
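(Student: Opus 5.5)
Our plan follows the Sequential Halving analysis of \cite{KarninKS13}, adapted so that the number of pulls per arm in each phase is random and controlled by the resource rations. We fix a phase $q$ and condition on $\tilde{S}^{(q)}$ with $1\in\tilde{S}^{(q)}$. We bound the probability that arm $1$ is eliminated in phase $q$, and then take a union bound over the $\lceil\log_2 K\rceil$ phases. Since eliminations only happen at Line \ref{alg:elim}, $\Pr(\psi\neq 1)\le \sum_q \Pr(1\notin \tilde S^{(q+1)} \mid 1\in\tilde S^{(q)})$.

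\textbf{Step 1 (every phase gets a full ration's worth of resource).} By the rationing rule in Line \ref{alg:ration}, the leftover $\textsf{Ration}^{(q)}_\ell - I^{(q)}_\ell$ is nonnegative, so $\textsf{Ration}^{(q)}_\ell\ge C_\ell/\lceil\log_2K\rceil$. The while loop exits only when some $\ell$ satisfies $I^{(q)}_\ell > \textsf{Ration}^{(q)}_\ell-1$. Hence the phase ends only once some resource has consumed at least about $C_\ell/\lceil\log_2 K\rceil - 1$.

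\textbf{Step 2 (lower bound on the number of pulls).} Let $n_q$ be the number of round-robin rounds in phase $q$ (each surviving arm is pulled $n_q$ or $n_q+1$ times). A round on $\tilde S^{(q)}$ has mean type-$\ell$ consumption $\sum_{k\in\tilde S^{(q)}} d_{\ell,k}\le \sum_{a=1}^{|\tilde S^{(q)}|} d_{\ell,(a)}$. The consumptions are bounded deviations with range $b_\ell$ and variance at most $\sigma_\ell^2$. Bernstein/Bennett's inequality therefore shows that $N$ rounds consume at most $N\sum_{a} f(b_\ell,\sigma_\ell,d_{\ell,(a)})$ except with probability $\exp(-\Omega(N\cdot\text{stuff}))$. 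The term $4b/\log(4b^2/\sigma^2+1)$ in $f$ is exactly what the Bennett function produces when one inverts $h(u)=(1+u)\log(1+u)-u$ at the scale $b/\sigma^2$. We would derive it by setting the deviation equal to $N\cdot 4b/\log(4b^2/\sigma^2+1)$ per arm and checking that the Bennett exponent then dominates $N$ times a constant.

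Combining Step 2 with Step 1 gives $n_q \gtrsim \frac{C_\ell}{\lceil\log_2K\rceil\sum_{a\le |\tilde S^{(q)}|}f(b_\ell,\sigma_\ell,d_{\ell,(a)})}$ for the binding resource $\ell$, with high probability. Minimizing over $\ell$ covers whichever resource stops the phase. We expect Step 2 to be the main obstacle. Making the Bennett inversion produce exactly the constant $4$ in $f$ is delicate, and so is obtaining a failure probability of the same exponential order as the reward concentration term. That control is what keeps the constant $1/4$ and the prefactor $2LK\log_2K$ intact, with one union bound over $\ell$ and one over arms.

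\textbf{Step 3 (elimination analysis).} Given the lower bound on $n_q$, we follow \cite{KarninKS13}. Arm $1$ is eliminated only if at least half of $\tilde S^{(q)}$ has empirical mean above $\hat r_1^{(q)}$. This forces at least a third of the arms outside the top $|\tilde S^{(q)}|/4$ by gap to beat arm $1$. Since $\hat r$ pools all phases, each surviving arm has at least $n_q$ samples. A sub-Gaussian bound gives $\Pr(\hat r_k\ge\hat r_1)\le \exp(-n_q\Delta_k^2/4)$, and Markov's inequality on the count yields a bound of order $3\exp(-n_q\Delta_{k'}^2/4)$ with $k'\approx|\tilde S^{(q)}|/4$. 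Taking $k=|\tilde S^{(q)}|$ and $k'$ so that $\sum_{a\le k}f/\Delta_{k'}^2\lesssim H_{2,\ell}(Q)$ (after adjusting indices as in the halving argument), the exponent is at least $C_\ell/(4\lceil\log_2K\rceil H_{2,\ell}(Q))\ge\gamma(Q)/(4\lceil\log_2 K\rceil)$. Summing the reward failure and the consumption failure over phases, arms and resources gives \eqref{eq:upper-bound-of-failure}.
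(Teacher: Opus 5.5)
\textbf{Steps 1--2 match the paper.} Your Steps 1--2 follow the paper's argument: the invariant $\textsf{Ration}^{(q)}_\ell\ge C_\ell/\lceil\log_2 K\rceil$, then Bennett at deviation $N\sum_a 4b_\ell/\log(4b_\ell^2/\sigma_\ell^2+1)$. Lemma~\ref{lemma:bound_count} carries out exactly the inversion you anticipate, using $h(u)\ge\tfrac{u}{2}\log(1+u)$ and $\log\bigl(\tfrac{x}{\log(x+1)}+1\bigr)\ge\tfrac12\log(x+1)$ for $x=4b^2/\sigma^2\ge 4$. It gives $\Pr(\bar T^{(q)}<\bar\beta^{(q)})\le L e^{-\bar\beta^{(q)}}$.

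\textbf{The gap is in Step 3.} The round-by-round argument of \cite{KarninKS13} conditions on the surviving set, and it is valid only because each round uses fresh samples. SH-RR instead pools $\cup_{m\le q}H^{(m)}$, and those earlier rewards are exactly what determined $\tilde S^{(q)}$. Conditionally on $\tilde S^{(q)}$ (or on $\{1\in\tilde S^{(q)}\}$) they are no longer i.i.d.\ draws from $\nu_k$: a weak arm that survived did so because its early rewards were high. So the bound $\Pr(\hat r_k\ge\hat r_1\mid\tilde S^{(q)})\le e^{-n_q\Delta_k^2/4}$ is unjustified and can fail. Conditioning is harmless for Step 2, since phase-$q$ consumptions are fresh given the past, but not for pooled rewards.

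There is a second, independent problem. $n_q$ is random, and because rewards may be arbitrarily correlated with consumptions, it depends on the rewards themselves. A fixed-$n$ sub-Gaussian bound therefore does not apply even unconditionally.

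The paper avoids both problems. It never conditions in the reward term and works with per-arm i.i.d.\ reward sequences. Pigeonhole shows that $B^{(q)}$ forces some arm of global index $k\ge\lceil K/2^{q+1}\rceil$ to satisfy $\hat r_{k,N}>\hat r_{1,N}$ for some $N\ge\bar\beta^{(q)}$. This event is controlled uniformly in $N$ by the maximal inequality for the supermartingale $\exp(\lambda\sum_{n\le N}W(n)-N\lambda\Delta_k)$. The union bound over all such $k\in[K]$, survivors or not, is where the prefactor $K$ comes from. If you want to keep your Markov-on-count idea, you must run it unconditionally in the same way. The per-phase prefactor then becomes of order $K/|\tilde S^{(q)}|$ instead of $3$, which is harmless here.

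\textbf{The constant.} Even after this repair, your route does not give the stated constant. The quarter index $k'=|\tilde S^{(q)}|/4$ costs a factor $4$, via $\sum_{a\le 4k'}f\le 4\sum_{a\le k'}f$. Your tail $e^{-n\Delta^2/4}$ costs another factor $4$, so you obtain $\gamma(Q)/(16\lceil\log_2K\rceil)$. Your step asserting $\sum_{a\le k}f/\Delta_{k'}^2\lesssim H_{2,\ell}(Q)$ hides exactly these factors.

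The paper uses the half index, losing only a factor $2$, together with the tail $e^{-\bar\beta^{(q)}\Delta_k^2/2}$. It gets that tail by treating $W(n)=R_k(n)-R_1(n)+\Delta_k$ as 1-sub-Gaussian. For independent 1-sub-Gaussian rewards, however, $W(n)$ has variance proxy $2$. So your $/4$ is the correct tail, and the paper's route, carried out carefully, yields $1/8$ rather than the $1/4$ in the statement.
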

In the case that the mean consumptions are deterministic, we can further remove the factor $L$ outside the exponential function. Please check appendix \ref{appendix:theorem_upper} for the Theorem \ref{theorem:upper-bound-of-failure-det} and details. We present a sketch proof of Theorem \ref{theorem:upper-bound-of-failure} at the end of this section.

The intuition behind Theorem \ref{theorem:upper-bound-of-failure} consists of 2 observations. The first observation is its connection with the result in \cite{KarninKS13}, which takes $L=1$, $d_{1,1}=d_{1,2}=\cdots=d_{1,K}$ and assume the deterministic consumption. Consistent with the above discussion,  our complexity term simplifies, i.e. $H_{2,1}(Q)=H_{2,1}^{\text{det}}(Q)$, and the upper bound in Equation \ref{eq:upper-bound-of-failure} closely resembles Theorem 4.1 in \cite{KarninKS13}. The only remaining differences are the constant inside the exponential term and the presence of a factor $K$ outside the exponential. These variations arise because, unlike \cite{KarninKS13}, our analysis retains the history of arm pulls across phases, which leads to a slightly different accounting of sample complexity. 

The second observation is performance guarantee of SH-RR improves when the complexity term $\gamma(Q)$ increases. We provide intuitions in the special case of $L = 1$, so $\ell=1$ always. In this case, $\gamma(Q)=\frac{C_1}{H_{2,1}}$. The upper bound (\ref{eq:upper-bound-of-failure}) decreases when $C_1$ increases, since more resource units allows more experimentation, hence a lower failure probability. The upper bound (\ref{eq:upper-bound-of-failure}) increases when $H_{2, 1}(Q)$ increases. Indeed, when $\Delta_k$ decreases, more arm pulls are needed to distinguish between arms $1,k$, leading to a higher $\Pr(\text{fail BAI})$. In addition, $H_{2,1}(Q)$ and $H_{2,1}^{\text{det}}(Q)$ both grow linearly with $\{d_{1, (a)}\}_{a=1}^K$. When $\{d_{1, (a)}\}_{a=1}^K$ increase, the agent consumes more resource units when pulling the arm with the $k$-th highest consumption, which leads to less arm pulls under a fixed budget. Beyond the linear part with $d_{1,(a)}$, the upper bound in (\ref{eq:upper-bound-of-failure}) is also correlated to the $b_{1}, \sigma_1$. Crucially, the expected consumption $d_{1, (j)}$ in $H^{\text{det}}_{2, 1}(Q)$ is replaced with \emph{effective consumption} $f(b_{1}, \sigma_{1}, d_{\ell, (j)})$ in $H_{2, 1}(Q)$. The non-linearity of $f$ regarding $b_{1}, \sigma_{1}$ encapsulates the impact of randomness in resource consumption. The function $f$ is increasing in $\sigma_{1}$. If the variance of consumption $\sigma^2_1$ decreases, $f(b_1, \sigma_1, d_{1,(a)})$ also decreases, which corresponds to a higher probability of more pulling times of an arm, under a fixed budget.  

To illustrate the comparison between $f(b, \sigma, d)$ and $d$, we provide three more examples. \textbf{(1)} If $D\sim \text{Unif(0, 2d)}$, then $\mathbb{E}D=d$. Here $b=d$ and $\sigma^2=\text{Var}(D)=\frac{d^2}{3}$, we get $f(b,\sigma, d)=\frac{4d}{\log (\frac{4d^2}{d^2/3}+1)}+d=\Theta(d)$. \textbf{(2)} If $\Pr(D=2d)=\frac{1}{2}, \Pr(D=0)=\frac{1}{2}$, then $\mathbb{E}D=d$. Here $b=d$ and $\sigma^2=\text{Var}(D)=\frac{d^2}{2}$, we get $f(b,\sigma, d)=\frac{4d}{\log (\frac{4d^2}{d^2/2}+1)}+d=\Theta(d)$. More generally,  $b=O(d)$ implies $f(b, \sigma, d)=\Theta(d)$, by the assumption $\frac{b^2}{\sigma^2}\geq 1$. But things will change if $b=O(d)$ doesn't hold. See the third example. \textbf{(3)} If $D\sim \text{Bern}(d)$, then $\mathbb{E}D=d$, we can take $b=1$ and $\sigma^2=\text{Var}(D)=d(1-d)$, we get $f(b,\sigma, d)=\frac{4}{\log (\frac{4}{d(1-d)}+1)}+d=O(\frac{4}{\log (\frac{4}{d}+1)})$. In this case, $\lim_{d\rightarrow 0}\frac{f(b,\sigma, d)}{d}=+\infty$, suggesting that $H_{2, 1}(Q)$ can be strictly larger than $H_{2, 1}^{\text{det}}(Q)$ if $d$ is sufficiently small with Bernoulli resource consumption. 

We conduct a simulation on two instances $Q^{\text{det}}, Q^{\text{sto}}$ to further present the fundamental difference between stochastic and deterministic setting. Instances $Q^{\text{det}}, Q^{\text{sto}}$ both have with $K=2, L = 1, C=2$. Instances $Q^{\text{det}}, Q^{\text{sto}}$ share the same Bernoulli rewards with means $r_1 = 0.5, r_2 = 0.4$ and the same mean resource consumption $d_1 = d_2 = d$, where $d$ varies. In $Q^{\text{det}}$, an arm pull consumes $d$ units with certainty, while in $Q^{\text{sto}}$ it consumes Bern($d$) per pull. We plot $\log(\Pr(\psi \neq 1))$ under SH-RR against the varying $d$ in Figure \ref{fig:compare}, while other model parameters are fixed.
\begin{figure}
    \centering
    \begin{subfigure}
      \centering
      \includegraphics[width=.48\linewidth]{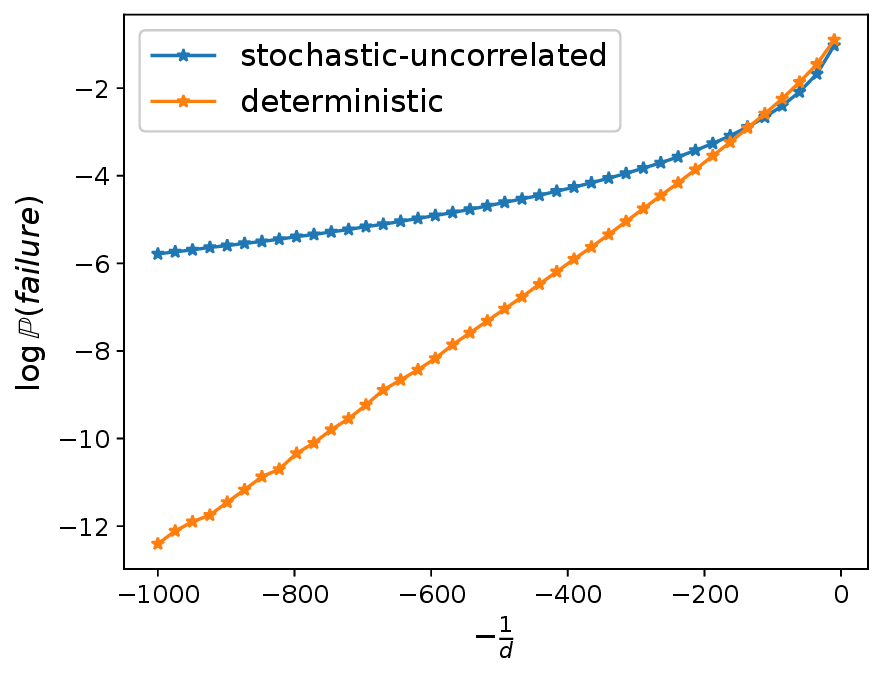}
    \end{subfigure}%
    \begin{subfigure}
      \centering
      \includegraphics[width=.48\linewidth]{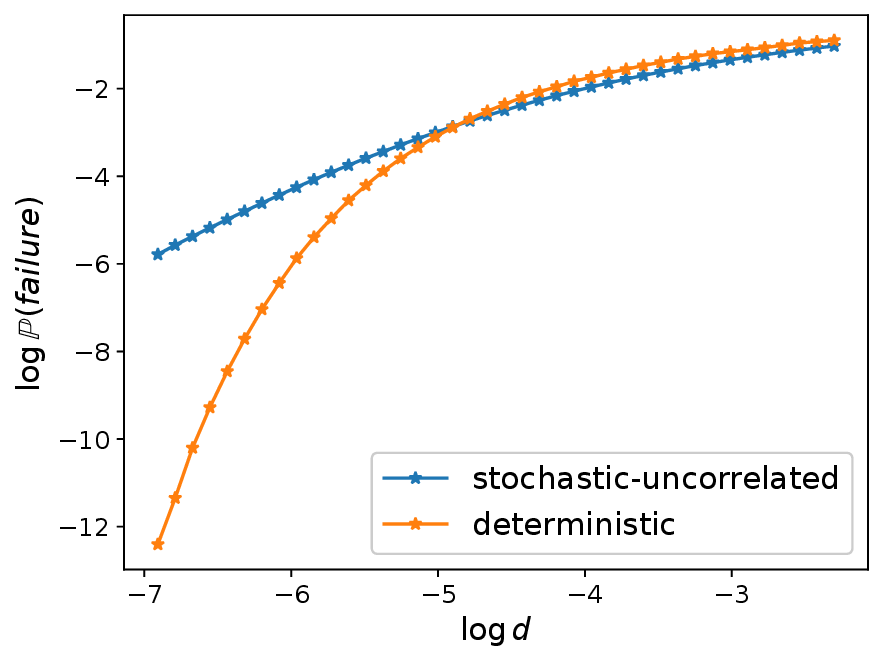}
    \end{subfigure}
    \caption{Convergence rates of $\log(\Pr(\psi \neq 1))$, with $10^7$ repeated trials}
    \label{fig:compare}
\end{figure}
Figure \ref{fig:compare} shows that the $\Pr(\psi \neq 1)$ for $Q^{\text{det}}$ is always less than that for $Q^{\text{sto}}$. In addition, $\Pr(\psi \neq 1)$'s for $Q^{\text{det}}, Q^{\text{sto}}$ diverge when $d$ shrinks,, suggesting that the problem nature is different. The left panel shows that the plotted $\log(\Pr(\psi \neq 1))$ does not decreases linearly as $1/d$ grows, which implies that the bound in (\ref{eq:upper-bound-of-failure-det}) does not hold for $Q^{\text{sto}}$ when $d$ is sufficiently small.

Besides the difference of $H$ notation, the second observation is the difference of the factor outside the exponential function. There is an extra factor $L$ in Theorem \ref{theorem:upper-bound-of-failure}, compared to the Theorem \ref{theorem:upper-bound-of-failure-det}. The intuition is no matter how large $L$ is, the bottle neck resource type that determines the total pulling times is fixed, in the case of deterministic consumption setting. For example, if $C_1=\cdots=C_L$ and $d_{1,(a)}=\cdots=d_{L,(a)}$ holds for all $a\in[K]$, there is no difference regarding the value of $L$, as the resource type $\ell=1$ can be considered the bottle neck to restrict the pulling times of each arm. But the same example makes a difference for stochastic consumption setting. As $L$ increases, the maximum consumption across different resource types when pulling an arm also increases with higher probability. In conclusion, a larger $L$ increases the probability of smaller pulling times, making the failure probability larger.

Observe that $H^{\text{det}}_{2,\ell}(Q), H^{}_{2, \ell}(Q)$ involves the mean consumption $d_{\ell, (1)}, \ldots, d_{\ell, (K)}$ in a non-increasing order, providing a worst-case hardness measure over all permutations of the arms. One could wonder if the definition of $H_{2,\ell}^{\text{det}}, H^{}_{2, \ell}$ can be refined in the non-ordered way, i.e. $\max_{k\in \{2, \ldots, K\}}\{ \sum^k_{j=1} d_{\ell, j} /\Delta_k^2 \}$ or $\max_{k\in \{2, \ldots, K\}}\{ \sum^k_{j=1} f(b_{\ell},\sigma_{\ell}, d_{\ell, j}) /\Delta_k^2 \}$. Our analysis in appendix \ref{sec:Improvement_of_H2_is_unachievable} shows such a refinement is unachievable.

At the end of this section, we provide a sketch proof of Theorem \ref{theorem:upper-bound-of-failure}.
\begin{proof}{Sketch Proof of Theorem \ref{theorem:upper-bound-of-failure}}
    Denote $T^{(q)}$ as the pulling times of each survival arm at phase $q$, and $\bar{T}^{(q)} = T^{(1)} + \ldots + T^{(q)}$ as the total pulling times of each survival arm at the end of phase $q$. Define $E_q:=\left\{i: \hat{r}_{i,\bar{T}^{(q)}}> \hat{r}_{1, \bar{T}^{(q)} }\right\}$ and the bad event $B^{(q)}=\left\{ |E_q| \ge \lceil\frac{K}{2^{q+1}}\rceil\right\}$.

    \textbf{Step 1.} We prove an upper bound for the $\Pr(B^{(q)})$ by splitting it into two parts.
    \begin{align*}
        &\Pr(B^{(q)})\\
        \leq &\underbrace{ \Pr\left(\exists k \ge \lceil\frac{K}{2^{q+1}}\rceil, \hat{r}_{k, \bar{T}^{(q)}} > \hat{r}_{1, \bar{T}^{(q)}}, \bar{T}^{(q)} \ge \bar{\beta}^{(q)} \right) }_{(\P)}\\
        &\qquad \qquad \qquad \qquad+ \underbrace{ \Pr\left(\bar{T}^{(q)} < \bar{\beta}^{(q)}\right) }_{(\ddagger)}
    \end{align*}
    where $\bar{\beta}^{(q)} = \min_{\ell\in [L]}\{\beta^{(q)}_\ell\}$ and $\beta^{(q)}_\ell:=\frac{C_{\ell}}{ \lceil \log_2 K\rceil\left(\sum_{k=1}^{\lceil \frac{K}{2^q} \rceil} f(b_{\ell}, \sigma_{\ell}, d_{\ell, (k)}) \right)}$. Intuitively speaking, $(\P)$ denotes probability that arm 1 might get eliminated from the survival set $\tilde{S}^{(q)}$, when the pulling times of arm 1 is large enough. And $(\ddagger)$ denotes the probability that pulling time is not large enough.

    \textbf{Step 2.} By applying concentration theorems, we show that
    \begin{align*}
        \Pr\left(\bar{T}^{(q)} < \bar{\beta}^{(q)}\right) \leq & L\exp\left(-\bar{\beta}^{(q)}\right)  \\
        (\P) \leq & K \exp\left(-\bar{\beta}^{(q)} \frac{(r_1-r_{ \lceil\frac{K}{2^{q+1}}\rceil })^2}{2}\right).
    \end{align*}
    
    \textbf{Step 3.} Merging step 1 and 2, we can conclude
    \begin{align*}
        &\Pr(\hat{k}\neq 1)\\
        \le &\Pr(\cup_{q=1}^{\log_2 K} B^{(q)})\\
        \le &\sum_{q=1}^{\log_2 K} \Pr(B^{(q)})\\
        \le &2LK (\log_2 K) \exp\left(-\frac{1}{4}\min_{\ell \in[L]} \{ \frac{C_{\ell}}{\lceil\log_2 K\rceil H_{2, \ell}^{}}\}\right).
    \end{align*}
\end{proof}

\section{Lower Bound of $\Pr(\psi \neq 1)$ }
\label{sec:lower-bound-to-Pr(failure)}

We complement our analysis with matching lower bounds for the BAIwRC problem, thereby establishing the near-optimality of SH-RR. Unlike the SH-RR analysis, our lower bounds allow arm 1 to be suboptimal. In what follows, we consider instances where arm 1 needs not be optimal, different from our development of SH-RR. Our lower bound results involve constructing $K$ instances $\{Q^{(i)}\}^K_{i=1}$, where the resource consumption models are carefully crafted to (nearly) match the upper bound results of SH-RR. In the next two subsections, we present two lower bounds for different consumption settings. The first lower bound applies for any consumption distribution, suggesting that the definition of equation \ref{eq:det_H} is appropriate, and confirming Theorem \ref{theorem:upper-bound-of-failure} are tight. The second lower bound, specific to Bernoulli distribution, indicates the necessity of the term $\frac{4b}{\log (\frac{4b^2}{\sigma^2}+1)}$ in the equation \ref{eq:sto_H}, further implying the tightness of Theorem \ref{theorem:upper-bound-of-failure}. Taken together, these results confirm that SH-RR achieves near-optimal performance across a broad range of settings.

\subsection{Lower Bound for General Consumption Setting}
For any $K\geq 2, K\in\mathbb{N}$, we firstly create $K$ instances. Each instance involves the set $[K]$ of $K$ arms and $L$ types of resources $[L]$.  Let $\{r_k\}^K_{k=1}$ be any sequence such that (\hypertarget{prop:a}{a}) $1/2 = r_1\ge r_2 \ge \cdots \ge r_K >0$, and let $\{ \{\nu^{\text{cons}}_{\ell, (k)}\}^{K}_{k=1} \}_{\ell\in [L]}$ be a fixed but arbitrary distribution collection of $L$ sequences such that their mean values $d_{\ell, (k)}=\mathbb{E}_{D\sim \nu^{\text{cons}}_{\ell, (k)}} D$ satisfy (\hypertarget{prop:b}{b}) $d_{\ell,(1)}\ge d_{\ell,(2)}\ge \cdots \ge d_{\ell,(K)}$ for all $\ell\in [L]$, and $d_{\ell, (k)}\in (0, 1]$ for all $\ell\in [L], k\in [K]$. 

In instance $Q^{(i)}$, pulling arm $k\in [K]$ generates a random reward $R_k\sim \mathcal{N}(r^{(i)}_k, 1)$, where
$$r^{(i)}_k = \bigg\{ \begin{array}{ll} r_k  & \;\text{if } k\ne i,\\
1 - r_k & \;\text{if } k= i,\end{array}.$$
\textcolor{blue}{Pulling arm $k\in [K]$
consumes a random variable $D$ units of resource $\ell$, following
\begin{equation}\label{eq:consumption_model}
D \sim \Bigg\{\begin{array}{ll}\nu^{\text{cons}}_{\ell,(2)}&\;\text{if }k=1,\\ \nu^{\text{cons}}_{\ell,(1)}&\;\text{if }k=2,\\\nu^{\text{cons}}_{\ell,(k)}&\;\text{if }k\in \{3, \ldots, K\}\end{array}
\end{equation}
for each $\ell\in [L]$.} 

In instance $Q^{(i)}$, arm $i$ is the uniquely optimal arm. All instances $Q^{(1)}, \ldots, Q^{(K)}$ have identical resource consumption model, since the consumption amounts (\ref{eq:consumption_model}) do not depend on the instance index $i$. This ensures that no strategy can extract information about the reward from an arm's consumption. In addition, the consumption amounts in (\ref{eq:consumption_model}) are designed to ensure that (a) instance $Q^{(1)}$ is the hardest among $\{Q^{(i)}\}^K_{i=K}$ in the sense that $H^\text{det}_{2, \ell}(Q^{(1)}) = \max_{i\in [K]}H^\text{det}_{2, \ell}(Q^{(i)})$ for every $\ell\in [L]$, (b) The ordering $d_{\ell,1} \leq d_{\ell,2} \geq d_{\ell,3}\geq \ldots, \geq d_{\ell,K}$ makes $Q^{(1)}$ a hard instance in the sense that it cost the most to distinguish the second best arm (arm 2) from the best arm. More generally, for each resource $\ell$, the consumption amounts are designed such that a sub-optimal arm is more costly to pull when its mean reward is closer to the optimum. Our construction leads to the following lower bound on the performance of any strategy:
\begin{theorem}
    \label{theorem:lower-bound-fixed-consumption-multiple-resource}
    Consider general consumption instances $Q^{(1)}, \ldots, Q^{(K)}$ constructed as above, with $\{r_k\}_{k\in [K]}, \{d_{\ell, (k)}\}_{\ell, k}$ being fixed but arbitrary sequences of parameters that satisfy properties (\hyperlink{prop:a}{a}, \hyperlink{prop:b}{b}) respectively. 
    When $C_1, \ldots, C_L$ are sufficiently large, such that $96\cdot\text{min}_{i\in [K],\ell\in [L]} \left\{\frac{2 C_\ell}{H^\text{det}_{2, \ell}(Q^{(i)})}\right\} \geq 1$, for any strategy there exists an instance $Q^{(i)}$ (where $i\in [K]$) such that 
    \begin{align*}
        \Pr_{i}(\psi\ne i)\ge &\frac{1}{6}\exp\left(-108\cdot \gamma^{\text{det}}(Q^{(i)})\right),
    \end{align*}
    where $\Pr_{i}(\cdot)$ is the probability measure over the trajectory $\{(A(t), O(t))\}^\tau_{t=1}$ under which the arms are chosen according to the strategy and the outcomes are modeled by $Q^{(i)}$, and $\gamma^{\text{det}}(Q)$ is as defined in Theorem \ref{theorem:upper-bound-of-failure-det}.
\end{theorem}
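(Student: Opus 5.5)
We adapt the change-of-measure argument of \cite{audibert2010best} and \cite{CarpentierL16} for fixed-budget lower bounds, with the number of pulls of each arm controlled through the resource constraints instead of a pull budget. Let $N_k(\tau)$ be the number of pulls of arm $k$ up to the stopping time. The consumption model (\ref{eq:consumption_model}) is identical across $Q^{(1)},\ldots,Q^{(K)}$, and rewards are Gaussian with unit variance, differing between $Q^{(1)}$ and $Q^{(i)}$ only at arm $i$. Hence the log-likelihood ratio of a trajectory under $\Pr_1$ versus $\Pr_i$ involves only the rewards of arm $i$. Its expectation under $\Pr_1$ is $\mathbb{E}_1[N_i(\tau)]\cdot \mathrm{KL}(\mathcal{N}(r_i,1),\mathcal{N}(1-r_i,1)) = \mathbb{E}_1[N_i(\tau)](1-2r_i)^2/2$. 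For $i\ge 2$ we have $1-2r_i = r_1 - r_i + (1-r_1-r_i) = 2\Delta_i$ since $r_1=1/2$, so $\mathrm{KL}=2\Delta_i^2$, with $\Delta_i$ the gap in $Q^{(1)}$.

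\textbf{Step 1 (budget forces few pulls of some arm).} Fix $\ell$ and the index $k^*$ attaining the max in $H^{\text{det}}_{2,\ell}(Q^{(1)})$. Feasibility with certainty gives $\sum_t D_\ell(t)\le C_\ell$, so by Wald's identity $\sum_k d_{\ell,k}\mathbb{E}_1[N_k(\tau)]\le C_\ell$. Under $Q^{(1)}$ the mean consumptions of arms $1,\ldots,k$ are $d_{\ell,(1)},\ldots,d_{\ell,(k)}$ up to swapping the first two. So for every $k\ge2$ there is $i\in\{2,\ldots,k\}$ with $\Delta_i^2\,\mathbb{E}_1[N_i(\tau)]\lesssim \Delta_k^2 C_\ell/\sum_{a\le k}d_{\ell,(a)}\le C_\ell/H^{\text{det}}_{2,\ell}$ at the maximizing $k$. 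This uses $\Delta_i\le\Delta_k$; arm 1 can be dropped from the averaging at the cost of a factor 2. Taking the bottleneck $\ell$ attaining $\gamma^{\text{det}}(Q^{(1)})=\min_\ell C_\ell/H^{\text{det}}_{2,\ell}(Q^{(1)})$, we obtain an $i\ge 2$ with $\mathbb{E}_1[N_i(\tau)]\Delta_i^2\le c\,\gamma^{\text{det}}(Q^{(1)})$. We also need $\gamma^{\text{det}}(Q^{(1)})\le\gamma^{\text{det}}(Q^{(i)})$ up to constants, via property (a) that $Q^{(1)}$ is hardest, so the final bound can be stated in terms of $Q^{(i)}$. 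If necessary we state it as the minimum over instances, which is enough since the theorem only asks for existence.

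\textbf{Step 2 (high-probability change of measure).} Expectation bounds are not sufficient; we need the log-likelihood ratio $\hat L_i$ to be small on an event of constant probability under $\Pr_1$. We split the trajectory into the events $\{\psi\ne1\}$ and $\{\psi=1\}$. If $\Pr_1(\psi\ne1)\ge1/6$ we are done. Otherwise we lower bound
\[
\Pr_i(\psi\ne i)\ge\Pr_i(\psi=1)=\mathbb{E}_1\!\left[e^{-\hat L_i}\mathbbm{1}(\psi=1)\right],
\]
and we restrict further to the event $\{N_i(\tau)\le 6\,\mathbb{E}_1N_i\}\cap\{\hat L_i\le \text{drift}+\text{deviation}\}$. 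Markov's inequality controls the first event. For the second, $\hat L_i$ minus its drift is a martingale with Gaussian increments of variance $4\Delta_i^2$ per pull of arm $i$, so a maximal inequality (Doob, or a Freedman-type bound) gives a deviation of $O(\sqrt{\Delta_i^2\,\mathbb{E}_1N_i})$ with constant probability. By a union bound the intersection has $\Pr_1$-probability at least roughly $1/6$. On it, $\hat L_i\le c'\,\gamma^{\text{det}}+O(\sqrt{\gamma^{\text{det}}})$. The hypothesis $96\cdot\min 2C_\ell/H^{\text{det}}\ge1$ ensures $\gamma^{\text{det}}$ is bounded below by a constant, so the square-root term is absorbed into a linear one. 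This yields $\Pr_i(\psi\ne i)\ge\frac16\exp(-108\gamma^{\text{det}})$.

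\textbf{Main obstacle.} The hardest part is Step 2: $N_i(\tau)$ is random and the stopping time is adaptive, with consumption correlated to the trajectory. The concentration of $\hat L_i$ therefore has to be uniform in the number of pulls, via a maximal inequality over $n\le 6\mathbb{E}_1N_i$, and the constants (6, 96, 108) have to be tracked so that they close. We will try Doob's $L^2$ maximal inequality first, since the increments are Gaussian and the variance bookkeeping is explicit.
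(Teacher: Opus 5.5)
Your plan follows the paper's proof step for step. It uses Wald's identity plus a pigeonhole argument to find an arm $i\ge 2$ with $\mathbb{E}_1[N_i(\tau)]\Delta_i^2\le 2\gamma^{\text{det}}(Q^{(1)})\le 2\gamma^{\text{det}}(Q^{(i)})$. It then changes measure on $\{\psi=1\}\cap\{N_i(\tau)\le 6\,\mathbb{E}_1[N_i(\tau)]\}\cap\xi$, with Markov's inequality for the middle event and a maximal inequality for the centred log-likelihood ratio. The only variation is in the pigeonhole step. You average over arms $2,\dots,k^*$ at the maximiser $k^*$ of $H^{\text{det}}_{2,\ell}(Q^{(1)})$. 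The paper instead compares $\sum_k d_{\ell,k}\mathbb{E}_1[N_k(\tau)]\le C_\ell$ termwise with the allocation $C_\ell/(H^{\text{det}}_{1,\ell}(Q^{(1)})\Delta_k^2)$, which gives the stronger intermediate bound $2C_\ell/H^{\text{det}}_{1,\ell}(Q^{(1)})$, and only then uses $H^{\text{det}}_{2,\ell}\le H^{\text{det}}_{1,\ell}$. Both routes lose the same factor $2$. That factor is tight for this step: take $K=2$, $d_{\ell,(1)}=d_{\ell,(2)}$, and a strategy that spends the whole budget on arm $2$.

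The gap is your final sentence: the constant $108$ does not come out of this argument. Write $x_i=\mathbb{E}_1[N_i(\tau)]\Delta_i^2\le 2\gamma^{\text{det}}(Q^{(i)})$, and note that the hypothesis says exactly $192\,\gamma^{\text{det}}(Q^{(j)})\ge 1$ for every $j$.
\begin{itemize}
\item On the good event the drift is at most $6\,\mathbb{E}_1[N_i(\tau)]\cdot 2\Delta_i^2=12x_i\le 24\gamma^{\text{det}}$.
\item The exponential maximal inequality (Lemma~\ref{lemma:bound-realized-KL}) at level $e^{-2}$ gives a deviation $\sqrt{96x_i}\le\sqrt{192\gamma^{\text{det}}}$. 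The hypothesis only lets you bound this by $192\gamma^{\text{det}}$.
\item The total is therefore $216\gamma^{\text{det}}$, not $108\gamma^{\text{det}}$.
\end{itemize}
Doob's $L^2$ inequality would make this worse, because $4\,\mathrm{Var}(S_N)/I^2\le\delta$ forces a larger $I$ than $\exp(-I^2/(2\,\mathrm{Var}(S_N)))\le\delta$ does. The paper's own proof contains the same slip. Its last step bounds the exponent by $108\cdot\min_\ell 2C_\ell/H^{\text{det}}_{2,\ell}(Q^{(i)})=216\,\gamma^{\text{det}}(Q^{(i)})$, the same $216$ it obtains explicitly in Appendix~\ref{sec:Improvement_of_H2_is_unachievable}.

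To genuinely reach $108$ you have to retune the thresholds:
\begin{itemize}
\item Since $\gamma^{\text{det}}(Q^{(1)})\ge 1/192$, you may dispose of the case $\Pr_1(\psi\ne 1)\ge p_0:=\frac16 e^{-108/192}\approx 0.095$; then $i=1$ works.
\item Otherwise take $\mathcal{E}=\{\psi=1\}\cap\{N_i(\tau)\le 2\,\mathbb{E}_1[N_i(\tau)]\}\cap\xi$. Here $\xi$ is the maximal-deviation event over $n\le 2\,\mathbb{E}_1[N_i(\tau)]$ at failure level $\delta=\frac13-p_0$, so $\Pr_1(\mathcal{E})\ge(1-p_0)-\frac12-\delta=\frac16$.
\item The drift is then at most $4x_i\le 8\gamma^{\text{det}}$. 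The deviation is $\sqrt{16x_i\log(1/\delta)}\le\sqrt{32\cdot 192\log(1/\delta)}\,\gamma^{\text{det}}\approx 93.9\,\gamma^{\text{det}}$.
\item The exponent therefore stays below $102\,\gamma^{\text{det}}(Q^{(i)})$, which gives the stated bound.
\end{itemize}
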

Detailed proof is in Appendix \ref{pf:thm_low_det}, and we present the sketch proof at the end of this subsection. Compared to the Theorem 4 in \cite{anonymous2024best}, Theorem \ref{theorem:lower-bound-fixed-consumption-multiple-resource} improve the result from two perspectives. 

First, Theorem \ref{theorem:lower-bound-fixed-consumption-multiple-resource} works for \text{all consumption distribution}, while the Theorem 4 in \cite{anonymous2024best} only proves the correctness for the deterministic consumption setting. Together with the Theorem \ref{theorem:upper-bound-of-failure-det}, Theorem \ref{theorem:lower-bound-fixed-consumption-multiple-resource} demonstrates the \textbf{near-optimality of SH-RR}, and the fundamental importance of the quantity $\gamma^{\text{det}}(Q)$ for the BAIwRC problem. More precisely, the bounds in Theorems \ref{theorem:upper-bound-of-failure-det}, \ref{theorem:lower-bound-fixed-consumption-multiple-resource} imply
\begin{align}
&\underset{\text{strategy}}{\sup} ~\underset{\substack{\text{det inst $Q$:}\\ \gamma^{\text{det}}(Q) \geq \kappa^{\text{det}}}}{\inf}\left\{\frac{- \log\left( \Pr(\text{fail BAI})\right) }{\gamma^{\text{det}}(Q)} \right\} \in\nonumber\\
&\left[\frac{1}{4 \lceil\log_2 K\rceil}, 108\right]\label{eq:sup_inf_det}, 
\end{align}
where $\kappa^{\text{det}} = K$. What's more, from the discussion in section \ref{sec:main}, we know $\gamma(Q) =O(\gamma^{\text{det}}(Q))$ holds for some resource consumption contribution. For example, $\nu^{\text{cons}}_{\ell,(i)} = \text{Unifom}(0, 2d_{\ell, (k)})\Rightarrow \gamma(Q)=(\frac{4}{\log 13}+1)\gamma^{\text{det}}(Q))$, which implies Theorem \ref{theorem:upper-bound-of-failure-det} is nearly optimal for distribution $\nu^{\text{cons}}_{\ell,(i)} = \text{Unifom}(0, 2d_{\ell, (k)})$. For another example, $\Pr(D=2d_{\ell, (k)})=\frac{1}{2}, \Pr(D=0)=\frac{1}{2}\Rightarrow (\frac{4}{\log 8}+1)\gamma^{\text{det}}(Q))$, which also suggests the near optimality of Thoerem \ref{theorem:upper-bound-of-failure-det}. Combining Theorem \ref{theorem:upper-bound-of-failure-det} and \ref{theorem:lower-bound-fixed-consumption-multiple-resource}, we know the algorithm SH-RR is also nearly optimal.

Second, Theorem \ref{theorem:lower-bound-fixed-consumption-multiple-resource} requires a smaller minimum value of budget $\{C_{\ell}\}_{\ell=1}^L$. Theorem 4 in \cite{anonymous2024best} requires $\{C_{\ell}\}_{\ell=1}^L$ are large enough, such that $\text{min}_{i\in [K],\ell\in [L]} \left\{\frac{2 C_\ell}{H^\text{det}_{\ell, 2}(Q_i)}\right\} \geq \sqrt{\min_{\ell\in [L]}\left\{\lfloor\frac{C_{\ell}}{d_{{\ell},(K)}}\rfloor\right\}\log\left(12K \min_{\ell\in [L]}\left\{\lfloor\frac{C_{\ell}}{d_{{\ell},(K)}}\rfloor\right\}\right)}$. While in Theorem \ref{theorem:lower-bound-fixed-consumption-multiple-resource}, it only requires $96\cdot\text{min}_{i\in [K],\ell\in [L]} \left\{\frac{2 C_\ell}{H^\text{det}_{2, \ell}(Q^{(i)})}\right\} \geq 1$. This improvement is achieved by a better usage of transportation equality. Details are in Appendix \ref{pf:thm_low_det}.

Following is the sketch proof.
\begin{proof}{Sketch Proof of Theorem \ref{theorem:lower-bound-fixed-consumption-multiple-resource}}
     The overall proof consists of two steps.
    
    \textbf{Step 1.} We show that, under the assumption $\Pr_1(\psi\neq 1) <1/2$, for every $i\in \{2, \ldots, K\}$ it holds that
    \begin{align*}
        & \Pr_i(\psi\neq i) \geq \\
       & \frac{1}{6}\exp\left(-12t_i(\frac{1}{2}-r_i)^2-\sqrt{96t_i(\frac{1}{2}-r_i)^2}\right),
    \end{align*}
    where 
    \begin{align*}
        t_i = \mathbb{E}_1[T_i], \quad T_i = \sum^\tau_{t=1} \mathbf{1}(A(t) = i)
    \end{align*}
    is the number of times pulling arm $i$. Note that if the assumption $\Pr_1(\psi\neq 1) <1/2$ is violated, the conclusion in Theorem \ref{theorem:lower-bound-fixed-consumption-multiple-resource} immediately holds for $Q^{(1)}$ and sufficiently large enough $\{C_{\ell}\}_{\ell=1}^L$. The main idea is to establish step 1 is applying the transportation equality in \cite{CarpentierL16} 
    \begin{align*}
        & \Pr_i(\psi \neq i)\\
        = & \mathbb{E}_{1}\left(\mathbbm{1}(\psi \neq i)\exp\left(-\sum_{s=1}^{T_i}\log\frac{d\nu_i^{(1)}}{d\nu_i^{(i)}}(\tilde{R}_i(s))\right)\right),
    \end{align*}
    where $\tilde{R}_i(s)\sim N(r_i, 1)$, $\nu_i^{(1)}=N(r_i, 1)$, $\nu_i^{(i)}=N(1-r_i, 1)$. Besides this equality, we need to further prove probability bound for the concentration event of realized KL-divergence. Details are omitted here.

    \textbf{Step 2.} We show that there exists $i\in \{2, \ldots K\}$ such that 
    \begin{align*}
        t_i (1/2 - r_i)^2 & \leq \text{min}_{\ell\in [L]} \left\{\frac{2 C_\ell}{H^\text{det}_{2, \ell}(Q^{(1)})}\right\}\\
        &  \leq \text{min}_{\ell\in [L]} \left\{\frac{2 C_\ell}{H^\text{det}_{2, \ell}(Q^{(i)})}\right\}.
    \end{align*}
    To prove this, we apply Wald's Equality to show 
    \begin{align*}
        t_1d_{\ell,(2)}+ t_2d_{\ell,(1)} + \sum_{k=3}^K t_k d_{\ell,(k)}\le C_{\ell}
    \end{align*}
    holds for all $\ell\in [L]$. Further, we can conclude for all $\ell\in [L]$,
    \begin{align*}
        &\frac{2 C_{\ell} d_{\ell,(1)}}{H^\text{det}_{1, \ell}(Q^{(1)})(\frac{1}{2}-r_2)^2} + \sum_{k=3}^K \frac{C_{\ell} d_{\ell,(k)}}{H^\text{det}_{1, \ell}(Q^{(1)})(\frac{1}{2}-r_k)^2}\nonumber\\
        \ge &t_1d_{\ell,(2)}+t_2d_{\ell,(1)}+t_3d_{\ell,(3)}+\cdots+t_Kd_{\ell,(K)},
    \end{align*}
    which implies $\exists i\in \{2, \ldots K\}$, such that $t_i (1/2 - r_i)^2\leq \text{min}_{\ell\in [L]} \left\{\frac{2 C_\ell}{H^\text{det}_{2, \ell}(Q^{(i)})}\right\}$.

    Merging the step 1 and 2, we prove Theorem \ref{theorem:lower-bound-fixed-consumption-multiple-resource}.
\end{proof}

\subsection{Lower Bound for Bernoulli Consumption Setting}
Theorem \ref{theorem:lower-bound-fixed-consumption-multiple-resource} works for any consumption distribution, suggesting the definition of equation (\ref{eq:det_H}) is appropriate. In addition, if the consumption distributions are all Bernoulli and the mean consumption value are sufficiently small, we are able to prove a stronger lower bound.
\begin{theorem}
    \label{theorem:lower-bound-sto-consumption-multiple-resource}
    Consider a fixed but arbitrary function $g:[0, +\infty) \rightarrow [0, +\infty)$ that is increasing and $\lim\limits_{d\rightarrow 0^+}\frac{1}{g(d)\log\frac{1}{d}}=+\infty$, $g(0)=0$, as well as any fixed $\{r_k\}_{k=1}^K \subset (0, 1)$, $\frac{1}{2}=r_1 > r_2\geq \cdots \geq r_K=\frac{1}{4}$, any fixed $\{d^0_{\ell, (k)}\}_{k=1, \ell=1}^{K, L} \subset \mathbb{R}$, $d^{0}_{\ell, (1)} \geq d^{0}_{\ell, (2)} \geq \cdots\geq d^{0}_{\ell, (K)}$, and any fixed $i\in\{2,\cdots, K\}$. We can identify $\bar{c} \in (0, 1)$, such that for any $c\in (0, \bar{c})$  and large enough $\{C_{\ell}\}_{\ell=1}^L$, such that $C_{\ell}\geq 64$ holds for all $\ell\in [L]$. By taking $d_{\ell, (j)}= cd^{0}_{\ell, (j)}, \forall j\in [K], \forall \ell\in[L]$, we can construct corresponding instances $Q^{(j)}$: (1) pulling arm $k\in [K]$ generates a random reward $R_k\sim \mathcal{N}(r^{(j)}_k, 1)$, where $r^{(j)}_k = \bigg\{ \begin{array}{ll} r_k  & \;\text{if } k\ne j,\\1 - r_k & \;\text{if } k= j,\end{array}$, (2) pulling arm $k\in [K]$ consumes $D_{\ell}\sim\text{Bern}(d_{\ell, k})$,
    $d_{\ell, k} =d_{\ell,(k)}$ 
    for $\ell\in [L]$. The following performance lower bound holds for any strategy:
    \begin{align*}
        \max_{j\in \{1, i\}}\Pr_{Q^{(j)}}(\psi\neq j) \geq \exp\left(-\tilde{\gamma}^{\text{sto}}(Q^{(j)})\right),
    \end{align*}
    where $\tilde{\gamma}^{\text{sto}}(Q^{(j)}) = \min_{\ell\in [L]}\frac{C_{\ell}}{\tilde{H}_{2, \ell}^{\text{sto}}}$, and $\tilde{H}_{2, \ell}^{\text{sto}}=\max\limits_{k\in\{2,3,\cdots, K\}} \frac{\sum_{j=1}^k g(d_{\ell, (j)})}{\Delta_k^2}$.
\end{theorem}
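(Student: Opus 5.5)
The plan is to exhibit a rare event, forced by the Bernoulli consumption and independent of the strategy, on which the budget is exhausted after very few pulls. On that event no strategy can distinguish $Q^{(1)}$ from $Q^{(i)}$. The probability of this event is of order $d^{C}$, which is $\exp(-C\log\frac1d)$. The growth condition on $g$ then makes $C\log\frac1d$ negligible compared with $C_\ell/\tilde H^{\text{sto}}_{2,\ell}$ once $c$ is small.

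\textbf{Step 1 (choice of resource and event).} Fix $\ell^\ast\in\arg\min_\ell C_\ell/\tilde H^{\text{sto}}_{2,\ell}$, write $m=\lfloor C_{\ell^\ast}\rfloor$, and set $d_{\min}=c\,d^0_{\ell^\ast,(K)}$. I assume $d^0_{\ell,(K)}>0$, which is implicit since $d_{\ell,k}>0$ is required. Let $E$ be the event that each of the first $m+1$ pulls consumes one unit of resource $\ell^\ast$. On $E$ the strategy stops after at most $m\le C_{\ell^\ast}$ pulls, by feasibility with certainty. Whichever arms are chosen, each pull consumes one unit with probability at least $d_{\min}$, and the consumption law is the same in every $Q^{(j)}$. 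A sequential conditioning argument therefore gives $\Pr_j(E)\ge d_{\min}^{m+1}$ for $j\in\{1,i\}$ and every strategy.

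\textbf{Step 2 (indistinguishability on $E$).} Set $A=\{\psi\neq 1\}$. I would lower bound $\Pr_1(A\cap E)+\Pr_i(A^c\cap E)$ by a Bretagnolle--Huber type inequality restricted to $E$:
\[
\Pr_1(A\cap E)+\Pr_i(A^c\cap E)\ \ge\ \tfrac12\Pr_1(E)\exp\bigl(-\mathbb{E}_1[\log\tfrac{d\Pr_1}{d\Pr_i}\mid E]\bigr).
\]
This follows from Jensen applied to $\int_E\min(d\Pr_1,d\Pr_i)$. Since the consumption laws coincide, the log-likelihood ratio involves only the rewards of arm $i$, and there are at most $m$ of them on $E$. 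Each contributes drift $(1-2r_i)^2/2\le 1/8$. This should give a conditional KL of at most $C_{\ell^\ast}/8$, and hence
\[
\max_{j\in\{1,i\}}\Pr_j(\psi\ne j)\ \ge\ \tfrac14 \exp\Bigl(-(C_{\ell^\ast}+1)\log\tfrac1{d_{\min}}-\tfrac{C_{\ell^\ast}}8\Bigr).
\]

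\textbf{Step 3 (comparison with $\tilde\gamma^{\text{sto}}$).} Since $r$ is fixed, $\Delta_2>0$ is fixed, and $g$ is increasing, so $\tilde H^{\text{sto}}_{2,\ell^\ast}\le K g(c\,d^0_{\ell^\ast,(1)})/\Delta_2^2$. The hypothesis $1/(g(d)\log\frac1d)\to\infty$, together with $\log\frac1{c\,d^0_{(K)}}=\log\frac1c+O(1)$ and monotonicity of $g$, lets me choose $\bar c$ such that for all $c<\bar c$
\[
\tilde H^{\text{sto}}_{2,\ell^\ast}\Bigl(2\log\tfrac1{d_{\min}}+\tfrac18+\tfrac{\log 4}{64}\Bigr)\le 1.
\]
The assumption $C_{\ell^\ast}\ge 64$ absorbs the prefactor $\frac14$ and the $+1$ in the exponent into the $C_{\ell^\ast}$ terms. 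The resulting bound is $\ge\exp(-C_{\ell^\ast}/\tilde H^{\text{sto}}_{2,\ell^\ast})=\exp(-\tilde\gamma^{\text{sto}})$. One also needs to check that $\tilde\gamma^{\text{sto}}$ is the same for $Q^{(1)}$ and $Q^{(i)}$, or to take the smaller one; the instances share consumptions and gap magnitudes, so this is routine.

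\textbf{Main obstacle.} The delicate step is Step 2, namely bounding the conditional KL given $E$. The event $E$ depends on the arm choices, which in turn depend on past rewards, so conditioning on $E$ could in principle bias the reward log-likelihood increments. I would try first to avoid conditioning altogether. The plan is to write $\Pr_i(A^c\cap E)=\mathbb{E}_1[\mathbb 1(A^c\cap E)\,L_\tau^{-1}]$, using the transportation identity of \cite{CarpentierL16} as in Theorem~\ref{theorem:lower-bound-fixed-consumption-multiple-resource}. I would then intersect with the high-probability event $\{\log L_\tau\le C_{\ell^\ast}/8+\sqrt{C_{\ell^\ast}}\}$ via a martingale maximal inequality over at most $m$ steps. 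On the intersection the likelihood ratio is bounded pointwise, which yields the same conclusion up to constants absorbed by $C_{\ell^\ast}\ge 64$ and the choice of $\bar c$.
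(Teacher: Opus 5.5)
Your plan is close in spirit to the paper's: a Bernoulli-driven exhaustion event of probability $e^{-O(C\log(1/d))}$, a change of measure on arm $i$'s rewards, and the growth condition on $g$ to absorb $C\log(1/d)$. Step 3 is fine. \textbf{Step 2 has a genuine gap}: the bound $\mathbb{E}_1[\log\frac{d\Pr_1}{d\Pr_i}\mid E]\le C_{\ell^\ast}/8$ does not follow from the per-pull drift $\le 1/8$, and it is false in general.

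The event $E$ is correlated with arm $i$'s rewards. The arms pulled depend on past rewards, and the chance that the next pull consumes a unit equals the mean consumption of the arm chosen. Consider a strategy that alternates between arm $i$ and a second arm. The second arm is the heaviest consumer if arm $i$'s last reward fell below $r_i-2$, and a light consumer otherwise. Conditioning on $E$ then tilts arm-$i$ rewards by a factor $d^0_{\ell^\ast,(1)}/d^0_{\ell^\ast,(K)}$ per pull. When this ratio is large, which the hypotheses allow:
\begin{itemize}
\item almost every arm-$i$ reward on $E$ lies below $r_i-2$;
\item each increment $\frac{(1-2r_i)(1-2R)}{2}$ then has conditional mean about $1.3$ for $r_i=1/4$;
\item with about $C/2$ such pulls, $\mathbb{E}_1[\log L_\tau\mid E]\approx 0.65\,C_{\ell^\ast}$, well above $C_{\ell^\ast}/8$.
\end{itemize}
Your fallback fails for the same reason. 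The event $\{\log L_\tau\le C/8+\sqrt{C}\}$ fails with $\Pr_1$-probability of constant order. Meanwhile $\Pr_1(E)$ is exponentially small in $C$ for any strategy that spends its budget, so intersecting with that event can wipe out $E$ entirely, as in the example.

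There is also a smaller issue. The prefactor $\frac12\Pr_1(E)$ in your restricted Bretagnolle--Huber inequality implicitly needs $\Pr_i(E)\le\Pr_1(E)$. That need not hold, since $\Pr_j(E)$ depends on reward-driven arm choices; in general you only get $\Pr_1(E)^2/(\Pr_1(E)+\Pr_i(E))$.

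\textbf{Repair.} Calibrate the truncation level against $\Pr_1(E)$ rather than against $1$.
\begin{itemize}
\item Define $E$ as ``every pull made consumes one unit of resource $\ell^\ast$''. Then $\tau\le m$ on $E$ and $\Pr_1(E)\ge d_{\min}^{m}$.
\item Apply Lemma~\ref{lemma:bound-realized-KL} to the centred arm-$i$ increments, which have variance $(1-2r_i)^2\le\frac14$, with $N=m$ and $I=m\sqrt{\log(1/d_{\min})}$. The maximal deviation exceeds $I$ with probability at most $d_{\min}^{2m}\le\frac12\Pr_1(E)$.
\item On the rest of $E$ you have $\log L_\tau\le m/8+m\sqrt{\log(1/d_{\min})}$.
\item The two-case argument (either $\{\psi\neq 1\}$ or $\{\psi=1\}$ carries a quarter of $\Pr_1(E)$) gives $\max_{j\in\{1,i\}}\Pr_j(\psi\ne j)\ge\frac14\exp(-m\log\frac{1}{d_{\min}}-\frac{m}{8}-m\sqrt{\log(1/d_{\min})})$.
\end{itemize}
Since $\sqrt{x}\le x$ for $x\ge1$, your Step 3 condition with the factor $2\log(1/d_{\min})$ already absorbs this. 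Alternatively, Donsker--Varadhan with $\mathrm{KL}(\Pr_1(\cdot\mid E)\,\|\,\Pr_1)=\log(1/\Pr_1(E))$ bounds the conditional expectation by the same order.

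\textbf{How the paper avoids this.} It uses the much looser cap $T_i\le\bar T_{i,\ell_0}\propto C_{\ell_0}/\tilde H^{\text{sto}}_{1,\ell_0}$. It lower-bounds $\Pr_1(T_i\le\bar T_{i,\ell_0})$ by $e^{-4C_{\ell_0}\log(1/d_{\ell_0,(i)})}$ via Csisz\'ar's anti-concentration lemma applied to arm $i$'s own consumption. Its threshold $I_{i,\ell_0}$ is of the same order as $\bar T_{i,\ell_0}$, so $\Pr_1(\neg\xi_{i,\ell_0})\le e^{-C_{\ell_0}/(32\tilde H^{\text{sto}}_{1,\ell_0})}$ is automatically negligible. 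A plain union bound plus the dichotomy on $\Pr_1(\psi=i)$ then replaces any conditioning. Your exhaustion event is more elementary and needs no anti-concentration lemma. But because it caps the number of pulls at $C$, the likelihood-ratio control must be tuned to its tiny probability.
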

Detailed proof is in Appendix \ref{pf:thm_low_sto}. Similar to the last subsection, we make some comments on Theorem \ref{theorem:lower-bound-sto-consumption-multiple-resource} before elaborating its sketch proof. In Theorem \ref{theorem:lower-bound-sto-consumption-multiple-resource}, we establish a lower bound for Bernoulli consumption in multiple resource scenarios. Since this is a special case of stochastic consumption, we can also claim the near-optimality of the SH-RR, alongside Theorem \ref{theorem:upper-bound-of-failure}.

In Theorem \ref{theorem:upper-bound-of-failure}, we introduce a novel complexity measure, $H_{2, \ell}(Q)$. This measure is notable for incorporating a term $\frac{4b}{\log (\frac{4b^2}{\sigma^2}+1)}$. As mentioned in the Section \ref{sec:main}, if the resource consumption follows Bernoulli Distribution with mean value $d$, $\frac{4b}{\log (\frac{4b^2}{\sigma^2}+1)}=\Theta\left(\frac{1}{\log \frac{1}{d}}\right)$, when $d$ is small. This results in a larger and possibly weaker upper bound compared to the deterministic case and is also different from traditional BAI literature. A pertinent question arises: Is it feasible to refine this term from $\frac{1}{\log\frac{1}{d}}$ to $d$?

Theorem \ref{theorem:lower-bound-sto-consumption-multiple-resource} directly addresses this question, clarifying that such a refinement should not be expected to hold for any given sets $\{r_k\}_{k=1}^K$ and $\{d_{\ell, k}\}_{k=1, \ell=1}^{K, L}$. This result decisively indicates that the term $\frac{4b}{\log (\frac{4b^2}{\sigma^2}+1)}$ in the definition of $H_{2, \ell}(Q)$ is irreplaceable with $\left(\frac{4b}{\log (\frac{4b^2}{\sigma^2}+1)}\right)^{1+\varepsilon} $ for any $\varepsilon > 0$, and certainly not with $d$, which also suggests that the lower bound in Theorem \ref{theorem:lower-bound-fixed-consumption-multiple-resource} is weaker. 

The intuition of the difference between the Theorem \ref{theorem:lower-bound-fixed-consumption-multiple-resource} and \ref{theorem:lower-bound-sto-consumption-multiple-resource} is as follows. Theorem \ref{theorem:lower-bound-fixed-consumption-multiple-resource} concerns about the allocation of resources. There always exists an arm whose allocated resource is relatively insufficient, resulting a lower bound of failure probability when this arm is indeed the optimal arm. While Theorem \ref{theorem:lower-bound-sto-consumption-multiple-resource} concerns spending all the resource on pulling a single arm. If the consumption follows Bernoulli Distribution, the total pulling times of this single arm cannot be as large as the deterministic consumption setting. And this difference also contributes to the operator $\max_{j\in \{1,i\}}$ in Theorem \ref{theorem:lower-bound-sto-consumption-multiple-resource}, diverging from the conventional form of $\max_{j\in [K]}$. The idea of allocating all the resource to arm $i$, somehow replace the step 2 in appendix \ref{pf:thm_low_det}. This suggests that there is room for improvement in the approximation. A tighter approximation in the exponential term is to be explored.

At the end of this subsection, we present the sketch proof of Theorem \ref{theorem:lower-bound-sto-consumption-multiple-resource}.
\begin{proof}{Sketch Proof of Theorem \ref{theorem:lower-bound-sto-consumption-multiple-resource}}
    Detailed proof is in Appendix \ref{pf:thm_low_sto}. The overall idea is still based on Transportation Equality in \cite{CarpentierL16}, but we modify part of the steps. To find an appropriate threshold $\bar{c}$, we require $\bar{c}$ is small enough, such that for any $c\in (0, \bar{c})$, $d_{\ell, (j)}= cd^{0}_{\ell, (j)}, \forall j\in [K], \forall \ell\in[L]$, we have inequalities such as $128\left(\frac{g(d_{\ell, (1)})}{(r_1-r_2)^2} + \sum_{k=3}^K \frac{g(d_{\ell, (k)})}{(r_1-r_k)^2}\right)\log\frac{1}{d_{\ell, (i)}} < 1,\forall \ell\in [L]$

    \textbf{Step 1.} We show that, 
    \begin{align*}
        & \Pr_i(\psi\neq i)\\
        \geq & \Pr_{1}\left((\psi \neq i) \text{ and } (T_i\leq \bar{T}_{i,\ell_0}) \text{ and }\xi_{i,\ell_0}\right)\\
        & \exp\left(-\frac{3}{4}\frac{C_{\ell_0}}{\tilde{H}^{\text{sto}}_{1, \ell_0}}\right).
    \end{align*}
    holds for all $\ell_0\in [L]$ and $i\in \{2,3,4,\cdots,K\}$,
    where $T_i = \sum_{t=1}^{T} \mathbbm{1}(A_t =i)$, $\bar{T}_{i,\ell_0}:=\frac{C_{\ell_0}}{\frac{g(d_{\ell_0,(1)})}{(r_1-r_2)^2} + \sum_{k=3}^K \frac{g(d_{\ell_0,(k)})}{(r_1-r_k)^2}}\frac{1}{(r_1-r_i)^2}=\frac{C_{\ell_0}}{\tilde{H}^{\text{sto}}_{1,\ell_0}}\frac{1}{(r_1-r_i)^2}$, and 
    \begin{align*}
        & \xi_{i,\ell_0}\\
        = & \Bigg\{\max_{1\leq t\leq \bar{T}_{i,\ell_0}} \sum_{s=1}^{t}\log\frac{d\nu_i^{(1)}}{d\nu_i^{(i)}}(\tilde{R}_i(s))-\frac{(1-2r_i)^2}{2}\\
        & \leq I_{i,\ell_0}\Bigg\}
    \end{align*}
    is the concentration event for the realized KL divergence, similar to the proof of Theorem \ref{theorem:lower-bound-fixed-consumption-multiple-resource}.

    \textbf{Step 2.} We assume $\Pr_1\left(\psi = i\right)\leq \exp\left(-\min_{\ell \in [L]}\frac{C_\ell}{\tilde{H}^{\text{sto}}_{1, \ell}}\right)$. If this assumption doesn't hold, we have already proved theorem \ref{theorem:lower-bound-sto-consumption-multiple-resource}. In addition, we will use anti concentration result of Bernoulli Distribution, i.e Lemma \ref{lemma:KL-Divergence-Chernoff-ln} to prove $\Pr_{1}\left(T_i \leq \bar{T}_{i,\ell_0}\right)\geq \exp\left(-\frac{C_{\ell_0}}{\frac{1}{4\log\frac{1}{d_{\ell_0, (i)}}}}\right)$. Combining all the results, we have for all $\ell_0\in [L]$,
    \begin{align*}
        & \Pr_{1}\left((\psi \neq i) \text{ and } (T_i \leq \bar{T}_{i,\ell_0}) \text{ and }\xi_{i,\ell_0}\right)\\
        \geq & \exp\left(-\frac{C_{\ell_0}}{\frac{1}{4\log\frac{1}{d_{\ell_0, (i)}}}}\right) - \exp\left(-\frac{C_{\ell_0}}{32\tilde{H}^{\text{sto}}_{1, \ell_0}}\right)\\
        & - \exp\left(-\frac{C_{\ell_0}}{\tilde{H}^{\text{sto}}_{1, \ell_0}}\right),
    \end{align*}
    which further requires the inequality $\Pr_{1}\left(\neg \xi_{i,\ell_0}\right)\leq \exp\left(-\frac{C_{\ell_0}}{32\tilde{H}^{\text{sto}}_{1, \ell_0}}\right)$. Similar to the proof of Theorem \ref{theorem:lower-bound-fixed-consumption-multiple-resource}, we need to use Lemma \ref{lemma:bound-realized-KL} to prove this inequality.

    \textbf{Step 3.} From the step 1 and step 2, we have concluded $\Pr_i(\psi\neq i) \geq \left(\exp\left(-\frac{C_{\ell_0}}{\frac{1}{4\log\frac{1}{d_{\ell_0, (i)}}}}\right) - \exp\left(-\frac{C_{\ell_0}}{32\tilde{H}^{\text{sto}}_{1, \ell_0}}\right) - \exp\left(-\frac{C_{\ell_0}}{\tilde{H}^{\text{sto}}_{1, \ell_0}}\right)\right)\exp\left(-\frac{3}{4}\frac{C_{\ell_0}}{\tilde{H}^{\text{sto}}_{1, \ell_0}}\right)$.  
    Use the property of $\bar{c}$, we can conclude
    \begin{align*}
        \exp\left(-\frac{C_{\ell_0}}{\frac{1}{4\log\frac{1}{d_{\ell_0, (i)}}}}\right) - \exp\left(-\frac{C_{\ell_0}}{32\tilde{H}^{\text{sto}}_{1, \ell_0}}\right) - \\\exp\left(-\frac{C_{\ell_0}}{\tilde{H}^{\text{sto}}_{1, \ell_0}}\right) \geq \exp\left(-\frac{C_{\ell_0}}{4\tilde{H}^{\text{sto}}_{1, \ell_0}}\right).
    \end{align*}
    And we complete the proof.
\end{proof}

\section{Numeric Experiments}
\label{sec:Numeric-Experiment}
We conducted a performance evaluation of the SH-RR method on both synthetic and real-world problem sets. Our evaluation included a comparison of SH-RR against four established baseline strategies: Anytime-LUCB (AT-LUCB) \citep{jun_anytime_2016}, Upper Confidence Bounds (UCB) \citep{BubeckMS09}, Uniform Sampling, and Sequential Halving \citep{KarninKS13} augmented with the doubling trick. Unlike fixed confidence and fixed budget strategies, these baseline methods are \emph{anytime} algorithms, which recommend an arm as the best arm after each arm pull, continuously, until a specified resource constraint is violated. The evaluation was carried out until a resource constraint was breached, at which point the last recommended arm was returned as the identified arm. Fixed confidence and fixed budget strategies were deemed inapplicable to the BAIwRC problem as they necessitate an upper bound on the BAI failure probability and an upper limit on the number of arm pulls in their respective settings. Further details regarding the experimental set-ups are elaborated in appendix \ref{sec:Details-on-the-Numerical-Experiment-Set-ups}.

\begin{figure}
    \centering
    \begin{subfigure}
      \centering
      \includegraphics[width=0.5\textwidth]{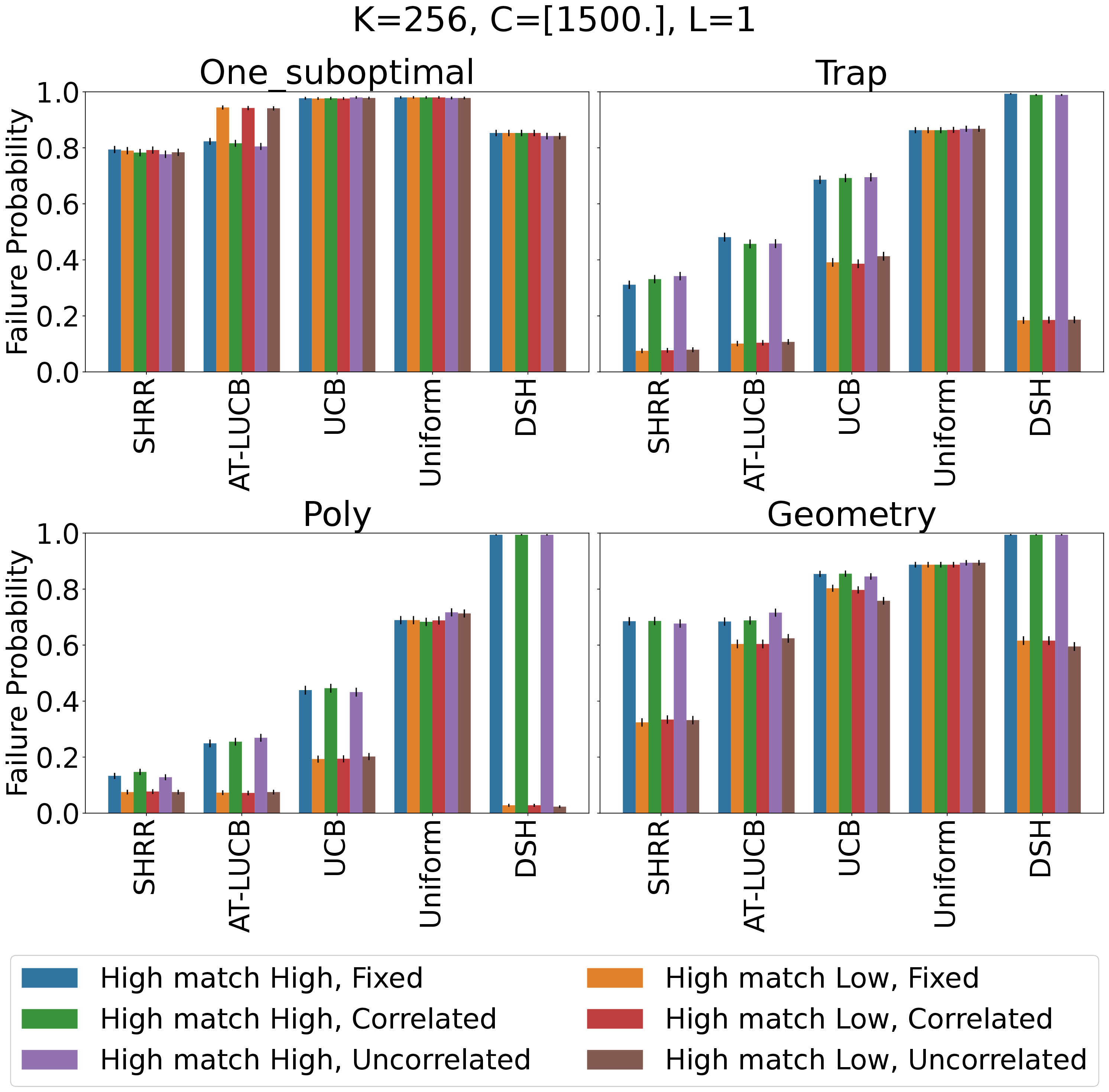}
    \end{subfigure}%
    \begin{subfigure}
      \centering
      \includegraphics[width=0.5\textwidth]{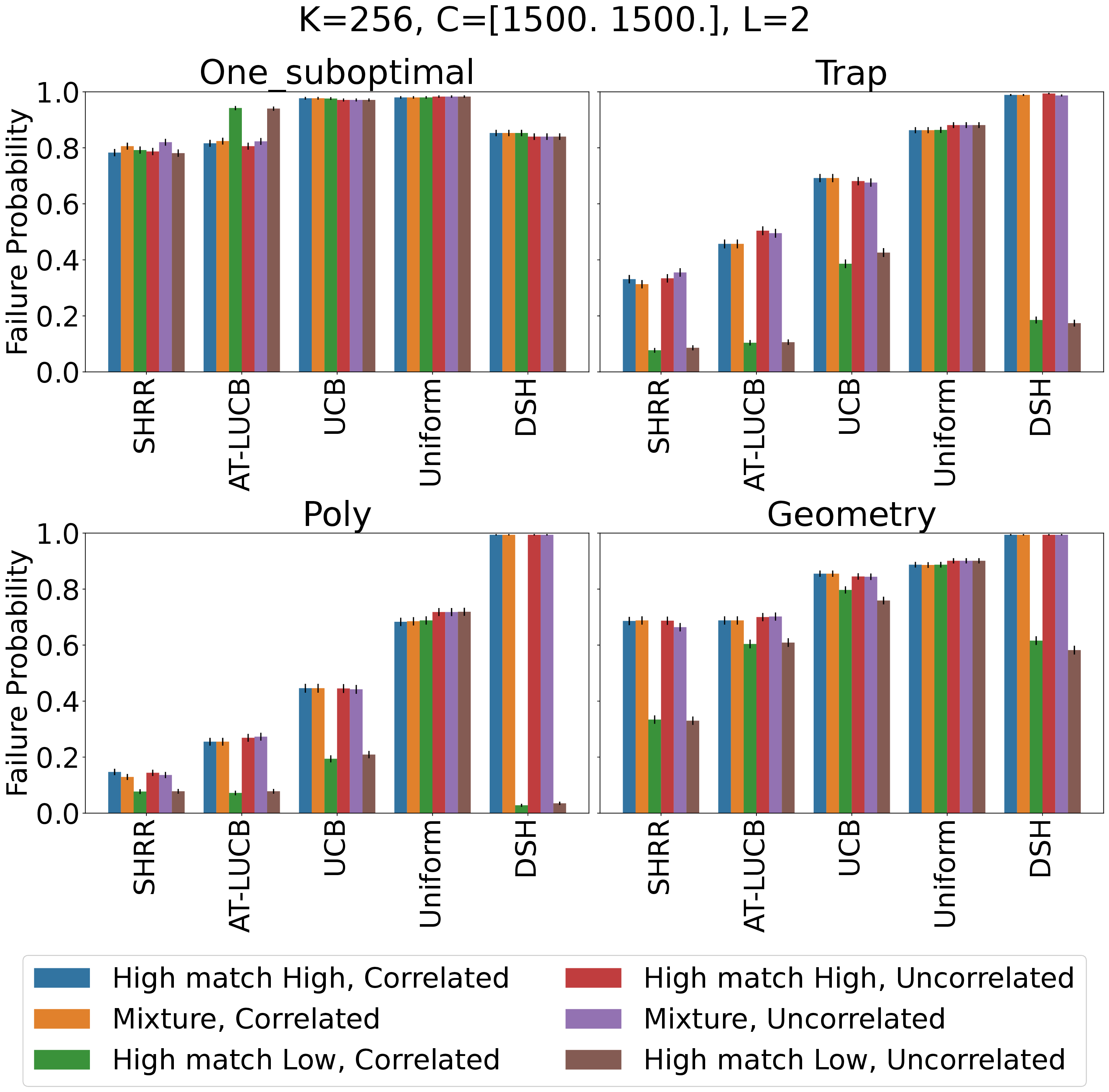}
    \end{subfigure}
    \caption{Comparison of SH-RR and anytime baselines in different setups}
    \label{fig:numeric-result}
\end{figure}

\textbf{Synthesis problems.}
We investigated the performance of our algorithm across various synthetic settings, each with distinct reward and consumption dynamics. (1) \emph{High match High} (\textsf{HmH}) where higher mean rewards correspond to higher mean consumption, (2) \emph{High match Low} (\textsf{HmL}) where they correspond to lower mean consumption, and (3) \emph{Mixture} (\textsf{M}) where each arm consumes less of one resource while consuming more of another, applicable when $L=2$. Additionally, resource consumption variability was categorized into deterministic, correlated (random and correlated with rewards), and uncorrelated (random but independent of rewards), with the deterministic setting omitted for $L=2$ due to similar results to $L=1$. Reward variability across arms was explored through four settings: One Group of Sub-optimal, Trap, Polynomial, and Geometric, analogous to \cite{KarninKS13}. The various combinations of these settings are illustrated in Figure \ref{fig:numeric-result}, with more detailed descriptions provided in Appendices \ref{sec:Details-on-the-Numerical-Experiment-Set-ups-single} and \ref{sec:Details-on-the-Numerical-Experiment-Set-ups-multiple}.

Figure \ref{fig:numeric-result} presents the failure probability of the different strategies in different setups, under {\color{black} $K=256$, $L=1, 2$ with an initial budget of 1500 for each resource}. Each strategy was executed over 1000 independent trials, with the failure probability quantified as {\color{black} $(\text{\# trials that fails BAI}) / 1000$}. Our analysis anticipated a higher difficulty level for the \textsf{HmH} instances, a notion substantiated by the experimental outcomes. The bottom panel illustrates comparable performances on \textsf{M} and \textsf{HmH} setups, suggesting the scarcer resource could predominantly influence the performance. This observed behavior aligns with the utilization of the 
min operator in the definitions of $\gamma^{\text{det}}$ and $\gamma^{\text{sto}}$.

Our proposed algorithm SH-RR is competitive compared to these state-of-the-art benchmarks. SH-RR achieves the best performance by a considerable margin for \textsf{HmL}, while still achieve at least a matching performance compared to the baselines for \textsf{HmH}. SH-RR favors arms with relatively high empirical reward, thus when those arms consume less resources, SH-RR can achieve a higher probability of BAI. For confidence bound based algorithms such as AT-LUCB and UCB, resource-consumption-heavy sub-optimal arms are repeatedly pulled, leading to resource wastage and a higher failure probability. These empirical results demonstrate the necessity of SH-RR algorithm for BAIwRC in order to achieve competitive performance in a variety of settings.  

\textbf{Real-world problems.} We implemented different machine learning models, each adorned with various hyperparameter combinations, as distinct arms. The overarching goal is to employ diverse BAI algorithms to unravel the most efficacious model and hyperparameter ensemble for tackling supervised learning tasks. There is a single constraint on running time for each BAI experiment. To meld simplicity with time-efficiency, we orchestrated implementations of four quintessential, yet straightforward machine learning models: K-Nearest Neighbour, Logistic Regression, Adaboost, and Random Forest. Each model is explored with eight unique hyperparameter configurations. We considered 5 classification tasks, including (1) Classify labels 3 and 8 in part of the MNIST dataset (MNIST 3\&8). (2) Optical recognition of handwritten digits data set (Handwritten). (3) Classify labels -1 and 1 in the MADELON dataset (MADELON). (4) Classify labels -1 and 1 in the Arcene dataset (Arcene). (5) Classify labels on weight conditions in the Obesity dataset (Obesity). See appendix \ref{sec:details-realworld} for details on the set-up.

We designated the arm with the lowest empirical mean cross-entropy, derived from a combination of machine learning models and hyperparameters, as the best arm. Our BAI experiments were conducted across 100 independent trials. During each arm pull in a BAI experiment round—i.e., selecting a machine learning model with a specific hyperparameter combination—we partitioned the datasets randomly into training and testing subsets, maintaining a testing fraction of 0.3. The training subset was utilized to train the machine learning models, and the cross-entropy computed on the testing subset served as the realized reward. 

The results, showcased in Table \ref{tab: Performance-of-BAI-Real-World} and \ref{tab: Performance-of-BAI-Real-World-cont}, delineate the failure probability in identifying the optimal machine learning model and hyperparameter configuration for each BAI algorithm. Amongst the tested algorithms, SH-RR emerged as the superior performer across all experiments. This superior performance can be attributed to two primary factors: (1) classifiers with lower time consumption, such as KNN and Random Forest, yielded lower cross-entropy, mirroring the \textsf{HmL} setting; and (2) the scant randomness in realized Cross-Entropy ensured that after each half-elimination in SH-RR, the best arm was retained, underscoring the algorithm's efficacy.

\begin{table}[h]
\begin{center}
    \caption{Failure Probability of different BAI strategies on Real-life datasets}
    \label{tab: Performance-of-BAI-Real-World}
    \centering
    \begin{tabular}{ cccccc } 
      \toprule
      Algorithm & MNIST 3\&8 & Handwritten
      \\
      \midrule
      SHRR & \textbf{0} & \textbf{0.12}\\
      ATLUCB & 0.21 & 0.23\\  
      UCB & 0.21 & 0.34\\
      Uniform & 0.21  & 0.25\\ 
      DSH & 0.14 &  0.20\\
      \bottomrule
    \end{tabular}
    \end{center}
\end{table}

\begin{table}[h]
\begin{center}
    \caption{Failure Probability of different BAI strategies on Real-life datasets, cont.}
    \label{tab: Performance-of-BAI-Real-World-cont}
    \centering
    \begin{tabular}{ cccccc } 
      \toprule
      Algorithm & Arcene & Obesity  & MADELON
      \\
      \midrule
      SHRR & \textbf{0.38} & \textbf{0.31}  & \textbf{0}\\
      ATLUCB & 0.6 & 0.43  & 0.42\\  
      UCB & 0.71 & 0.43  & 0.30 \\
      Uniform & 0.81 & 0.56   & 0.29\\ 
      DSH & 0.67 & 0.54  & 0.12\\
      \bottomrule
    \end{tabular}
    \end{center}
\end{table}

\section{Conclusion}\label{sec:Conclusion}
We follow the problem formulation and improve the theoretical conclusions in \cite{anonymous2024best}. In the analysis of the upper bound of the failure probability, we define a new structure $H_{2, \ell}(Q)$ that can fit in the nature of both deterministic and stochastic consumption setting. The unified analysis suggests how the resource consumption distribution affect the difficulty of the BAIwRC problem, also indicate the excellency of proposed algorithm SH-RR, in the sense of handling all the BAIwRC problems. In the analysis of the lower bound of the failure probability, we improve the usage of Transportation Equality in \cite{CarpentierL16}, resulting in more relaxing assumptions on the required minimum value of budget $\{C_{\ell}\}_{\ell=1}^L$.

In summary, these extra refinements imply both the versatility and robustness of the proposed approach.

\bibliographystyle{abbrvnat}
\bibliography{arxiv-submission/reference}

\newpage

\appendix

\section{Auxiliary Results}
\begin{lemma}[Bennett]
    \label{lemma:Bennett}
    Let $X_1,X_2,\cdots,X_n$ be independent random variables with finite variance. Take $S_n=\sum_{i=1}^n (X_i-\mathbb{E}X_i)$, $\sigma^2=\text{Var}(S_n)$, $|X_i-\mathbb{E}X_i|\leq a$, $h(u)=(u+1)\log(u+1)-u$, we have
    \begin{align*}
        \Pr\left(S_n > t\right)\leq & \exp\left(-\frac{\sigma^2}{a^2}h(\frac{at}{\sigma^2})\right).
    \end{align*}
\end{lemma}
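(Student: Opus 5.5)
The plan is the standard Chernoff (exponential moment) argument: bound the moment generating function of each centered summand, multiply using independence, and then optimize over the free parameter $\lambda$.

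Write $Y_i = X_i - \mathbb{E}X_i$, so $\mathbb{E}Y_i=0$, $Y_i\le |Y_i|\le a$, and $\sum_i \mathbb{E}Y_i^2 = \text{Var}(S_n)=\sigma^2$ by independence. If $\sigma^2=0$, then $S_n=0$ almost surely and the bound is trivial (read with the convention that the right-hand side is $0$ for $t>0$). So I assume $\sigma^2>0$ and $t>0$.

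\textbf{Step 1 (pointwise inequality).} Let $\phi(x) = (e^x-1-x)/x^2$ for $x\neq 0$, with $\phi(0)=1/2$. I will show that $\phi$ is nondecreasing on $\mathbb{R}$. One route is the series $\phi(x)=\sum_{k\ge 2}x^{k-2}/k!$ for $x\ge 0$ together with a derivative check for $x<0$. A cleaner route is the integral representation $\phi(x)=\int_0^1 (1-s)e^{sx}\,ds$, which is manifestly increasing in $x$. This monotonicity is the only genuinely non-routine ingredient, and the integral representation is the first thing I would try. Fix $\lambda>0$. Since $\lambda Y_i\le \lambda a$, monotonicity gives
\begin{align*}
e^{\lambda Y_i} = 1+\lambda Y_i + \lambda^2 Y_i^2\,\phi(\lambda Y_i) \le 1+\lambda Y_i + Y_i^2\,\frac{e^{\lambda a}-1-\lambda a}{a^2}.
\end{align*}

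\textbf{Step 2 (MGF bound).} Take expectations in Step 1 and use $1+u\le e^u$:
\begin{align*}
\mathbb{E}e^{\lambda Y_i}\le \exp\left(\mathbb{E}Y_i^2\,\frac{e^{\lambda a}-1-\lambda a}{a^2}\right).
\end{align*}
By independence, the product over $i$ gives
\begin{align*}
\mathbb{E}e^{\lambda S_n}\le \exp\left(\frac{\sigma^2}{a^2}(e^{\lambda a}-1-\lambda a)\right).
\end{align*}

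\textbf{Step 3 (Markov and optimization).} Markov's inequality yields
\begin{align*}
\Pr(S_n>t)\le \exp\left(-\lambda t+\frac{\sigma^2}{a^2}(e^{\lambda a}-1-\lambda a)\right).
\end{align*}
Minimizing the exponent over $\lambda>0$ gives $e^{\lambda a}=1+at/\sigma^2$, i.e. $\lambda = \frac{1}{a}\log(1+\frac{at}{\sigma^2})$. Set $u=at/\sigma^2$. Substituting, the exponent becomes
\begin{align*}
-\frac{\sigma^2}{a^2}\Big(u\log(1+u) - u + \log(1+u)\Big) = -\frac{\sigma^2}{a^2}\Big((1+u)\log(1+u)-u\Big) = -\frac{\sigma^2}{a^2}h(u),
\end{align*}
which is exactly the claimed bound.

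All steps other than the monotonicity of $\phi$ are routine algebra. The argument also shows that only the one-sided condition $X_i-\mathbb{E}X_i\le a$ is needed.
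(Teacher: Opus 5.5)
Your proof is correct and complete. The paper does not prove this lemma itself; it cites Exercise 2.2 of Devroye and Lugosi, whose standard solution is exactly your Chernoff/MGF argument built on the monotonicity of $(e^x-1-x)/x^2$.

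Your restriction to $t>0$ is appropriate and in fact necessary. The displayed bound can fail for negative $t$: for example, with $n=1$, $X_1=\pm 1$ equally likely and $a=\sigma^2=1$, at $t=-0.99$ the left side is $1/2$ while the right side is about $0.39$. Also, $h$ is undefined once $at/\sigma^2\le -1$.
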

Lemma \ref{lemma:Bennett} can be found in the exercise 2.2 of \cite{devroye2001combinatorial}.

Based on Lemma \ref{lemma:Bennett}, we can easily derive the following corollary
\begin{corollary}
    \label{corollary:Bennett}
    Let $X_1,X_2,\cdots,X_n$ be independent random variables with finite variance. Take $S_n=\sum_{i=1}^n (X_i-\mathbb{E}X_i)$, $\sigma^2=\text{Var}(S_n)$, $|X_i-\mathbb{E}X_i|\leq a$, we have 
    \begin{align*}
        \Pr\left(S_n > t\right)\leq & \exp\left(-\frac{t\log (\frac{at}{\sigma^2}+1)}{2a}\right).
    \end{align*}
\end{corollary}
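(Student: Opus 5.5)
We need to show $h(u)=(u+1)\log(u+1)-u \ge \frac{u\log(u+1)}{2}$ for all $u\ge 0$. Once this holds, we substitute $u = at/\sigma^2$ into Lemma~\ref{lemma:Bennett}. This gives
\[
\frac{\sigma^2}{a^2}h\Big(\frac{at}{\sigma^2}\Big) \ge \frac{\sigma^2}{a^2}\cdot\frac{at}{2\sigma^2}\log\Big(\frac{at}{\sigma^2}+1\Big) = \frac{t\log(\frac{at}{\sigma^2}+1)}{2a}.
\]
The exponent in Lemma~\ref{lemma:Bennett} is the negative of the left-hand side. Making it more negative only increases the lower bound on its magnitude, so the corollary follows directly. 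If $t\le 0$, the right-hand side is at least $1$ (or the bound is trivial), so we may assume $t>0$, $\sigma^2>0$, $a>0$.

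It remains to prove the elementary inequality. Define
\[
\phi(u) = (u+1)\log(u+1) - u - \tfrac{u}{2}\log(u+1) = \big(1+\tfrac{u}{2}\big)\log(1+u) - u.
\]
Then $\phi(0)=0$, and
\[
\phi'(u) = \tfrac12\log(1+u) + \frac{1+u/2}{1+u} - 1 = \tfrac12\log(1+u) - \frac{u}{2(1+u)}.
\]
We have $\phi'(0)=0$. This derivative is nonnegative because $\log(1+u)\ge \frac{u}{1+u}$ for $u\ge0$, which is a standard inequality. One way to see it: $\psi(u)=\log(1+u)-\frac{u}{1+u}$ satisfies $\psi(0)=0$ and $\psi'(u)=\frac{u}{(1+u)^2}\ge 0$. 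Hence $\phi$ is nondecreasing and $\phi\ge 0$ on $[0,\infty)$, which gives $h(u)\ge \frac{u}{2}\log(1+u)$.

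The only real step is this calculus inequality, and it is routine. The remaining care is the degenerate case $\sigma^2=0$. There $S_n=0$ almost surely, so $\Pr(S_n>t)=0$ for $t>0$. The right-hand side tends to $\exp(-\infty)=0$, so the bound holds trivially under the convention for the limit.
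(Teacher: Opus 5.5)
Your argument is correct for $t>0$ and is essentially the paper's proof. Both reduce via Lemma~\ref{lemma:Bennett} to the inequality $(u+1)\log(u+1)-u\ge \frac{u}{2}\log(u+1)$ on $u\ge 0$. Both then check that the difference vanishes at $u=0$ and has derivative $\frac12\bigl(\log(1+u)-\frac{u}{1+u}\bigr)\ge 0$.

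One side remark is wrong, though. For $t\in(-\sigma^2/a,0)$, both $t$ and $\log(1+at/\sigma^2)$ are negative, so the right-hand side is strictly below $1$, not at least $1$. The bound can in fact fail there: a single Rademacher variable with $a=\sigma^2=1$ and $t=-0.99$ gives $\Pr(S_n>t)=\frac12$ against a bound of about $0.10$. So the right justification for assuming $t>0$ is that the corollary, like Bennett's inequality itself, is only valid in general for $t\ge 0$, which is how the paper implicitly reads it.
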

\begin{proof}{Proof of Corollary \ref{corollary:Bennett}}
    Suffice to prove $(u+1)\log(u+1)-u\geq \frac{u\log(u+1)}{2}$ holds for any $u\geq 0$. If this is true, we have
    \begin{align*}
        & \exp\left(-\frac{\sigma^2}{a^2}h(\frac{at}{\sigma^2})\right)\\
        \leq & \exp\left(-\frac{\sigma^2}{a^2}\frac{\frac{at}{\sigma^2}\log (\frac{at}{\sigma^2}+1)}{2}\right)\\
        = & \exp\left(-\frac{t\log (\frac{at}{\sigma^2}+1)}{2a}\right).
    \end{align*}
    By the Bennett's Lemma, the proof is done.

    Then we turn to prove $(u+1)\log(u+1)-u\geq \frac{u\log(u+1)}{2}$ holds for any $u\geq 0$. Define $\psi(u)=(u+1)\log(u+1)-u-\frac{u\log(u+1)}{2}$, easy to see
    \begin{align*}
        & \frac{d\psi}{du}\\
        = & 1 + \log(u+1) - 1 -\frac{1}{2}\left(\frac{u}{u+1}+\log(u+1)\right)\\
        = & \frac{1}{2}\log(u+1)-\frac{1}{2}\frac{u}{u+1}\\
        = & \frac{1}{2}\left(\log(u+1)+\frac{1}{u+1}-1\right).
    \end{align*}
    Easy to validate $\log(u+1)+\frac{1}{u+1}-1\geq 0$ holds for all $u>0$, indicating that $\psi(u)$ is increasing in the interval $[0, +\infty)$. Since $\psi(0)=0$, we can conclude $(u+1)\log(u+1)-u\geq \frac{u\log(u+1)}{2}$.
\end{proof}

\section{Proofs}
\subsection{Proof of Claim \ref{claim:feasibility}}\label{app:claim_feasibility}
\begin{proof}{Proof of Claim \ref{claim:feasibility}}
    The total type-$\ell$ resource consumption is
    \begin{align*}
        &\sum^{\lceil \log_2 K\rceil -1}_{q = 0} I^{(q)}_\ell\nonumber\\
         =  &\sum^{\lceil \log_2 K\rceil -1}_{q = 0} \left[ \frac{C_\ell}{\lceil\log_2 K\rceil} + ( \textsf{Ration}^{(q)}_\ell- \textsf{Ration}^{(q+1)}_\ell  )\right]\\
        = & C_\ell + ( \textsf{Ration}^{(0)}_\ell- \textsf{Ration}^{(\lceil\log_2 K\rceil)}_\ell  ).
    \end{align*}
    We complete the proof by showing that  $\textsf{Ration}^{(0)}_\ell = \frac{C_\ell}{\lceil\log_2 K\rceil}\leq \textsf{Ration}^{(q)}_\ell$ with certainty for every $q$. Indeed, with certainty we have $I^{(q-1)}_\ell \leq \textsf{Ration}^{(q-1)}_\ell$ for every $q \geq 1$. The \textbf{while} loop maintains that $I^{(q - 1)}\leq \textsf{Ration}^{(q-1)}_\ell - 1$, which ensures that $I^{(q - 1)}\leq \textsf{Ration}^{(q-1)}_\ell$ when the \textbf{while} loop ends, and consequently $\frac{C_\ell}{\lceil\log_2 K\rceil}\leq \textsf{Ration}^{(q)}_\ell$ by Line \ref{alg:ration}. Altogether, the claim is shown.
\end{proof}

\subsection{Proof of Theorem \ref{theorem:upper-bound-of-failure} and Theorem \ref{theorem:upper-bound-of-failure-det}}
\label{appendix:theorem_upper}
In the case that the mean consumptions are deterministic, we can improve the Theorem \ref{theorem:upper-bound-of-failure} by removing the factor $L$, which is the following.
\begin{theorem}
    \label{theorem:upper-bound-of-failure-det}
    Consider a BAIwRC instance $Q$ in the deterministic consumption setting, i.e $\sigma=0$. The SH-RR algorithm has BAI failure probability $\Pr(\psi \neq 1)$ at most
    \begin{align}\label{eq:upper-bound-of-failure-det}
        K (\log_2 K) \exp\left(-\frac{1}{4}\min_{\ell \in[L]} \{ \frac{C_{\ell}}{\lceil\log_2 K\rceil H_{2, \ell}^{\text{det}}}\}\right)
    \end{align}
    where  $\gamma^{\text{det}}(Q) = \min_{\ell\in [L]}\{C_\ell / H_{2, \ell}^{\text{det}}(Q)\}$, and $H_{2, \ell}^{\text{det}}(Q)$ is defined in (\ref{eq:det_H}).
\end{theorem}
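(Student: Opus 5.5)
The plan is to follow the successive-halving template of \cite{KarninKS13}, with one simplification: in the deterministic setting the number of pulls in each phase is a deterministic quantity, so the resource-concentration term $(\ddagger)$ from the sketch of Theorem \ref{theorem:upper-bound-of-failure} vanishes. That term is the only source of the factor $L$.

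\textbf{Step 1: a deterministic lower bound on the pulls per arm.} Fix a phase $q$ and write $S = |\tilde{S}^{(q)}| \le \lceil K/2^q\rceil$. By Claim \ref{claim:feasibility} and Line \ref{alg:ration}, $\textsf{Ration}^{(q)}_\ell \ge C_\ell/\lceil\log_2 K\rceil$. The \textbf{while} loop stops only when some resource $\ell$ has $I^{(q)}_\ell > \textsf{Ration}^{(q)}_\ell - 1$. Under round robin, each completed cycle over $\tilde{S}^{(q)}$ consumes exactly $\sum_{k\in\tilde{S}^{(q)}} d_{\ell,k} \le \sum_{a=1}^{S} d_{\ell,(a)}$ units of resource $\ell$. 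Hence every surviving arm is pulled at least
\[
T^{(q)} \;\ge\; \min_{\ell}\left\lfloor \frac{C_\ell/\lceil\log_2 K\rceil - 1}{\sum_{a=1}^{\lceil K/2^q\rceil} d_{\ell,(a)}}\right\rfloor
\]
times in phase $q$. Since every consumption is at most $1$, the losses from the $-1$ and from partial cycles should be absorbable into the constant $1/4$, for instance by comparing against $\beta^{(q)}_\ell$ with $f$ replaced by $d$ and using $\sum_a d_{\ell,(a)}\le S$. I expect this rounding bookkeeping to be the main obstacle: the stated bound has no slack outside the exponential, so I will first try to prove $\bar T^{(q)} \ge T^{(q)} \ge \tfrac12 \bar\beta^{(q)}$ with $\bar\beta^{(q)} = \min_\ell C_\ell/\bigl(\lceil\log_2 K\rceil \sum_{a\le \lceil K/2^q\rceil} d_{\ell,(a)}\bigr)$. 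If the bound is trivial, i.e.\ the right side of (\ref{eq:upper-bound-of-failure-det}) is at least $1$, the claim holds automatically, which may handle the small-budget cases.

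\textbf{Step 2: per-phase elimination bound.} Because the sample counts are now deterministic, the empirical means in (\ref{eq:emp_mean}) are averages of a fixed number of i.i.d.\ 1-sub-Gaussian rewards. Reward--consumption correlation is harmless here, since deterministic consumption carries no information. For any arm $i$, $\hat r_i - \hat r_1$ is sub-Gaussian with variance proxy $2/\bar T^{(q)}$, so
\[
\Pr(\hat r_i > \hat r_1) \le \exp\bigl(-\bar T^{(q)}\Delta_i^2/4\bigr).
\]
Arm 1 is eliminated in phase $q$ only if at least $\lceil S/2\rceil$ arms beat it. Following \cite{KarninKS13}, at least $S/4$ of those arms have index $\ge k_q := \lceil K/2^{q+2}\rceil$. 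Markov's inequality on the count of such arms then gives a failure probability for phase $q$ of at most $3\max_{i\ge k_q}\exp(-\bar T^{(q)}\Delta_i^2/4)$; a plain union bound, costing at most a factor $K$, is also acceptable.

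\textbf{Step 3: relate the exponent to $H_{2,\ell}^{\text{det}}$ and sum over phases.} By definition (\ref{eq:det_H}), for $k = \lceil K/2^q\rceil$ we have $\sum_{a\le k} d_{\ell,(a)} \le H^{\text{det}}_{2,\ell}\Delta_k^2$. The index shift between $\lceil K/2^q\rceil$ and $\lceil K/2^{q+2}\rceil$ is handled using monotonicity of $\Delta$ and the fact that $d_{\ell,(a)}$ is non-increasing, so $\sum_{a\le 4m} d_{\ell,(a)}\le 4\sum_{a\le m} d_{\ell,(a)}$. Alternatively, I would use $q$-dependent thresholds exactly as in the sketch of Theorem \ref{theorem:upper-bound-of-failure}, where the elimination term involves $r_1 - r_{\lceil K/2^{q+1}\rceil}$ and the bound is $K\exp(-\bar\beta^{(q)}\Delta^2/2)$. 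Combining this with Step 1 yields
\[
\bar T^{(q)}\Delta_{\lceil K/2^{q+1}\rceil}^2 \ge \tfrac{1}{2}\min_\ell \frac{C_\ell}{\lceil\log_2 K\rceil H^{\text{det}}_{2,\ell}},
\]
so each phase fails with probability at most $K\exp\bigl(-\tfrac14\gamma^{\text{det}}(Q)/\lceil\log_2 K\rceil\bigr)$. A union bound over the at most $\log_2 K$ phases then gives (\ref{eq:upper-bound-of-failure-det}), with no factor $L$ because there is no $(\ddagger)$ term.
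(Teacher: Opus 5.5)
You follow the same route as the paper. In the deterministic case the term $(\ddagger)$ vanishes, which is why there is no factor $L$. Arm $1$ is compared pairwise with every arm of index at least $\lceil K/2^{q+1}\rceil$. The sum up to $\lceil K/2^{q}\rceil$ is related to $H^{\text{det}}_{2,\ell}$ at the cost of a factor $2$, and a union bound over phases finishes.

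The gap is that your Step 3 does not follow from your Steps 1--2. The display $\bar T^{(q)}\Delta^2_{\lceil K/2^{q+1}\rceil}\ge\frac12\min_\ell C_\ell/(\lceil\log_2K\rceil H^{\text{det}}_{2,\ell})$ needs $\bar T^{(q)}\ge\bar\beta^{(q)}$, but Step 1 only targets $\frac12\bar\beta^{(q)}$. Turning that display into $\exp(-\frac14\cdots)$ also needs a pairwise exponent $\bar T\Delta^2/2$. You import this from the sketch of Theorem \ref{theorem:upper-bound-of-failure}, although your own Step 2 correctly gives $\bar T\Delta_i^2/4$. Chained consistently, your argument yields $\frac1{16}$ in place of $\frac14$, and only $\frac18$ even if $\bar T^{(q)}\ge\bar\beta^{(q)}$ were granted.

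The paper reaches $\frac14$ through exactly these two points. It asserts $\Pr(\bar T^{(q)}<\bar\beta^{(q)})=0$ for deterministic consumption. It also treats $W(n)=R_k(n)-R_1(n)+\Delta_k$ as 1-sub-Gaussian, although a difference of two independent 1-sub-Gaussian rewards has variance proxy $2$. You were right on both counts: rounding is the real obstacle, and $/4$ is the correct exponent.

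Neither point can be repaired, and your proposed fixes for Step 1 do not work. The $-1$ in the \textbf{while} condition only guarantees that the binding resource consumes more than $C_\ell/\lceil\log_2K\rceil-1$ in a phase. So the per-arm count can be as small as about $(1-\lceil\log_2K\rceil/C_\ell)\bar\beta^{(q)}$, and if $C_\ell<\lceil\log_2K\rceil$, phase $0$ makes no pulls at all. This multiplicative loss lands in the exponent, where the statement has no slack. For $K=2$ the claimed exponent $\frac14 C_1/H^{\text{det}}_{2,1}$ is exactly the Gaussian exponent for $\bar\beta^{(0)}$ pulls per arm. Your inequality $\sum_a d_{\ell,(a)}\le S$ bounds the lost pulls $1/\sum_{k\in\tilde S^{(q)}}d_{\ell,k}$ only from below. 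The trivial-bound escape also fails, because $C_\ell/H^{\text{det}}_{2,\ell}$ is large whenever the $d$'s are small, whatever $C_\ell$ is.

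In fact the statement is false as written. Take $K=2$, $L=1$, $C_1=10$, $d_{1,1}=d_{1,2}=10^{-4}$, and rewards $\mathcal N(0.5,1)$ and $\mathcal N(0.4,1)$. SH-RR stops once consumption exceeds $9$, after $90001$ pulls. So $\hat r_1-\hat r_2$ is Gaussian with mean $0.1$ and standard deviation $\approx 1/150$, and the failure probability is $\Pr(Z>15)\approx e^{-116}$ for standard normal $Z$. But $H^{\text{det}}_{2,1}=0.02$, so the claimed bound is $2e^{-125}$.

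What your route does give is a bound of the same shape with worse absolute constants, under an extra assumption such as $C_\ell\ge 2\lceil\log_2K\rceil$. That assumption yields more than $\frac12\bar\beta^{(q)}-1$ pulls per arm in phase $q$. Two smaller repairs are also needed:
\begin{itemize}
\item The counts are not deterministic, since the surviving set and the carried-over ration depend on past rewards. Step 2 therefore needs a maximal inequality over all counts above the deterministic lower bound, as in the paper, rather than a fixed-sample tail bound.
\item There are $\lceil\log_2 K\rceil$ phases, not at most $\log_2K$.
\end{itemize}
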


Before the proof, we need a lemma to bound the pulling times of arms. 
\begin{lemma}\label{lemma:bound_count}
For $a=1,2,\cdots,K$, $K\geq 1$, let $\{X_{a,s}\}_{s=1}^{\infty}$ be i.i.d random variable following $\nu_a$. Assume $\nu_a\in\mathscr{C}_{b, \sigma^2}=\left\{\nu:\text{supp}(\nu)\subset [0, 1],\Pr_{D\sim \nu}(|D-\mathbb{E}D|\leq b)=1,\text{Var}(\nu)\leq \sigma^2\right\}$ holds for all $a\in[K]$. For $a=1,2,\cdots,K$, denote $\mathbb{E}X_{a,s}=d_a$. Then for any positive value $C$, denote $T=\frac{C}{\sum_{a=1}^K f(b,\sigma, d_a)}$, where the function $f$ is defined in (\ref{eq:def_f}).   we have
\begin{align*}
    \Pr\left(\sum_{s=1}^{T} \sum_{a=1}^K X_{a,s} > C\right)
    \leq \exp\left(-T\right),
\end{align*}
\end{lemma}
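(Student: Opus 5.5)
The natural route is a single application of Corollary \ref{corollary:Bennett} to the centered sum of all the variables $X_{a,s}$, $a\in[K]$, $s\le T$. If $T$ is not an integer, I read the sum as running over $s\le\lfloor T\rfloor$, so there are $n\le KT$ independent summands.

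\textbf{Step 1 (set up Bennett).} The mean of the sum is at most $T\sum_a d_a$. The event $\{\sum_s\sum_a X_{a,s}>C\}$ is therefore contained in $\{S_n>t\}$, where
\[
t=C-T\sum_a d_a=T\sum_a\bigl(f(b,\sigma,d_a)-d_a\bigr)=\frac{4bKT}{\log(4b^2/\sigma^2+1)}.
\]
Each summand satisfies $|X_{a,s}-d_a|\le b$, so I take $a=b$ in the corollary. The variance satisfies $\mathrm{Var}(S_n)\le KT\sigma^2$. The corollary's bound $\exp\!\bigl(-t\log(bt/\mathrm{Var}+1)/(2b)\bigr)$ is increasing in the variance, so I may replace $\mathrm{Var}(S_n)$ by $KT\sigma^2$.

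The degenerate case $\sigma=0$ is handled separately. Then each $X_{a,s}$ is a.s.\ equal to $d_a$, so $f(b,0,d_a)=d_a$ and the sum is at most $T\sum_a d_a=C$. The probability is therefore $0$.

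\textbf{Step 2 (evaluate the exponent).} Put $x=4b^2/\sigma^2\ge4$, which holds because $\sigma^2\le b^2$. Then
\[
\frac{bt}{KT\sigma^2}=\frac{x}{\log(x+1)}.
\]
The bound from Step 1 becomes
\[
\exp\!\left(-\frac{2KT}{\log(x+1)}\,\log\!\Bigl(\frac{x}{\log(x+1)}+1\Bigr)\right).
\]
So it suffices to show $\log\bigl(x/\log(x+1)+1\bigr)\ge\tfrac12\log(x+1)$. That inequality makes the exponent at least $KT\ge T$, which finishes the proof.

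\textbf{Step 3 (the elementary inequality, the only real obstacle).} We need $x/\log(x+1)+1\ge\sqrt{x+1}$. Set $y=\sqrt{x+1}\ge1$, so that $x=(y-1)(y+1)$ and $\log(x+1)=2\log y$. The claim becomes $(y-1)(y+1)\ge(y-1)\cdot2\log y$, which reduces to $y+1\ge2\log y$. This holds for all $y>0$, since $\log y\le y-1$ gives $2\log y\le 2y-2<y+1$ whenever $y<3$, and for $y\ge3$ we have $2\log y\le y-1<y+1$ because $\log y\le (y-1)/2$ there. With this, the chain of inequalities closes.
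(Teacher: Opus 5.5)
Your proposal is correct and follows the paper's proof essentially step for step. Like the paper, you apply Corollary~\ref{corollary:Bennett} once, with $t=C-T\sum_a d_a=4bKT/\log(x+1)$, deviation bound $b$ and variance proxy $KT\sigma^2$, which reduces the lemma to $\log\bigl(x/\log(x+1)+1\bigr)\ge\tfrac12\log(x+1)$ for $x=4b^2/\sigma^2$.

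Your substitution $y=\sqrt{x+1}$ verifies that inequality more cleanly than the paper's detour through $(\log(x+1))^2\le x+\log(x+1)$. It also works for every $x>0$, so the assumption $\sigma^2\le b^2$ is not actually needed.

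Your case split has a slip: $\log y\le (y-1)/2$ fails on $[3,3.5]$ (e.g.\ $\log 3>1$). The conclusion $y+1\ge 2\log y$ still holds for all $y>0$, because $y+1-2\log y$ is minimized at $y=2$ with value $3-2\log 2>0$.
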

Lemma \ref{lemma:bound_count} assumes $T$ is an integer, as the rational issue will only lead to an extra constant factor.

Denote $T^{(q)}$ as the pulling times of each survival arm at phase $q$, and $\bar{T}^{(q)} = T^{(1)} + \ldots + T^{(q)}$ as the total pulling times of each survival arm at the end of phase $q$. Define $E_q:=\left\{i: \hat{r}_{i,\bar{T}^{(q)}}> \hat{r}_{1, \bar{T}^{(q)} }\right\}$ and the bad event
\begin{equation}
B^{(q)}=\left\{ |E_q| \ge \lceil\frac{K}{2^{q+1}}\rceil\right\}.
\end{equation}
We assert that, for any phase $q$, 
\begin{align}
    &\Pr\left(B^{(q)}\right)\leq  2LK \exp\left(-\frac{1}{4}\min_{\ell \in[L]} \{ \frac{C_{\ell}}{\lceil\log_2 K\rceil H_{2, \ell}^{}}\}\right).\label{eq:upper_main_assert}
\end{align}
Remark $\{\hat{k}=1\}\supset \cap_{q=1}^{\log_2 K} \neg B^{(q)}$. Once (\ref{eq:upper_main_assert}) is shown, the Theorem can be proved by a union bound over all phases:
\begin{align*}
    &\Pr(\hat{k}\neq 1)\\
    \le &\Pr(\cup_{q=1}^{\log_2 K} B^{(q)})\\
    \le &\sum_{q=1}^{\log_2 K} \Pr(B^{(q)})\\
    \le &2LK (\log_2 K) \exp\left(-\frac{1}{4}\min_{\ell \in[L]} \{ \frac{C_{\ell}}{\lceil\log_2 K\rceil H_{2, \ell}^{}}\}\right)
\end{align*}
In what follows, we establish the main claim (\ref{eq:upper_main_assert}). For our analysis, we define
$\beta^{(q)}_\ell:=\frac{C_{\ell}}{ \lceil \log_2 K\rceil\left(\sum_{k=1}^{\lceil \frac{K}{2^q} \rceil} f(b_{\ell}, \sigma_{\ell}, d_{\ell, (k)}) \right)}$ 
and $\bar{\beta}^{(q)} = \min_{\ell\in [L]}\{\beta^{(q)}_\ell\}$, then we can split $\Pr\left(B^{(q)}\right)$ into two parts.
\begin{align*}
    &\Pr(B^{(q)})\\
    \le & \Pr\left(\exists k \ge \lceil\frac{K}{2^{q+1}}\rceil, \hat{r}_{k, \bar{T}^{(q)}} > \hat{r}_{1, \bar{T}^{(q)}}\right)\\
    \le &\underbrace{ \Pr\left(\exists k \ge \lceil\frac{K}{2^{q+1}}\rceil, \hat{r}_{k, \bar{T}^{(q)}} > \hat{r}_{1, \bar{T}^{(q)}}, \bar{T}^{(q)} \ge \bar{\beta}^{(q)} \right) }_{(\P)}\\
    &\qquad \qquad \qquad \qquad+ \underbrace{ \Pr\left(\bar{T}^{(q)} < \bar{\beta}^{(q)}\right) }_{(\ddagger)}
\end{align*}
To facilitate our discussions, we denote $\{\tilde{D}^{(q)}_{\ell, k}(n)\}^\infty_{n=1}$ as i.i.d. samples of the random consumption of resource $\ell$ by pulling arm $k$. For the term $(\ddagger)$, we have
\begin{align}
    & \Pr\left(\bar{T}^{(q)} < \bar{\beta}^{(q)} \mid \tilde{S}^{(q)} \right)\nonumber \\
    \leq & \Pr\left(T^{(q)} < \bar{\beta}^{(q)} \mid \tilde{S}^{(q)} \right)\nonumber\\
    \leq & \Pr\Bigg(\exists \ell\in [L], \sum_{n=1}^{\beta^{(q)}_\ell}\sum_{k\in \tilde{S}^{(q)}} \tilde{D}^{(q)}_{\ell, k}(n) \geq \frac{C_{\ell}}{\lceil \log_2 K\rceil}\mid \tilde{S}^{(q)}\Bigg)\label{eq:by_ration}\\
    \leq & \sum_{\ell=1}^L \Pr\Bigg(\sum_{n=1}^{\beta^{(q)}_\ell}\sum_{k\in \tilde{S}^{(q)}} \tilde{D}^{(q)}_{\ell, k}(n) \geq \frac{C_{\ell}}{\lceil \log_2 K\rceil}\mid \tilde{S}^{(q)}\Bigg)\nonumber\\
    \leq & \sum_{\ell=1}^L \Pr\Bigg(\sum_{n=1}^{\frac{C_\ell}{ \lceil \log_2 K\rceil \cdot \sum_{k\in \tilde{S}^{(q)}} f(b_{\ell},\sigma_{\ell},d_{\ell, k})}}\sum_{k\in \tilde{S}^{(q)}} \tilde{D}^{(q)}_{\ell, k}(n) \geq \frac{C_{\ell}}{\lceil \log_2 K\rceil}\mid \tilde{S}^{(q)}\Bigg)\nonumber\\
    \leq & \sum_{\ell=1}^L\exp\left(-\frac{C_\ell}{ \lceil \log_2 K\rceil \cdot \sum_{k\in \tilde{S}^{(q)}} f(b_{\ell},\sigma_{\ell},d_{\ell, k})} \right)\label{eq:by_lemma_4_1}\\
    \leq & L\exp\left(-\bar{\beta}^{(q)}\right) \nonumber
\end{align}
Step (\ref{eq:by_ration}) is by the invariance $\textsf{Ration}_\ell^{(q)} \geq \frac{C_\ell}{\lceil \log_2 K\rceil}$ maintained by the \textbf{while} loop of SH-RR. Step (\ref{eq:by_lemma_4_1}) is by applying Lemma \ref{lemma:bound_count}. 

Next, we analyze the term $(\P)$, we denote $$(\dagger)^{(q)}_k=\Pr\left(\hat{r}_{1, \bar{T}^{(q)}}^{(q)}<\hat{r}_{k, \bar{T}^{(q)}}^{(q)},\bar{T}^{(q)}>\bar{\beta}^{(q)}\right).$$ 
We remark that $(\P) \leq  \sum_{k= \lceil\frac{K}{2^{q+1}}\rceil}^K (\dagger)^{(q)}.$ Let $\{R^{(q)}_{k}(n)\}^\infty_{n=1},\{R^{(q)}_{1,n}\}^\infty_{n=1}$ be i.i.d. samples of the rewards under arm $k$ and arm 1 respectively. For each $n$, we define $W(n)=R^{(q)}_{k}(n)-R^{(q)}_{1}(n)+\Delta_k$, where we recall that $\Delta_k = r_1 - r_k$. Clearly, $\mathbb{E}[W(n)]=0$, and $W(n)$ are i.i.d. 1-sub-Gaussian. For any $\lambda> 0$, we have
\begin{align}
    &(\dagger)^{(q)}_k \nonumber\\
    \leq &\Pr\left(\exists N \geq \bar{\beta}^{(q)}, \frac{\sum_{n=1}^N G_n}{N}>\Delta_k\right)\nonumber\\
    =&\Pr\left(\sup_{N\ge \bar{\beta}^{(q)}} \frac{\exp(\lambda \sum_{n=1}^N G_n)}{\exp(N\lambda \Delta_k)} > 1\right) \nonumber\\
    \le& \mathbb{E}\left[\frac{\exp\left(\lambda\sum_{n=1}^{\lceil \bar{\beta}^{(q)}\rceil } W(n)\right)}{\exp\left(\lambda \lceil \bar{\beta}^{(q)}\rceil \Delta_k\right)}  \right]\label{eq:doob}\\
    \leq &\frac{\exp\left(\frac{\lambda^2 \lceil \bar{\beta}^{(q)}\rceil}{2}\right)}{\exp\left(\lambda \lceil \bar{\beta}^{(q)}\rceil \Delta_k\right)}\label{eq:rew-subgau}.
\end{align}
Step (\ref{eq:doob}) is by the maximal inequality for (super)-martingale, which is a Corollary of the Doob's optional stopping Theorem, see Theorem 3.9 in \cite{LattimoreS2020} for example. Step (\ref{eq:rew-subgau}) is by the fact that $G(n)$ is 1-sub-Gaussian. Finally, applying $\lambda = \Delta_k$, (\ref{eq:rew-subgau}) leads us to
$
(\dagger)^{(q)}_k \leq \exp(- \bar{\beta}^{(q)}\Delta_k^2 / 2), 
$, meaning
\begin{align}
(\P) &\leq K \exp\left(-\bar{\beta}^{(q)} \frac{(r_1-r_{ \lceil\frac{K}{2^{q+1}}\rceil })^2}{2}\right)\label{eq:final_P}.
\end{align}
Step (\ref{eq:final_P}) is by the assumption that $\Delta_k$ is not decreasing. Altogether, combining the upper bounds (\ref{eq:by_lemma_4_1},\ref{eq:final_P}) to $(\ddagger) , (\P)$ respectively, leads us to the proof of (\ref{eq:upper_main_assert}).
\begin{align*}
    &\Pr(B^{(q)})\\
    \le & \sum_{k= \lceil\frac{K}{2^{q+1}}\rceil}^K \Pr\left(\hat{r}_{k, \tilde{T}_q} > \hat{r}_{1, \tilde{T}_q}, \bar{T}_q \ge \bar{\beta}^{(q)}\right)+L\exp\left(-\bar{\beta}^{(q)}\right) \\
    \leq & K \exp\left(-\bar{\beta}^{(q)}\cdot \frac{(r_1-r_{\lceil\frac{K}{2^{q+1}}\rceil})^2}{2}\right) + L\exp\left(-\bar{\beta}^{(q)}\right)\\
    \leq & 2LK \exp\left(-\frac{1}{2}\min_{\ell \in[L]} \{ \frac{C_{\ell}}{2\lceil\log_2 K\rceil H_{2, \ell}^{}}\}\right)\\
    = & 2LK \exp\left(-\frac{1}{4}\min_{\ell \in[L]} \{ \frac{C_{\ell}}{\lceil\log_2 K\rceil H_{2, \ell}^{}}\}\right).
\end{align*}

To prove Theorem \ref{theorem:upper-bound-of-failure-det}, we follow the similar idea. Recall that when the consumptions are deterministic, we have
$\beta^{(q)}_\ell:=\frac{C_{\ell}}{ \lceil \log_2 K\rceil\left(\sum_{k=1}^{\lceil \frac{K}{2^q} \rceil} f(b_{\ell}, \sigma_{\ell}, d_{\ell, (k)}) \right)}=\frac{C_{\ell}}{ \lceil \log_2 K\rceil \sum_{k=1}^{\lceil \frac{K}{2^q} \rceil} d_{\ell, (k)}}$. Then, we can conclude $\Pr\left(\bar{T}^{(q)} < \bar{\beta}^{(q)}\right)=0$. Following the same steps as above, we have
\begin{align*}
    & \Pr(B^{(q)})
    \le \Pr\left(\exists k \ge \lceil\frac{K}{2^{q+1}}\rceil, \hat{r}_{k, \bar{T}^{(q)}} > \hat{r}_{1, \bar{T}^{(q)}}, \bar{T}^{(q)} \ge \bar{\beta}^{(q)} \right)\\
    & \Pr\left(\hat{r}_{1, \bar{T}^{(q)}}^{(q)}<\hat{r}_{k, \bar{T}^{(q)}}^{(q)},\bar{T}^{(q)}>\bar{\beta}^{(q)}\right)\leq  \exp(- \bar{\beta}^{(q)}\Delta_k^2 / 2).
\end{align*}
Combining the conclusions, we have
\begin{align*}
    &\Pr(\hat{k}\neq 1)\\
    \le &\Pr(\cup_{q=1}^{\log_2 K} B^{(q)})\\
    \le &\sum_{q=1}^{\log_2 K} \Pr(B^{(q)})\\
    \le & K (\log_2 K) \exp\left(-\frac{1}{4}\min_{\ell \in[L]} \{ \frac{C_{\ell}}{\lceil\log_2 K\rceil H_{2, \ell}^{\text{det}}}\}\right),
\end{align*}
meaning that we have proved Theorem \ref{theorem:upper-bound-of-failure-det}.

\subsection{Proof of Lemma \ref{lemma:bound_count}}
\begin{proof}{Proof of Lemma \ref{lemma:bound_count}}
    Apply some simple calculation, we have
    \begin{align*}
         & \Pr\left(\sum_{s=1}^T \sum_{a=1}^K X_{a,s} > C\right)\\
         = & \Pr\left(\sum_{s=1}^T\sum_{a=1}^K (X_{a,s}-d_a) > C-T\sum_{a=1}^Kd_a\right).
    \end{align*}
    By the Corollary \ref{corollary:Bennett}, we have
    \begin{align*}
        & \Pr\left(\sum_{s=1}^T \sum_{a=1}^K X_{a,s} > C\right)\\
        \leq & \exp\left(-\frac{C-T\sum_{a=1}^Kd_a}{b}\cdot \frac{\log(\frac{b(C-T\sum_{a=1}^Kd_a)}{TK\sigma^2}+1)}{2}\right)\\
        = & \exp\Bigg(-2TK (\frac{C}{4bKT}-\frac{\sum_{a=1}^Kd_a}{4 K b})\log(\frac{\frac{C}{4bKT}-\frac{\sum_{a=1}^Kd_a}{4bK}}{\frac{\sigma^2}{4b^2}}+1)\Bigg).
    \end{align*}
    Recall $T=\frac{C}{\frac{4Kb}{\log (\frac{4b^2}{\sigma^2}+1)}+\sum_{a=1}^Kd_a}\Leftrightarrow \frac{C}{4bKT}-\frac{\sum_{a=1}^Kd_a}{4 K b}=\frac{1}{\log (\frac{4b^2}{\sigma^2}+1)}$, we can conclude
    \begin{align*}
        & K (\frac{C}{4bKT}-\frac{\sum_{a=1}^Kd_a}{4 K b})\cdot \log(\frac{\frac{C}{4bKT}-\frac{\sum_{a=1}^Kd_a}{4bK}}{\frac{\sigma^2}{4b^2}}+1)\\
        = & K \cdot \frac{1}{\log (\frac{4b^2}{\sigma^2}+1)} \cdot \log\left(\frac{\frac{1}{\log (\frac{4b^2}{\sigma^2}+1)}}{\frac{\sigma^2}{4b^2}}+ 1\right)\\
        \geq & \frac{K}{2}.
    \end{align*}
    The last step is by the following facts. By the definition, we have $\frac{4b^2}{\sigma^2}\geq 4$. And $x\geq 4$ guarantees $\frac{\log\left(\frac{x}{\log(x+1)}+1\right)}{\log(x+1)}\geq \frac{\log\left(\frac{x}{\log(x+1)}+1\right)}{\log(x+\log(x+1))}=1-\frac{\log \log(x+1)}{\log(x+\log(x+1))}$. Define $\phi(x)=x+\log (x+1)-(\log (x+1))^2$. Easy to see $\frac{d\phi}{dx}=\frac{(x+1)+1-2\log(x+1)}{x+1} > 0$ holds for all $x\geq 4$, and $\phi(4)>0$. We can conclude $x+\log (x+1)\geq (\log (x+1))^2$ holds for all $x\geq 4$. Notice that
    \begin{align*}
        & (\log (x+1))^2 \leq x + \log(x+1)\\
        \Leftrightarrow & \log (x+1)\leq \sqrt{x+\log(x+1)}\\
        \Leftrightarrow & \log \log(x+1)\leq \frac{\log(x+\log(x+1))}{2}\\
        \Leftrightarrow & \frac{\log \log(x+1)}{\log(x+\log(x+1))}\leq \frac{1}{2}.
    \end{align*}
    We can conclude $K \cdot \frac{1}{\log (\frac{4b^2}{\sigma^2}+1)} \cdot \log\left(\frac{\frac{1}{\log (\frac{4b^2}{\sigma^2}+1)}}{\frac{\sigma^2}{4b^2}}+ 1\right)\geq \frac{K}{2}$, and further
    \begin{align*}
        \Pr\left(\sum_{s=1}^T \sum_{a=1}^K X_{a,s} > C\right)\leq \exp(-TK)\leq \exp(-T).
    \end{align*}
\end{proof}

\subsection{Proof of Theorem \ref{theorem:lower-bound-fixed-consumption-multiple-resource}}
\label{pf:thm_low_det}
Before illustraing the proof, we need a lemma to bound the realized KL divergence, when applying the approach in \cite{CarpentierL16}.
\begin{lemma}
    \label{lemma:bound-realized-KL}
    Denote $X_s\stackrel{i.i.d}{\sim}\mathcal{N}(0, \sigma^2)$. Then for any $N\in \mathbb{N}$, $I>0$, we have
    \begin{align*}
        \Pr\left(\max_{1\leq t\leq N}\sum_{s=1}^t X_s > I\right)\leq \exp\left(-\frac{I^2}{2N\sigma^2}\right)
    \end{align*}
\end{lemma}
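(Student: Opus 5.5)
The plan is to treat $S_t=\sum_{s=1}^t X_s$ as a martingale and apply Doob's maximal inequality to an exponential transform, then optimize the exponent. This is the same device already used in step (\ref{eq:doob}) of the proof of Theorem \ref{theorem:upper-bound-of-failure}. Concretely, fix $\lambda>0$. Since the $X_s$ are i.i.d. $\mathcal{N}(0,\sigma^2)$, the process $M_t=\exp(\lambda S_t)$ is a nonnegative submartingale with respect to the natural filtration: it is a convex function of the martingale $S_t$, and it is integrable because Gaussian moment generating functions are finite. It is cleaner still to note that $\exp(\lambda S_t-\lambda^2\sigma^2 t/2)$ is a mean-one martingale. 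I will, however, work with $M_t$ directly, since the horizon $N$ is fixed.

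The key steps, in order, are as follows.
\begin{enumerate}
\item Because $x\mapsto e^{\lambda x}$ is increasing, rewrite the event as
\[
\Bigl\{\max_{1\le t\le N} S_t>I\Bigr\}=\Bigl\{\max_{1\le t\le N} M_t>e^{\lambda I}\Bigr\}.
\]
\item Apply Doob's maximal inequality for nonnegative submartingales (e.g.\ Theorem 3.9 in \cite{LattimoreS2020}). This gives
\[
\Pr\Bigl(\max_{t\le N} M_t>e^{\lambda I}\Bigr)\le e^{-\lambda I}\,\mathbb{E}[M_N].
\]
\item Compute $\mathbb{E}[M_N]=\exp(\lambda^2 N\sigma^2/2)$, using independence and the Gaussian MGF. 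The resulting bound is $\exp(-\lambda I+\lambda^2N\sigma^2/2)$.
\item Minimize over $\lambda>0$ by taking $\lambda=I/(N\sigma^2)$, which is positive since $I>0$. This yields $\exp\bigl(-I^2/(2N\sigma^2)\bigr)$, exactly the claimed bound.
\end{enumerate}

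There is no serious obstacle. The only point requiring care is justifying the submartingale property and integrability so that Doob's inequality applies to the running maximum rather than only to $S_N$. If one prefers to avoid submartingale language, the alternative is the stopping-time argument. Let $\tau=\min\{t\le N: S_t>I\}$, with $\tau=N$ if no such $t$ exists. Apply optional stopping to the martingale $\exp(\lambda S_t-\lambda^2\sigma^2t/2)$ at the bounded time $\tau$. On the event in question, this martingale is at least $\exp(\lambda I-\lambda^2\sigma^2N/2)$, and Markov's inequality then gives the same bound. The degenerate case $\sigma=0$ is trivial, because then $S_t\equiv0$ and the probability is $0$.
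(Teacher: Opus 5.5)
Your proposal is correct and is essentially the paper's proof. The paper also sets $\lambda = I/(N\sigma^2)$, rewrites the event as $\{\max_{t\le N}\prod_{s\le t} e^{\lambda X_s} > e^{\lambda I}\}$, checks the submartingale property via Jensen ($\mathbb{E}[e^{\lambda X_{t+1}}]\ge e^{\lambda \mathbb{E}X_{t+1}}=1$), applies Doob's maximal inequality, and evaluates the Gaussian moment generating function. One small citation point: the nonnegative-submartingale maximal inequality with $\mathbb{E}[M_N]$ on the right is Theorem 3.10 of \cite{LattimoreS2020}, which is what the paper cites; Theorem 3.9 there is the supermartingale version, which is the one your optional-stopping alternative with the mean-one martingale would use.
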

To facilitate our discussion, we denote $\mathbb{E}_i[\cdot]$ as the expectation operator corresponding to the probability measure $\Pr_i$. 
Theorem \ref{theorem:lower-bound-fixed-consumption-multiple-resource} is proved in the following two steps. 

\textbf{Step 1.} We show that, under the assumption $\Pr_1(\psi\neq 1) <1/2$, for every $i\in \{2, \ldots, K\}$ it holds that
\begin{equation}\label{eq:det_first}
   \Pr_i(\psi\neq i) \geq \frac{1}{6}\exp\left(-12t_i(\frac{1}{2}-r_i)^2-\sqrt{96t_i(\frac{1}{2}-r_i)^2}\right),
\end{equation}
where 
\begin{equation}\label{eq:tiTi_det}
t_i = \mathbb{E}_1[T_i], \quad T_i = \sum^\tau_{t=1} \mathbf{1}(A(t) = i)
\end{equation}
is the number of times pulling arm $i$. Note that if the assumption $\Pr_1(\psi\neq 1) <1/2$ is violated, the conclusion in Theorem \ref{theorem:lower-bound-fixed-consumption-multiple-resource} immediately holds for $Q^{(1)}$ and sufficiently large enough $\{C_{\ell}\}_{\ell=1}^L$. 

\textbf{Step 2.} We show that there exists $i\in \{2, \ldots K\}$ such that 
\begin{align}
t_i (1/2 - r_i)^2 & \leq \text{min}_{\ell\in [L]} \left\{\frac{2 C_\ell}{H^\text{det}_{2, \ell}(Q^{(1)})}\right\}\label{eq:step_2_det_main}\\
&  \leq \text{min}_{\ell\in [L]} \left\{\frac{2 C_\ell}{H^\text{det}_{2, \ell}(Q^{(i)})}\right\}.\nonumber
\end{align}
This step crucially hinges on the how the consumption model is set in (\ref{eq:consumption_model}). Finally, Theorem \ref{theorem:lower-bound-fixed-consumption-multiple-resource} follows by taking $C_1, \ldots, C_L$
so large that
$$
96\cdot\text{min}_{i\in [K],\ell\in [L]} \left\{\frac{2 C_\ell}{H^\text{det}_{2, \ell}(Q^{(i)})}\right\} \geq 1.
$$
Such $C_1, \ldots, C_L$ exist. For example, we can take $C_1 = \ldots = C_L = C$, then the left hand side of the above condition grows linearly with $C$. Altogether, the Theorem is shown, and it remains to establish \textbf{Steps 1, 2}.

\textbf{Establishing on Step 1.} To establish (\ref{eq:det_first}), we follow the approach in \citep{CarpentierL16} and consider the event  $$\mathcal{E}_i=\{\psi=1\}\cap\{T_i\le 6t_i\}\cap \{\xi\}$$
for $i\in [K]$. The quantities $T_i, t_i$ are as defined in (\ref{eq:tiTi_det}), and $\xi$ is an event concerning an empirical estimate on a certain KL divergence term
\begin{align*}
    \xi=\left\{\max_{1\leq t\leq 6t_i}\sum_{s=1}^{t}\log\frac{d\nu_i^{(1)}}{d\nu_i^{(i)}}(\tilde{R}_i(s))-\frac{(1-2r_i)^2}{2} \leq I\right\}.
\end{align*}
Here $\tilde{R}_i(1), \tilde{R}_i(2),\ldots$ are arm $i$ rewards received during the $T_i$ pulls of arm $i$ in the online dynamics, and $I=\sqrt{24t_i(1-2r_i)^2}$. As we take $\nu_i^{(1)}$ as $\mathcal{N}(r_i, 1)$ and $\nu_i^{(i)}$ as $\mathcal{N}(1-r_i, 1)$, we can conclude
\begin{align*}
    & \log\frac{d\nu_i^{(1)}}{d\nu_i^{(i)}}(\tilde{R}_i(s))\\
    = & \log \frac{\exp(-\frac{(\tilde{R}_i(s)-r_i)^2}{2})}{\exp(-\frac{(\tilde{R}_i(s)-(1-r_i))^2}{2})}\\
    = & \frac{(\tilde{R}_i(s)-(1-r_i))^2}{2} - \frac{(\tilde{R}_i(s)-r_i)^2}{2}\\
    = & \frac{(2r_i-1)(2\tilde{R}_i(s)-1)}{2}.
\end{align*}
If $\tilde{R}_i(s)\sim \nu_i^{(1)}$, we can conclude $\log\frac{d\nu_i^{(1)}}{d\nu_i^{(i)}}(\tilde{R}_i(s))\sim \mathcal{N}(\frac{(1-2r_i)^2}{2}, (1-2r_i)^2)$. By the Lemma \ref{lemma:bound-realized-KL}, we know $\Pr_{\nu_i^{(1)}}(\neg \xi)\leq \exp\left(-\frac{I^2}{12t_i(1-2r_i)^2 }\right)$. Similar as \cite{CarpentierL16}, we can assume $\Pr_{\mathcal{G}^1}(\psi=1)\geq \frac{1}{2}$ always hold, or we have proved the theorem. Then, we can conclude
\begin{align}
    \Pr_{\mathcal{G}^1}(\mathcal{E}_i)\geq & \Pr_{\mathcal{G}^1}(\psi=1) - \Pr_{\mathcal{G}^1}(\neg \xi) - \Pr_{\mathcal{G}^1}(T_i>6t_i)\nonumber\\
    \geq & \frac{1}{2}-\exp\left(-\frac{I^2}{12t_i(1-2r_i)^2 }\right)-\frac{1}{6}\label{eqn:markov}\\
    \geq & \frac{1}{2}-\frac{1}{6}-\frac{1}{6}\label{eqn:e-2<1/3}\\
    = & \frac{1}{6}\nonumber.
\end{align}
Step (\ref{eqn:markov}) is by the Markov Inequality. Step (\ref{eqn:e-2<1/3}) is by the fact that $\frac{1}{e^2}<\frac{1}{6}$.

The event $\xi$ is also considered in \citep{CarpentierL16}, with a different analysis on its probability. Applying the transportation equality as \citep{CarpentierL16}, for every $i\in \{2, \ldots, K\}$, we have

\begin{align}
    & \Pr_i(\psi \neq i)\nonumber\\
    \geq &\Pr_{i}(\mathcal{E}_i)\nonumber\\
    = & \mathbb{E}_{1}\left(\mathbbm{1}\{\mathcal{E}_i\}\exp\left(-\sum_{s=1}^{T_i}\log\frac{d\nu_i^{(1)}}{d\nu_i^{(i)}}(\tilde{R}_i(s))\right)\right)
    \label{eq:by_change_of_measure}\\
    = & \mathbb{E}_{1}\left(\mathbbm{1}\{\mathcal{E}_i\}\exp\left(\sum_{s=1}^{T_i}\frac{(1-2r_i)^2}{2}-\log\frac{d\nu_i^{(1)}}{d\nu_i^{(i)}}(\tilde{R}_i(s))\right)\right)
    \nonumber\\
    & \exp\left(-\frac{T_i(1-2r_i)^2}{2}\right)\nonumber\\
    \geq & \exp\left(-\frac{6t_i(1-2r_i)^2}{2}-I\right)\Pr_{\mathcal{G}^1}(\mathcal{E}_i)\label{eq:by_event_E_i}\\
    \geq & \frac{1}{6}\exp\left(-\frac{6t_i(1-2r_i)^2}{2}-\sqrt{24t_i(1-2r_i)^2}\right)\label{eqn:plug-in-I}\\
    = & \frac{1}{6}\exp\left(-12t_i(\frac{1}{2}-r_i)^2-\sqrt{96t_i(\frac{1}{2}-r_i)^2}\right)\nonumber.
\end{align}
The above calculations establishes \textbf{Step 1}, and we conclude the discussion on \textbf{Step 1} by justifying steps (\ref{eq:by_change_of_measure}-\ref{eqn:plug-in-I}). 

Step (\ref{eq:by_change_of_measure}) is by a change-of-measure identity frequently used in the MAB literature. For example, it is established in equation (6) in \cite{audibert2010best} and Lemma 18 in \cite{KaufmanCG16}. The identity is described as follows. Let $\tau$ be a stopping time with respect to $\{\sigma(H(t))\}^\infty_{t=1}$, where we recall that $H(t)$ is the historical observation up to the end of time step $t$. For any event ${\cal E}\in \sigma(H(\tau))$ and any instance index $i\in \{2, \ldots K\}$, it holds that
\begin{align*}
    \Pr_{\mathcal{G}^i}(\mathcal{E})=\mathbb{E}_{\mathcal{G}^1}\left[\mathbbm{1}\{\mathcal{E}\}\exp\left(-\sum_{s=1}^{T_i}\log\frac{d\nu_i^{(1)}}{d\nu_i^{(i)}}(\tilde{R}_i(s))\right)\right].
\end{align*}
Consequently, step (\ref{eq:by_change_of_measure}) holds by the fact that the choice of arm $\psi$ only depends on the observed trajectory $\sigma(H(\tau))$, and evidently both $T_i$ and $\xi$ are both $\sigma(H(\tau))$-measurable. Step (\ref{eq:by_event_E_i}) is by the event $\mathcal{E}_i$, which asserts $T_i\leq 6t_i$ and $\sum_{s=1}^{T_i}\frac{(1-2r_i)^2}{2}-\log\frac{d\nu_i^{(1)}}{d\nu_i^{(i)}}(\tilde{R}_i(s))\geq -I$. To complete the last step (\ref{eqn:plug-in-I}), we just need to plug in the value of $I$.

\textbf{Establishing Step 2.} To proceed with \textbf{Step 2}, we first define a complexity term $H^{\text{det}}_{1, \ell}(Q)$, which is similar to $H^{\text{det}}_{2, \ell}(Q)$ but the former aids our analysis. 
For a deterministic consumption instance $Q$ (whose arms are not necessarily ordered as $r_1\geq r_2\geq \ldots, r_K$), we denote $\{r_{(k)}\}^K_{k=1}$ as a permutation of $\{r_k\}^K_{k=1}$ such that $r_{(1)} > r_{(2)}\geq   \ldots \geq  r_{(K)}$. For example, when $Q= Q^{(i)}$, we can have $r_{(1)} = r^{(i)}_i$, $r_{(j+1)} = r^{(i)}_j$ for $j\in \{1,\ldots, i-1\}$, and $r_{(j)} = r^{(i)}_j$ for $j\in \{i+1,\ldots, K\}$. Similarly, denote $\{d_{\ell,(k)}\}^K_{k=1}$ as a permutation of $\{d_{\ell,k}\}^K_{k=1}$ such that $d_{\ell,(1)}\geq d_{\ell,(2)}\geq \ldots \geq d_{\ell,(K)}$. 
Define $\Delta_{(1)} = \Delta_{(2)} = r_{(1)} - r_{(2)}$, and define $\Delta_{(k)} = r_{(1)} - r_{(k)}$ for $k\in \{3, \ldots, K\}$. Now, we are ready to define 
\begin{equation}\label{eq:H_1_det}
H^{\text{det}}_{1, \ell}(Q) = \sum^K_{k=1} \frac{d_{\ell, (k)}}{\Delta_{(k)}^{2}}.
\end{equation}

In the special case of $L = 1$ and $d_{1, k} = 1$ for all $k \in [K]$, the quantity $H^{\text{det}}_{1, \ell}(Q)$ is equal to the complexity term $H_1$ defined for BAI in the fixed confidence setting \citep{audibert2010best} (the term $H_1$ is relabeled as $H$ in subsequent research works \cite{KarninKS13, CarpentierL16}). Observe that for any deterministic consumption instance $Q$, we always have 
\begin{equation}\label{eq:observe_H0}
H^{\text{det}}_{2, \ell}(Q) \leq H^{\text{det}}_{1, \ell}(Q).
\end{equation}
In addition, we observe that for any $i\in \{2, \ldots, K\}$ and any $\ell\in [L]$, it holds that
\begin{align}
H^{\text{det}}_{1, \ell}(Q^{(1)})&\ge H^{\text{det}}_{1, \ell}(Q^{(i)}),\label{eq:H_observe1}\\
H^{\text{det}}_{2, \ell}(Q^{(1)})&\ge H^{\text{det}}_{2, \ell}(Q^{(i)}).\label{eq:H_observe2}
\end{align}

After defining $H^{\text{det}}_{1, \ell}(Q)$, we are ready to proceed to establishing \textbf{Step 2}. Recall that $ T_i = \sum^\tau_{t=1} \mathbf{1}(A(t) = i)$ is the number of arm pulls on arm $i$. By the requirement of feasibility and the definition of $d_{\ell, k}$ in (\ref{eq:consumption_model}), we know that
\begin{align*}
    \sum_{i=1}^K\sum_{s=1}^{T_i}D_{i,s,\ell} \leq C_{\ell}
\end{align*}
holds for all $\ell \in [L]$, where $D_{i,s,\ell}$ follows the distribution with mean value $\begin{cases}d_{\ell, (2)} & i=1\\ d_{\ell, (1)} & i=2\\ d_{\ell, (i)} & \text{others}\end{cases}$. From the problem setting, we know $D_{i,s,\ell}$ are independent with the history $\{D_{i,s',\ell}\}_{s'=1}^{s}$. By the Wald's Equality, we can conclude 
\begin{align*}
    t_1d_{\ell,(2)}+ t_2d_{\ell,(1)} + \sum_{k=3}^K t_k d_{\ell,(k)}\le C_{\ell}
\end{align*}

From our definition of $H^\text{det}_{1,\ell}(Q^{(1)})$, for every $\ell\in [L]$ we have
\begin{align*}
    &\frac{d_{\ell, 1}}{H^\text{det}_{1, \ell}(Q^{(1)})(\frac{1}{2}-r_2)^2}+\sum_{k=2}^K \frac{d_{\ell, (k)}}{H^\text{det}_{1, \ell}(Q^{(1)})(\frac{1}{2}-r_k)^2}=1,
\end{align*}
which implies that
\begin{align}
    &\frac{2 C_{\ell} d_{\ell,(1)}}{H^\text{det}_{1, \ell}(Q^{(1)})(\frac{1}{2}-r_2)^2} + \sum_{k=3}^K \frac{C_{\ell} d_{\ell,(k)}}{H^\text{det}_{1, \ell}(Q^{(1)})(\frac{1}{2}-r_k)^2}\nonumber\\
    \ge &t_1d_{\ell,(2)}+t_2d_{\ell,(1)}+t_3d_{\ell,(3)}+\cdots+t_Kd_{\ell,(K)}. \label{eq:crucial_step_2}
\end{align}

holds for any $\ell$. Inequality (\ref{eq:crucial_step_2}) implies that for any $\ell\in [L]$, it is either the case that 
$\frac{2 C_{\ell} \cdot d_{\ell,(1)}}{H^\text{det}_{1, \ell}(Q^{(1)})(\frac{1}{2}-r_2)^2} \geq t_2 d_{\ell, (1)},$
or there exists $k_\ell\in \{3, \ldots, K\}$ such that $\frac{C_{\ell} d_{\ell,(k_\ell)}}{H^\text{det}_{1,\ell}(Q^{(1)})(\frac{1}{2}-r_{k_\ell})^2} \ge t_{k_\ell}d_{\ell,(k_\ell)}$. Collectively, the implication is equivalent to saying that for all $\ell\in [L]$, there exists $k_\ell\in \{ 2, \ldots, K\}$ such that
\begin{align*}
    t_{k_\ell}\left(\frac{1}{2}-r_{k_\ell}\right)^2 \leq \frac{2C_{\ell}}{H^\text{det}_{1,\ell}(Q^{(1)})},
\end{align*}
or more succinctly there exists $i\in \{2, \ldots, K\}$ such that 
$$
 t_{i}\left(\frac{1}{2}-r_{i}\right)^2 \leq \min_{\ell\in [L]}\left\{\frac{2C_{\ell}}{H^\text{det}_{1,\ell}(Q^{(1)})}\right\}.
$$
Finally, \textbf{Step 2} is established by the observations (\ref{eq:observe_H0}, \ref{eq:H_observe1}, \ref{eq:H_observe2}).

\subsection{Improvement of $H_{2, \ell}^{\text{det}}(Q)$ is Unachievable}
\label{sec:Improvement_of_H2_is_unachievable}
For a problem instance $Q$ with mean reward $\{r^Q_k\}_{k=1}^K, r^Q_1\geq r^Q_2\geq\cdots \geq r^Q_K$, mean consumption $\{d^Q_{\ell, k}\}_{k=1, \ell=1}^{K, L}$ and budget $\{C_{\ell}\}_{\ell=1}^L$, we define 
\begin{align}
    \tilde{H}_{1, \ell}^{det}(Q)=&\frac{d_{\ell, 1}}{(r^Q_1-r^Q_2)^2}+\sum_{k=2}^{K}\frac{d_{\ell, k}}{(r^Q_1-r^Q_k)^2}\label{eq:refined_H1_det}\\
    \tilde{H}_{2, \ell}^{det}(Q)=&\max_{2\leq k \leq K} \frac{\sum_{j=1}^k d_{\ell, j}}{(r^Q_1-r^Q_k)^2}\label{eq:refined_H2_det}.
\end{align}
Easy to see $\tilde{H}_{1, \ell}^{det}(Q)\leq H_{1,\ell}^{det}(Q)$, $\tilde{H}_{2, \ell}^{det}(Q)\leq H_{2,\ell}^{det}(Q)$. We want to know whether we can find an algorithm such that for any problem instance $Q$, we can achieve the following upper bound of the failure probability. 
\begin{align}
    & \Pr_Q(failure) \nonumber \\
    \le & poly(K) \exp\left(-\frac{\Omega(1)}{\log_2 K}\min_{\ell\in [L]}\left\{\frac{C_{\ell}}{\tilde{H}_{2, \ell}^{det}(Q)}\right\}\right)\label{eq:refined_upper_det}.
\end{align}


\textbf{The answer is No.} And the analysis method we used is similar to appendix \ref{pf:thm_low_det}. We can construct a list of problem instance $Q^{(i)}$, and prove a lower bound that could be larger than the right side of (\ref{eq:refined_upper_det}), as $K$ and $\{C_{\ell}\}_{\ell=1}^L$ are large enough. 

We focus on $L=1$. Assume there are $C$ units of the resource. Given $K$, let $d_1=\frac{1}{2^{K-2}}, d_k = \frac{1}{2^{K-k}}, k\ge 2$, $r_1=\frac{1}{2}, r_k=\frac{1}{2}-2^{\frac{k-K-4}{2}},k\geq2$. Easy to see $d_1= d_2\leq \cdots \leq d_K$, $\frac{1}{2}=r_1\geq r_2\geq \cdots \geq r_K=\frac{1}{4}$. Then we construct $K$ problem instances $\{Q^{(i)}\}_{i=1}^K$. For problem instance $Q^{(1)}$, the mean reward of $k^{th}$ arm is $r_k$, following the Bernoulli distribution. And the deterministic consumption of $k^{th}$ arm is $d_k$. For problem instance $Q^{(i)}, 2\leq i\leq K$, the mean reward of $k^{th}\ne i$ arm is $r_k$, the mean reward of $i^{th}$ arm is $1-r_i$. And the deterministic consumption of $k^{th}$ arm is $d_k$. For $i\in [K]$, the best arm of $Q^{(i)}$ is always the $i^{th}$ arm. Then we can calculate $\tilde{H}_{1,\ell=1}^{det}(Q^{(i)})$ and $\tilde{H}_{2,\ell=1}^{det}(Q^{(i)})$ for $i\in [K]$. Easy to derive
\begin{align*}
    &\tilde{H}_{1,\ell=1}^{det}(Q^{(1)})\\
    =&\frac{d_1}{(r_1-r_2)^2}+\sum_{k=2}^K \frac{d_k}{(r_1-r_k)^2}\\
    =&\frac{\frac{K}{2}}{2^{K-3}\frac{1}{2^{K+2}}}=16K.
\end{align*}
\begin{align}
    \tilde{H}_{2, \ell=1}^{det}(Q^{(1)})=\max_{k\ge 2} \frac{\sum_{t=1}^k d_{t}}{(r_1-r_k)^2}=32.
\end{align}
For $2\leq i\leq K$, 
\begin{align}
    &\tilde{H}_{1,\ell=1}^{det}(Q^{(i)})\nonumber\\
    =&\frac{d_i+d_1}{(1-r_i-r_1)^2}+\sum_{t=2}^{i-1} \frac{d_t}{(1-r_i-r_t)^2}+\nonumber\\
    &\sum_{t=i+1}^{K} \frac{d_t}{(1-r_i-r_t)^2}\nonumber\\
    =&\frac{\frac{1}{2^{K-i}}+\frac{1}{2^{K-2}}}{(2^{\frac{i-K-4}{2}})^2}+\sum_{t=2}^{i-1}\frac{\frac{1}{2^{K-t}}}{(2^{\frac{i-K-4}{2}}+2^{\frac{t-K-4}{2}})^2}+\nonumber\\
    &\sum_{t=i+1}^{K}\frac{\frac{1}{2^{K-t}}}{(2^{\frac{i-K-4}{2}}+2^{\frac{t-K-4}{2}})^2}\nonumber\\
    =&\frac{2^i+4}{2^{i-4}}+\sum_{t=2}^{i-1}\frac{2^t}{2^{i-4}+2^{\frac{i+t}{2}-3}+2^{t-4}}+\nonumber\\
    &\sum_{t=i+1}^{K}\frac{2^t}{2^{i-4}+2^{\frac{i+t}{2}-3}+2^{t-4}}.
\end{align}
\begin{align}
    &\tilde{H}_{2, \ell=1}^{det}(Q^{(i)})\nonumber\\
    =&\max\{\frac{d_i+d_1}{(1-r_i-r_1)^2}, \max_{2\leq t \leq i-1} \frac{d_i+\sum_{l=1}^{t}d_l}{(1-r_i-r_t)^2},\max_{i+1\leq t\leq K} \frac{\sum_{l=1}^{t}d_l}{(1-r_i-r_t)^2}\}\nonumber\\
    =&\max\{\frac{\frac{1}{2^{K-i}}+\frac{1}{2^{K-2}}}{(2^{\frac{i-K-4}{2}})^2}, \max_{2\leq t \leq i-1}\frac{\frac{1}{2^{K-i}}+\frac{1}{2^{K-t-1}}}{(2^{\frac{i-K-4}{2}}+2^{\frac{t-K-4}{2}})^2},\max_{i+1\leq t\leq K}\frac{\frac{1}{2^{K-t-1}}}{(2^{\frac{i-K-4}{2}}+2^{\frac{t-K-4}{2}})^2}\}\nonumber\\
    =&\max\{\frac{2^i+4}{2^{i-4}}, \max_{2\leq t \leq i-1}\frac{2^i+2^{t+1}}{2^{i-4}+2^{\frac{i+t}{2}-3}+2^{t-4}}, \max_{i+1\leq t\leq K}\frac{2^{t+1}}{2^{i-4}+2^{\frac{i+t}{2}-3}+2^{t-4}}\}.
\end{align}
With a simple calculation, for $2\leq i\leq K$, we have $\frac{2^i+4}{2^{i-4}}\leq 32$, $\frac{2^i+2^{t+1}}{2^{i-4}+2^{\frac{i+t}{2}-3}+2^{t-4}}\leq 32$ and $\frac{2^{t+1}}{2^{i-4}+2^{\frac{i+t}{2}-3}+2^{t-4}}\leq 32$. Thus we can conclude $\tilde{H}_{2, \ell=1}^{det}(Q^{(i)})\leq32=\tilde{H}_{2, \ell=1}^{det}(Q^{(1)})$. On the other hand, easy to check $\tilde{H}_{1,\ell=1}^{det}(Q^{(i)})\leq \tilde{H}_{1,\ell=1}^{det}(Q^{(1)})$ from the definition of $\tilde{H}_{1,\ell=1}^{det}$.

Following the step 1 in appendix \ref{pf:thm_low_det}, we can conclude
for every $i\in \{2, \ldots, K\}$ it holds that
\begin{align}
    \begin{split}
        & \Pr_i(\psi\neq i)\\
        \geq & 
        \frac{1}{6}\exp\left(-12t_i(\frac{1}{2}-r_i)^2-\sqrt{96t_i(\frac{1}{2}-r_i)^2}\right)
    \end{split}
\end{align}
where 
\begin{equation}
t_i = \mathbb{E}_1[T_i], \quad T_i = \sum^\tau_{t=1} \mathbf{1}(A(t) = i)
\end{equation}
is the number of times pulling arm $i$, and 
\begin{equation}
T = \lfloor\frac{C}{d_{1}}\rfloor
\end{equation}
is an upper bound to the number of arm pulls by any policy that satisfies the resource constraints with certainty.

Following the step 2 in appendix \ref{pf:thm_low_det}, recall $d_1=d_2=\frac{1}{2^{K-2}}$, we can derive
\begin{align*}
    \sum_{k=2}^K \frac{2d_k}{\tilde{H}_{1,\ell=1}^{det}(Q^{(1)}) (r_1-r_k)^2}\ge 1.
\end{align*}
Since $\sum_{k=1}^K t_k d_{k}\le C$, we can further conclude
\begin{align*}
    \sum_{k=2}^K \frac{2C d_k}{\tilde{H}_{1,\ell=1}^{det}(Q^{(1)}) (r_1-r_k)^2}\ge \sum_{k=1}^K t_k d_{k}
\end{align*}
which implies there exists $i\ge2$, such that $\frac{2C d_i}{\tilde{H}_{1,\ell=1}^{det}(Q^{(1)}) (r_1-r_i)^2}\ge t_i d_{i}$. For this $i$,
\begin{align*}
    &\mathbb{P}_{\mathcal{G}^i}(\hat{k}\ne i)\\
    \geq & \frac{1}{6}\exp\left(-\frac{24C}{\tilde{H}_{1,\ell=1}^{det}(Q^{(1)})}-\sqrt{\frac{192C}{\tilde{H}_{1,\ell=1}^{det}(Q^{(1)})}}\right).
\end{align*}
When $C$ is large enough, we can assume $\frac{192C}{\tilde{H}_{1,\ell=1}^{det}(Q^{(1)})}>1$, for any bandit strategy that returns the arm $\hat{k}$,
\begin{align*}
    \max_{2\le i\le K} \mathbb{P}_{\mathcal{G}^i}(\hat{k}\ne i)\ge &\frac{1}{6}\exp\left(-216\frac{C}{\tilde{H}_{1,\ell=1}^{det}(Q^{(1)})} \right)\\
    =&\frac{1}{6}\exp\left(-432\frac{C}{K \tilde{H}_{2,\ell=1}^{det}(Q^{(1)})} \right)\\
    \ge& \frac{1}{6}\exp\left(-432\frac{C}{K \tilde{H}_{2,\ell=1}^{det}(Q^{(i)})} \right).
\end{align*}
Now recall the right side of \ref{eq:refined_upper_det}. Assume we can design an algorithm, such that for any problem instance $Q$m the failure probability is upper bounded by $K^{\alpha} \exp\left(-\frac{\beta}{\log_2 K}\frac{C}{\tilde{H}_{2, \ell=1}^{det}(Q)}\right)$ for some positive $\alpha,\beta$ independent with $C, K, \tilde{H}^{\text{det}}_{2,\ell=1}$. Then, we can take $K$ large enough, such that $\frac{432}{K} < \frac{\beta}{\log_2 K}$. In this case, for each $Q^{(i)}$, we have
\begin{align*}
    & \lim_{C\rightarrow +\infty}\frac{\frac{1}{6}\exp\left(-432\frac{C}{K \tilde{H}_{2,\ell=1}^{det}(Q^{(i)})} \right)}{K^{\alpha} \exp\left(-\frac{\beta}{\log_2 K}\frac{C}{\tilde{H}_{2, \ell=1}^{det}(Q^{(1)})}\right)}\\
    = & \lim_{C\rightarrow +\infty} \frac{1}{K^{\alpha}}\exp\left(\Big(\frac{\beta}{\log_2 K}-\frac{432}{K}\Big)\frac{C}{\tilde{H}_{2, \ell=1}^{det}(Q^{(i)})}\right)\\
    = & +\infty.
\end{align*}
Thus, we can find a large enough $C$, such that there exists $i$, we have
\begin{align*}
    \mathbb{P}_{\mathcal{G}^i}(\hat{k}\ne i)\ge &\frac{1}{6}\exp\left(-432\frac{C}{K \tilde{H}_{2,\ell=1}^{det}(Q^{(i)})} \right)\\
    \geq & K^{\alpha} \exp\left(-\frac{\beta}{\log_2 K}\frac{C}{\tilde{H}_{2, \ell=1}^{det}(Q)}\right),
\end{align*}
leading to a contradiction with the lower bound we have proved. That means we should \textbf{never} expect to use the right side of \ref{eq:refined_upper_det} as a general upper bound.

As deterministic consumption setting is a special case of stochastic consumption setting, we also prove the improvement of $H_{2,\ell}(Q)$ is also unachievable.

\subsection{Proof of Theorem \ref{theorem:lower-bound-sto-consumption-multiple-resource}}
\label{pf:thm_low_sto}

Denote $\tau$ as the total number of arm pulls. Before proving theorem theorem \ref{theorem:lower-bound-sto-consumption-multiple-resource} we need to firstly procide a high probability upper bound to $\tau$, with the lemma 2 in \cite{csiszar1998method}.
\begin{lemma}[\cite{csiszar1998method}]
\label{lemma:KL-Divergence-Chernoff-ln}
Denote $\{D_t\}^\infty_{t=1}$ be i.i.d. random variables distributed as $\text{Bern}(d)$, where $d\in (0, 1)$. Let $C$ be a positive real number. Define random variable $\rho=\min\{T: \sum_{t=1}^T D_t \ge C\}$. For any integer $t'\in (C, C / d)$, it holds that
\begin{align*}
    \Pr(\rho \le t')\ge \frac{\exp(-t'\log 2\cdot \text{KL}(C / t', d))}{t'+1},
\end{align*}
where we denote $\text{KL}(p, q) = p\log (p/q) + (1-p)\log((1-p)/(1-q))$.
\end{lemma}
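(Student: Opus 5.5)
Throughout, I read the statement the way it is used from Csiszár's paper: $\text{KL}$ is measured in bits, i.e.\ with $\log_2$, and the prefactor $\log 2$ converts it to nats. In natural-log notation the target is then
\[
\Pr(\rho\le t')\ge \frac{\exp(-t'\,\text{KL}_e(C/t',d))}{t'+1}.
\]
With the natural-log reading the claim cannot hold in general. For integer $C$ and small $d$, the binomial tail is $\exp(-t'\,\text{KL}_e(C/t',d))$ up to polynomial factors, and $\exp(-0.69\,t'\,\text{KL}_e)$ is exponentially larger than that. So I would first make this normalization explicit and then prove the nat version.

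\textbf{Reduction.} Since $\rho$ is the first time $S_T=\sum_{t\le T}D_t$ reaches $C$, we have $\{\rho\le t'\}=\{S_{t'}\ge C\}$, where $S_{t'}\sim\text{Bin}(t',d)$. Set $n=t'$ and $p=C/n\in(d,1)$. The hypothesis $t'\in(C,C/d)$ is exactly what gives $d<p<1$.

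\textbf{Main route: method of types.} For any integer $k\in[0,n]$,
\[
\Pr(S_n=k)=\binom{n}{k}d^k(1-d)^{n-k}.
\]
The standard type-class bound $\binom{n}{k}\ge e^{nh(k/n)}/(n+1)$, with $h$ the binary entropy in nats, turns this into
\[
\Pr(S_n=k)\ge \frac{e^{-n\,\text{KL}_e(k/n,d)}}{n+1}.
\]
If $C$ is an integer, take $k=C$. Then $\Pr(S_n\ge C)\ge\Pr(S_n=C)$ and the bound is exactly the claim. This case is the one the paper uses later (budgets such as $C_\ell\ge 64$, with rounding treated as a constant-factor issue, as after Lemma~\ref{lemma:bound_count}).

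\textbf{Main obstacle: non-integer $C$.} Here the event requires $S_n\ge\lceil C\rceil$. Because $\text{KL}_e(\cdot,d)$ is increasing on $(d,1)$, we have $\text{KL}_e(\lceil C\rceil/n,d)>\text{KL}_e(C/n,d)$, so the naive type bound falls short. The multiplicative loss is $e^{-\lambda(\lceil C\rceil-C)}$, where $\lambda=\log\frac{p(1-d)}{(1-p)d}$ is the derivative of $\text{KL}_e$ at $p$.

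What I would try first is an exponential tilt, which avoids paying the full type-class cost at $\lceil C\rceil$:
\begin{itemize}
\item Change measure from $\text{Bern}(d)$ to $\text{Bern}(p)$. This gives
\[
\Pr_d(S_n\ge C)=\mathbb{E}_p\!\left[\mathbbm{1}(S_n\ge C)\,e^{-n\text{KL}_e(p,d)-\lambda(S_n-np)}\right].
\]
\item On $\{S_n=\lceil C\rceil\}$ the correction $S_n-np=\lceil C\rceil-C$ lies in $[0,1)$.
\item $\Pr_p(S_n=\lceil C\rceil)$ is at least of order $1/(n+1)$, because $\lceil C\rceil$ is within one of the mean $np$ of $\text{Bin}(n,p)$.
\end{itemize}
The remaining factor $e^{-\lambda(\lceil C\rceil-C)}$ is exactly what must be absorbed. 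I would try to absorb it using the slack in $\frac1{n+1}$ versus the true local probability $\Theta\bigl(1/\sqrt{np(1-p)}\bigr)$ of $\text{Bin}(n,p)$ near its mean.

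If this absorption fails when $\lambda$ is large (i.e.\ $d\ll p$), I would state the result for integer $C$, or equivalently with $C$ replaced by $\lceil C\rceil$. I expect that case to be the genuine obstruction. That version is all that Step~2 of the proof of Theorem~\ref{theorem:lower-bound-sto-consumption-multiple-resource} needs, since there $C_{\ell_0}$ is large and the budget can be taken integral.
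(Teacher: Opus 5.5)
Your main route is correct and is the same as the paper's: the paper gives no argument of its own but cites Lemma~2 of Csisz\'ar's method-of-types paper, the type-class bound $Q^n(\mathcal{T}_P)\ge 2^{-nD(P\|Q)}/(n+1)$ in base-2 units (whence the stray $\log 2$), applied to $P=\text{Bern}(C/t')$ --- exactly your $\binom{n}{k}\ge e^{nh(k/n)}/(n+1)$ step with $k=C$, which makes sense only when $C$ is an integer. Both of your caveats are right, and the non-integer case is false outright rather than merely hard, so the tilting attempt cannot succeed: with $C=1.01$, $t'=2$, $d=0.01$ one has $\Pr(\rho\le 2)=d^2=10^{-4}$, while $e^{-2\,\mathrm{KL}_e(0.505,\,0.01)}/3\approx 0.0126$, so the correct statement has $\lceil C\rceil$ in place of $C$.
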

Lemma \ref{lemma:KL-Divergence-Chernoff-ln} suggests with some probability, the pulling times of an arm can not be too large.

Proof of Theorem \ref{theorem:lower-bound-sto-consumption-multiple-resource} is stated as below.

From the assumption $\lim\limits_{d\rightarrow 0^+}\frac{1}{g(d)\log\frac{1}{d}}=+\infty$, we know for any given $i\in \{2,3,\cdots, K\}$, we can find a threshold $\bar{c}$, such that for any $c\in (0, \bar{c})$, $d_{\ell, (j)}= cd^{0}_{\ell, (j)}, \forall j\in [K], \forall \ell\in[L]$, we have 
\begin{align*}
    g(d_{\ell, k})< & \frac{1}{\log\frac{1}{d_{\ell, (i)}}}, \forall \ell\in [L], k\in [K]\\
    \log \frac{1}{1-d_{\ell, (i)}} < & \frac{1}{2},\forall \ell\in [L]\\
    128\left(\frac{g(d_{\ell, (1)})}{(r_1-r_2)^2} + \sum_{k=3}^K \frac{g(d_{\ell, (k)})}{(r_1-r_k)^2}\right)\log\frac{1}{d_{\ell, (i)}} < & 1,\forall \ell\in [L]\\
    C_{\ell} > & \log 64, \forall \ell\in[L]\\
    \log\frac{1}{d_{\ell, (i)}} > & 1, \forall \ell\in[L]
\end{align*}

Given this threshold $\bar{c}$, we complete the proof of Theorem \ref{theorem:lower-bound-sto-consumption-multiple-resource} by the following steps.

\textbf{Step 1.} We show that, 
\begin{align*}
    \Pr_i(\psi\neq i) \geq \Pr_{1}\left((\psi \neq i) \text{ and } (T_i\leq \bar{T}_{i,\ell_0}) \text{ and }\xi_{i,\ell_0}\right)\exp\left(-\frac{3}{4}\frac{C_\ell}{\tilde{H}^{\text{sto}}_{1, \ell_0}}\right).
\end{align*}
holds for all $\ell_0\in [L]$ and $i\in \{2,3,4,\cdots,K\}$,
where $T_i = \sum_{t=1}^{T} \mathbbm{1}(A_t =i)$, $\bar{T}_{i,\ell_0}:=\frac{C_{\ell_0}}{\frac{g(d_{\ell_0,(1)})}{(r_1-r_2)^2} + \sum_{k=3}^K \frac{g(d_{\ell_0,(k)})}{(r_1-r_k)^2}}\frac{1}{(r_1-r_i)^2}=\frac{C_{\ell_0}}{\tilde{H}^{\text{sto}}_{1,\ell_0}}\frac{1}{(r_1-r_i)^2}$, and 
\begin{align*}
    \xi_{i,\ell_0} =\left\{\max_{1\leq t\leq \bar{T}_{i,\ell_0}} \sum_{s=1}^{t}\log\frac{d\nu_i^{(1)}}{d\nu_i^{(i)}}(\tilde{R}_i(s))-\frac{(1-2r_i)^2}{2} \leq I_{i,\ell_0}\right\}
\end{align*}
is the concentration event for the realized KL divergence, similar to the proof of Theorem \ref{theorem:lower-bound-fixed-consumption-multiple-resource}.

\textbf{Step 2.} We assume $\Pr_1\left(\psi = i\right)\leq \exp\left(-\min_{\ell \in [L]}\frac{C_\ell}{\tilde{H}^{\text{sto}}_{1, \ell}}\right)$. If this assumption doesn't hold, we have already proved theorem \ref{theorem:lower-bound-sto-consumption-multiple-resource}. In addition, we will use Lemma \ref{lemma:KL-Divergence-Chernoff-ln} to prove $\Pr_{1}\left(T_i \leq \bar{T}_{i,\ell_0}\right)\geq \exp\left(-\frac{C_{\ell_0}}{\frac{1}{4\log\frac{1}{d_{\ell_0, (i)}}}}\right)$. Combining all the results, we have for all $\ell_0\in [L]$,
\begin{align*}
    & \Pr_{1}\left((\psi \neq i) \text{ and } (T_i \leq \bar{T}_{i,\ell_0}) \text{ and }\xi_{i,\ell_0}\right)\\
    \geq & \Pr_{1}\left(T_i \leq \bar{T}_{i,\ell_0}\right)-\Pr_{1}\left(\neg \xi_{i,\ell_0}\right) - \Pr_{1}(\psi = i)\\
    \geq & \exp\left(-\frac{C_{\ell_0}}{\frac{1}{4\log\frac{1}{d_{\ell_0, (i)}}}}\right) - \exp\left(-\frac{C_{\ell_0}}{32\tilde{H}^{\text{sto}}_{1, \ell_0}}\right) - \exp\left(-\frac{C_{\ell_0}}{\tilde{H}^{\text{sto}}_{1, \ell_0}}\right).
\end{align*}
The last step is also by the fact $\Pr_{1}\left(\neg \xi_{i,\ell_0}\right)\leq \exp\left(-\frac{C_{\ell_0}}{32\tilde{H}^{\text{sto}}_{1, \ell_0}}\right)$, which is guaranteed by the Lemma \ref{lemma:bound-realized-KL}.

\textbf{Step 3.} From the step 1 and step 2, we have concluded
\begin{align*}
    & \Pr_i(\psi\neq i) \\
    \geq & \left(\exp\left(-\frac{C_{\ell_0}}{\frac{1}{4\log\frac{1}{d_{\ell_0, (i)}}}}\right) - \exp\left(-\frac{C_{\ell_0}}{32\tilde{H}^{\text{sto}}_{1, \ell_0}}\right) - \exp\left(-\frac{C_{\ell_0}}{\tilde{H}^{\text{sto}}_{1, \ell_0}}\right)\right)\exp\left(-\frac{3}{4}\frac{C_{\ell_0}}{\tilde{H}^{\text{sto}}_{1, \ell_0}}\right).
\end{align*}
holds for all $\ell_0\in [L]$. Notice that
\begin{align*}
    & 128\left(\frac{g(d_{\ell_0, (1)})}{(r_1-r_2)^2} + \sum_{k=3}^K \frac{g(d_{\ell_0, (k)})}{(r_1-r_k)^2}\right)\log\frac{1}{d_{\ell_0, (i)}} < 1, C_{\ell_0} > \log 64, \log\frac{1}{d_{\ell_0, (i)}} > 1\\
    \Rightarrow & C_{\ell_0} \geq \log 4, \frac{1}{32\tilde{H}^{\text{sto}}_{1, \ell_0}}\geq \frac{1}{\frac{1}{4\log\frac{1}{d_{\ell_0, (i)}}}}\\
    \Leftrightarrow & \log 4 + \frac{C_{\ell_0}}{\frac{1}{4\log\frac{1}{d_{\ell_0, (i)}}}}\leq \frac{C_{\ell_0}}{32\tilde{H}^{\text{sto}}_{1, \ell_0}}\\
    \Leftrightarrow & 4 \exp\left(\frac{C_{\ell_0}}{\frac{1}{4\log\frac{1}{d_{\ell_0, (i)}}}}\right) \leq \exp\left(\frac{C_{\ell_0}}{32\tilde{H}^{\text{sto}}_{1, \ell_0}}\right)\\
    \Rightarrow & \exp\left(-\frac{C_{\ell_0}}{32\tilde{H}^{\text{sto}}_{1, \ell_0}}\right) \leq \frac{1}{4}\exp\left(-\frac{C_{\ell_0}}{\frac{1}{4\log\frac{1}{d_{\ell_0, (i)}}}}\right), \exp\left(-\frac{C_{\ell_0}}{4\tilde{H}^{\text{sto}}_{1, \ell_0}}\right) \leq \frac{1}{2}\exp\left(-\frac{C_{\ell_0}}{\frac{1}{4\log\frac{1}{d_{\ell_0, (i)}}}}\right).
\end{align*}
We can conclude 
\begin{align*}
    & \left(\exp\left(-\frac{C_{\ell_0}}{\frac{1}{4\log\frac{1}{d_{\ell_0, (i)}}}}\right) - \exp\left(-\frac{2C_{\ell_0}}{\tilde{H}^{\text{sto}}_{1, \ell_0}}\right) - \exp\left(-\frac{C_{\ell_0}}{\tilde{H}^{\text{sto}}_{1, \ell_0}}\right)\right)\exp\left(-\frac{3}{4}\frac{C_{\ell_0}}{\tilde{H}^{\text{sto}}_{1, \ell_0}}\right)\\
    \geq & \frac{1}{2}\exp\left(-\frac{C_{\ell_0}}{\frac{1}{4\log\frac{1}{d_{\ell_0, (i)}}}}\right)\exp\left(-\frac{3}{4}\frac{C_{\ell_0}}{\tilde{H}^{\text{sto}}_{1, \ell_0}}\right)\\
    \geq & \exp\left(-\frac{C_{\ell_0}}{\tilde{H}^{\text{sto}}_{1, \ell_0}}\right).
\end{align*}
In summary, we have proved 
\begin{align*}
    \Pr_i(\psi\neq i)\geq \exp\left(-\frac{C_{\ell_0}}{\tilde{H}^{\text{sto}}_{1, \ell_0}}\right).
\end{align*}
The remaining work is to establish step 1 and 2.

\textbf{Establishing on Step 1.} Following section \ref{pf:thm_low_det}, for $i\in \{2,3,\cdots, K\}$, define 
\begin{align*}
    \xi_{i,\ell_0} =\left\{\max_{1\leq t\leq \bar{T}_{i,\ell_0}} \sum_{s=1}^{t}\log\frac{d\nu_i^{(1)}}{d\nu_i^{(i)}}(\tilde{R}_i(s))-\frac{(1-2r_i)^2}{2} \leq I_{i,\ell_0}\right\},
\end{align*}
where $\tilde{R}_i(1),\tilde{R}_i(2),\cdots $ are the arm i rewards received during $\bar{T}_{i,\ell_0}$ pulls, and $I_{i,\ell_0}=\frac{\bar{T}_{i,\ell_0}(1-2r_i)^2}{4}$. With the same calculation in section \ref{pf:thm_low_det}, we know $\tilde{R}_i(s)\sim \nu_i^{(1)}\Rightarrow \log\frac{d\nu_i^{(1)}}{d\nu_i^{(i)}}(\tilde{R}_i(s))\sim \mathcal{N}(\frac{(1-2r_i)^2}{2}, (1-2r_i)^2)$. By the Lemma \ref{lemma:bound-realized-KL}, we can conclude 
\begin{align*}
    \Pr_{\nu_i^{(1)}}(\neg \xi)\leq & \exp\left(-\frac{I^2_{i,\ell_0}}{2\bar{T}_{i,\ell_0}(1-2r_i)^2 }\right)\\
    = & \exp\left(-\frac{\bar{T}_{i,\ell_0}(1-2r_i)^2}{2\cdot 16}\right)\\
    = & \exp\left(-\frac{C_{\ell_0}}{32\tilde{H}^{\text{sto}}_{1, \ell_0}}\right).
\end{align*}
Applying the same transportation equality as \cite{CarpentierL16}, we have
\begin{align}
    & \Pr_i(\psi\neq i)\nonumber\\
    = & \mathbb{E}_1\left(\mathbbm{1}(\psi\neq i)\exp\left(-\sum_{t=1}^{T_i}\log\frac{d\nu_i^{(1)}}{d\nu_i^{(i)}}(\tilde{R}_i(s))\right)\right)\label{eq:sto-trans-equality}\\
    \geq & \mathbb{E}_1\left(\mathbbm{1}(\psi\neq i)\mathbbm{1}(T_i\leq \bar{T}_{i,\ell_0})\mathbbm{1}(\xi_{i,\ell_0}) \exp\left(-\sum_{t=1}^{T_i}\log\frac{d\nu_i^{(1)}}{d\nu_i^{(i)}}(\tilde{R}_i(s))\right)\right)\nonumber\\
    \geq &  \exp\left(-\frac{\bar{T}_{i,\ell_0}(1-2r_i)^2}{2}\right)\mathbb{E}_1\left(\mathbbm{1}(\psi\neq i)\mathbbm{1}(T_i\leq \bar{T}_{i,\ell_0})\mathbbm{1}(\xi_{i,\ell_0}) \exp\left(-\sum_{t=1}^{T_i}\left(\log\frac{d\nu_i^{(1)}}{d\nu_i^{(i)}}(\tilde{R}_i(s))-\frac{(1-2r_i)^2}{2}\right)\right)\right)\label{eq:T_i_upper_sto}\\
    \geq & \exp\left(-\frac{\bar{T}_{i,\ell_0}(1-2r_i)^2}{2}-I\right)\mathbb{E}_1\left(\mathbbm{1}(\psi\neq i)\mathbbm{1}(T_i\leq \bar{T}_{i,\ell_0})\mathbbm{1}(\xi_{i,\ell_0}) \right)\label{eq:sto_xi_kl}\\
    = & \exp\left(-\frac{3\bar{T}_{i,\ell_0}(1-2r_i)^2}{4}\right)\mathbb{E}_1\left(\mathbbm{1}(\psi\neq i)\mathbbm{1}(T_i\leq \bar{T}_{i,\ell_0})\mathbbm{1}(\xi_{i,\ell_0}) \right)\label{eq:sto_I_value}\\
    = & \exp\left(-\frac{3}{4}\frac{C_{\ell_0 }}{\tilde{H}_{1,\ell_0}}\right)\mathbb{E}_1\left(\mathbbm{1}(\psi\neq i)\mathbbm{1}(T_i\leq \bar{T}_{i,\ell_0})\mathbbm{1}(\xi_{i,\ell_0}) \right)
\end{align}
Step (\ref{eq:sto-trans-equality}) is by the transportation equality in \cite{audibert2010best}. Step (\ref{eq:T_i_upper_sto}) is by the event $T_i\leq \bar{T}_{i,\ell_0}$. Step (\ref{eq:sto_xi_kl}) is by the event $\xi_{i,\ell_0}$. Plug in the value of $I_{i,\ell_0}$, we get step (\ref{eq:sto_I_value}) and step (\ref{eq:sto_I_value}) is completed by plugging in the value of $\bar{T}_{i,\ell_0}$.


\textbf{Establishing on Step 2.}
We complete step 2 by proving $\mathbb{P}_1(T_i\leq \bar{T}_{i,\ell_0})\geq \exp\left(-\frac{C_{\ell_0}}{\frac{1}{4\log\frac{1}{d_{\ell_0, (i)}}}}\right)$ holds for all $i\in \{2,3,\cdots,K\}, \ell_0\in [L]$.

Denote $h = \left(\frac{1}{(r_1-r_2)^2} + \sum_{k=3}^K \frac{1}{(r_1-r_k)^2}\right) (r_1-r_i)^2$. Recall in our defined instances $\{Q^{(k)}\}_{k=1}^K$, pulling arm $i\in \{2,3,\cdots,K\}$ always consumes $D\sim Bern(d_{\ell, i})$ unit of resource $\ell$. Then, we can assert for $i\in \{2,3,\cdots,K\}$,
\begin{align*}
    \Pr_1\left(T_i \geq  \bar{T}_{i,\ell_0}\right)
    = & \Pr_1\left(T_i \geq  \bar{T}_{i,\ell_0}, \sum_{s=1}^{T_i}D_{i, \ell_0, s} < C_{\ell_0}, \sum_{s=1}^{\bar{T}_{i,\ell_0}}D_{i,  \ell_0, s} < C_{\ell_0}\right)\\
    \leq & \Pr_1\left(\sum_{s=1}^{\bar{T}_{i,\ell_0}}D_{i, \ell_0, s} < C_{\ell_0}\right)\\
    = & 1-\Pr_1\left(\sum_{s=1}^{\bar{T}_{i,\ell_0}}D_{i, \ell_0, s} \geq C_{\ell_0}\right)\\
    \leq & 1-\Pr_1\left(\sum_{s=1}^{\frac{C_{\ell_0}}{h}\log\frac{1}{d_{\ell_0, (i)}}}D_{i, \ell_0, s} \geq C_{\ell_0}\right).
\end{align*}
where $\{D_{i, \ell_0, s}\}_{s=1}^{+\infty}\stackrel{i.i.d}{\sim} Bern(d_{\ell_0,(i)})$. The last inequality is from the fact that $g(d_{\ell_0, k})<\frac{1}{\log\frac{1}{d_{\ell_0, (i)}}}$ holds for all $k\in [K]$, further $\bar{T}_{i,\ell_0}:= \frac{C_{\ell_0}}{\frac{g(d_{\ell_0,(1)})}{(r_1-r_2)^2} + \sum_{k=3}^K \frac{g(d_{\ell_0,(k)})}{(r_1-r_k)^2}}\frac{1}{(r_1-r_j)^2} > \frac{C_{\ell_0}}{h}\log\frac{1}{d_{\ell_0, (i)}}$. 


Notice that $h\geq 1$, we can conclude $\frac{h}{\log\frac{1}{d_{\ell_0, (i)}}} > \frac{1}{\log \frac{1}{d_{\ell_0, (i)}}} > d_{\ell_0, (i)}$. Then, we can apply lemma \ref{lemma:KL-Divergence-Chernoff-ln} to derive the following
\begin{align}
    & \Pr_1\left(\sum_{s=1}^{\frac{C_{\ell_0}}{h}\log\frac{1}{d_{\ell_0, (i)}}}D_{i, \ell_0,  s} \geq C_{\ell_0}\right)\\
    \geq & \frac{\exp\left(-\frac{C_{\ell_0}}{h}\log\frac{1}{d_{\ell_0, (i)}}\text{KL}\left(\frac{h}{\log\frac{1}{d_{\ell_0, (i)}}}, d_{\ell_0,(i)}\right)\right)}{\frac{C_{\ell_0}}{h}\log\frac{1}{d_{\ell_0, (i)}} +1}\label{eqn:Apply-Lemma-KL}\\
    \geq & \frac{\exp\left(-(\frac{C_{\ell_0}}{h}\log\frac{1}{d_{\ell_0, (i)}})h -\frac{C_{\ell_0}}{h}\log\frac{1}{d_{\ell_0, (i)}}\log\frac{1}{1-d_{\ell_0,(i)}}\right)}{\frac{C_{\ell_0}}{h}\log\frac{1}{d_{\ell_0, (i)}} +1}\label{eqn:KL-Inequality}\\
    \geq & \exp\left(-\frac{C_{\ell_0}}{\frac{1}{\log\frac{1}{d_{\ell_0, (i)}}}} -\frac{C_{\ell_0}}{\frac{1}{\log\frac{1}{d_{\ell_0, (i)}}}}\frac{1}{2} - \frac{C_{\ell_0}}{\frac{1}{\log\frac{1}{d_{\ell_0, (i)}}}}\right)\label{eqn:h>=1-and-e^x>=x+1}\\
    \geq & \exp\left(-\frac{C_{\ell_0}}{\frac{1}{4\log\frac{1}{d_{\ell_0, (i)}}}}\right).
\end{align}
Step (\ref{eqn:Apply-Lemma-KL}) is by lemma \ref{lemma:KL-Divergence-Chernoff-ln}. Step (\ref{eqn:KL-Inequality}) is by the following fact
\begin{align*}
    \text{KL}(\frac{M}{\log\frac{1}{d}}, d) = & \frac{M}{\log\frac{1}{d}} \log\frac{\frac{M}{\log\frac{1}{d}}}{d} + \left(1-\frac{M}{\log\frac{1}{d}}\right)\log\frac{1-\frac{M}{\log\frac{1}{d}}}{1-d}\\
    = & \frac{M}{\log\frac{1}{d}}\log\frac{M}{\log\frac{1}{d}} + M + \left(1-\frac{M}{\log\frac{1}{d}}\right) \log(1-\frac{M}{\log\frac{1}{d}}) + \left(1-\frac{M}{\log\frac{1}{d}}\right) \log\frac{1}{1-d}\\
    \leq & 0 + M + 0 + \left(1-\frac{M}{\log\frac{1}{d}}\right) \log\frac{1}{1-d}\\
    = & M+\log\frac{1}{1-d}
\end{align*}
holds for any $M, d$ such that $\frac{M}{\log\frac{1}{d}}, d\in (0, 1)$. Step (\ref{eqn:h>=1-and-e^x>=x+1}) is due to $h\geq 1$, $\log \frac{1}{1-d_{\ell_0, (i)}} < \frac{1}{2}$ and inequality $e^x\geq x+1$.

Plug in the above inequality, we can conclude $\Pr_1\left(T_i \geq  \bar{T}_{i,\ell_0}\right) \leq 1-\exp\left(-\frac{C_{\ell_0}}{\frac{1}{4\log\frac{1}{d_{\ell_0, (i)}}}}\right)$, suggesting that $\mathbb{P}_1(T_i\leq \bar{T}_{i,\ell_0})\geq \exp\left(-\frac{C_{\ell_0}}{\frac{1}{4\log\frac{1}{d_{\ell_0, (i)}}}}\right)$. We complete step 2.

\subsection{Proof of Lemma \ref{lemma:bound-realized-KL}}
\begin{proof}{Proof of Lemma \ref{lemma:bound-realized-KL}}
Take $\lambda= \frac{I}{N\sigma^2}> 0$. Easy to see $\Pr\left(\max_{1\leq t\leq N}\sum_{s=1}^t X_s > I\right)=\Pr\left(\max_{1\leq t\leq N}\prod_{s=1}^t \exp(\lambda X_s)> \exp(\lambda I)\right) $.

Since $\mathbb{E}\left[\prod_{s=1}^{t+1} \exp(\lambda X_s) | X_1,X_2,\cdots, X_t\right] = \mathbb{E}[\exp(\lambda X_{t+1})]\cdot\prod_{s=1}^{t} \exp(\lambda X_s)\geq \exp(\lambda \mathbb{E}[X_{t+1}])\cdot\prod_{s=1}^{t} \exp(\lambda X_s)=\prod_{s=1}^{t} \exp(\lambda X_s)$, we can conclude $\{\prod_{s=1}^{t} \exp(\lambda X_s)\}_{t=1}^{+\infty}$ is a submartingale. By the Theorem 3.10 in \cite{LattimoreS2020}, we have
\begin{align*}
    \Pr\left(\max_{1\leq t\leq N}\sum_{s=1}^t X_s > I\right)= & \Pr\left(\max_{1\leq t\leq N}\prod_{s=1}^t \exp(\lambda X_s)> \exp(\lambda I)\right)\\
    \leq & \frac{\mathbb{E} \prod_{s=1}^N \exp(\lambda X_s)  }{\exp(\lambda I)}\\
    = & \frac{\exp\left(\frac{N\sigma^2\lambda^2}{2}\right) }{\exp(\lambda I)}\\
    = & \exp\left(-\frac{I^2}{2N\sigma^2}\right).
\end{align*}
\end{proof}

\section{Details on the Numerical Experiment Set-ups}
\label{sec:Details-on-the-Numerical-Experiment-Set-ups}
In what follows, we provide details about how the numerical experiments are run. All the numerical experiments were run on the Kaggle servers. 
\subsection{Single Resource, i.e, $L=1$}
\label{sec:Details-on-the-Numerical-Experiment-Set-ups-single}
The details about Figure \ref{fig:numeric-result} is as follows. Firstly, the bars in the plot are more detailedly explained as follows: From left to right, the 1st blue column is matching high reward and high consumption, considering deterministic resource consumption. The 2nd orange column is matching high reward and low consumption, considering deterministic consumption. The 3rd green column is matching high reward and high consumption, considering correlated reward and consumption. The 4th red column is matching high reward and low consumption, considering correlated reward and consumption. The 5th purple column is matching high reward and high consumption, considering uncorrelated reward and consumption. The 6th brown column is matching high reward and low consumption, considering uncorrelated reward and consumption.

Next, we list down the detailed about the setup.
\begin{enumerate}
    \item One group of suboptimal arms, High match High
    
    $r_1=0.9$; $r_i=0.8, i=2,\cdots,256$; $d_{\ell=1, i}=0.9, i=1,\cdots,128$; $d_{\ell=1, i}=0.1, i=129,\cdots,256$
    
    \item One group of suboptimal arms, High match Low
    
    $r_1=0.9$; $r_i=0.8, i=2,\cdots,256$; $d_{\ell=1, i}=0.1, i=1,\cdots,128$; $d_{\ell=1, i}=0.9, i=129,\cdots,256$
    
    \item Trap, High match High
    
    $r_1=0.9$; $r_i=0.8, i=2,\cdots,32$; $r_i=0.1, i=33,\cdots,256$; $d_{\ell=1, i}=0.9, i=1,\cdots,128$; $d_{\ell=1, i}=0.1, i=129,\cdots,256$
    
    \item Trap, High match Low
    
    $r_1=0.9$; $r_i=0.8, i=2,\cdots,32$; $r_i=0.1, i=33,\cdots,256$; $d_{\ell=1, i}=0.1, i=1,\cdots,128$; $d_{\ell=1, i}=0.9, i=129,\cdots,256$
    
    \item Polynomial, High match High
    
    $r_1=0.9, r_i=0.9(1-\sqrt{\frac{i}{256}}), i\geq 2$. $d_{\ell=1, i}=0.9, i=1,\cdots,128$; $d_{\ell=1, i}=0.1, i=129,\cdots,256$
    
    \item Polynomial, High match Low
    
    $r_1=0.9, r_i=0.9(1-\sqrt{\frac{i}{256}}), i\geq 2$. $d_{\ell=1, i}=0.1, i=1,\cdots,128$; $d_{\ell=1, i}=0.9, i=129,\cdots,256$
    
    \item Geometric, High match High
    
    $r_1=0.9, r_{256}=0.1$, $\{r_i\}_{i=1}^{256}$ is geometric, $r_i = 0.9 * (\frac{1}{9})^{\frac{i-1}{255}}$. $d_{\ell=1, i}=0.9, i=1,\cdots,128$; $d_{\ell=1, i}=0.1, i=129,\cdots,256$
    
    \item Geometric, High match Low
    
    $r_1=0.9, r_{256}=0.1$, $\{r_i\}_{i=1}^{256}$ is geometric, $r_i = 0.9 * (\frac{1}{9})^{\frac{i-1}{255}}$. $d_{\ell=1, i}=0.1, i=1,\cdots,128$; $d_{\ell=1, i}=0.9, i=129,\cdots,256$
\end{enumerate}
There are three kinds of consumption.
\begin{enumerate}
    \item Deterministic Consumption. The consumption of each arm are deterministic.
    \item Uncorrelated Consumption. When we pull an arm, the consumption and reward follow Bernoulli Distribution and are independent.
    \item Correlated Consumption. When we pull the arm $i$, the consumption is $D_{\ell=1, i}=\mathbbm{1}(U\le d_{\ell=1,i})$, $D_{\ell=2, i}=\mathbbm{1}(U\le d_{\ell=2,i})$, $R=\mathbbm{1}(U\le r_i)$, where $U$ follows uniform distribution on $[0,1]$
\end{enumerate}

\subsection{Multiple Resources}
\label{sec:Details-on-the-Numerical-Experiment-Set-ups-multiple}
Similarly, in multiple resources cases, we still considered different setups of mean reward, consumption, and consumption setups. 
\begin{enumerate}
    \item One group of suboptimal arms, High match High
    
    $r_1=0.9$; $r_i=0.8, i=2,\cdots,256$; $d_{\ell=1, i}=0.9, i=1,\cdots,128$; $d_{\ell=1, i}=0.1, i=129,\cdots,256$; $d_{\ell=2, i}=0.9, i=1,\cdots,128$; $d_{\ell=2, i}=0.1, i=129,\cdots,256$
    
    \item One group of suboptimal arms, Mixture
    
    $r_1=0.9$; $r_i=0.8, i=2,\cdots,256$; $d_{\ell=1, i}=0.1, i=1,\cdots,128$; $d_{\ell=1, i}=0.9, i=129,\cdots,256$; $d_{\ell=2, i}=0.9, i=1,\cdots,128$; $d_{\ell=2, i}=0.1, i=129,\cdots,256$
    
    \item One group of suboptimal arms, High match Low
    
    $r_1=0.9$; $r_i=0.8, i=2,\cdots,256$; $d_{\ell=1, i}=0.1, i=1,\cdots,128$; $d_{\ell=1, i}=0.9, i=129,\cdots,256$; $d_{\ell=2, i}=0.1, i=1,\cdots,128$; $d_{\ell=2, i}=0.9, i=129,\cdots,256$
    
    \item Trap, High match High
    
    $r_1=0.9$; $r_i=0.8, i=2,\cdots,32$; $r_i=0.1, i=33,\cdots,256$; $d_{\ell=1, i}=0.9, i=1,\cdots,128$; $d_{\ell=1, i}=0.1, i=129,\cdots,256$; $d_{\ell=2, i}=0.9, i=1,\cdots,128$; $d_{\ell=2, i}=0.1, i=129,\cdots,256$;
    
    \item Trap, Mixture
    
    $r_1=0.9$; $r_i=0.8, i=2,\cdots,32$; $r_i=0.1, i=33,\cdots,256$; $d_{\ell=1, i}=0.1, i=1,\cdots,128$; $d_{\ell=1, i}=0.9, i=129,\cdots,256$; $d_{\ell=2, i}=0.9, i=1,\cdots,128$; $d_{\ell=2, i}=0.1, i=129,\cdots,256$;
    
    \item Trap, High match Low
    
    $r_1=0.9$; $r_i=0.8, i=2,\cdots,32$; $r_i=0.1, i=33,\cdots,256$; $d_{\ell=1, i}=0.1, i=1,\cdots,128$; $d_{\ell=1, i}=0.9, i=129,\cdots,256$; $d_{\ell=2, i}=0.1, i=1,\cdots,128$; $d_{\ell=2, i}=0.9, i=129,\cdots,256$
    
    \item Polynomial, High match High
    
    $r_1=0.9, r_i=0.9(1-\sqrt{\frac{i}{256}}), i\geq 2$. $d_{\ell=1, i}=0.9, i=1,\cdots,128$; $d_{\ell=1, i}=0.1, i=129,\cdots,256$; $d_{\ell=2, i}=0.9, i=1,\cdots,128$; $d_{\ell=2, i}=0.1, i=129,\cdots,256$
    
    \item Polynomial, Mixture
    
    $r_1=0.9, r_i=0.9(1-\sqrt{\frac{i}{256}}), i\geq 2$. $d_{\ell=1, i}=0.1, i=1,\cdots,128$; $d_{\ell=1, i}=0.9, i=129,\cdots,256$; $d_{\ell=2, i}=0.9, i=1,\cdots,128$; $d_{\ell=2, i}=0.1, i=129,\cdots,256$
    
    \item Polynomial, High match Low
    
    $r_1=0.9, r_i=0.9(1-\sqrt{\frac{i}{256}}), i\geq 2$. $d_{\ell=1, i}=0.1, i=1,\cdots,128$; $d_{\ell=1, i}=0.9, i=129,\cdots,256$; $d_{\ell=2, i}=0.1, i=1,\cdots,128$; $d_{\ell=2, i}=0.9, i=129,\cdots,256$
    
    \item Geometric, High match High
    
    $r_1=0.9, r_{256}=0.1$, $\{r_i\}_{i=1}^{256}$ is geometric, $r_i = 0.9 * (\frac{1}{9})^{\frac{i-1}{255}}$. $d_{\ell=1, i}=0.9, i=1,\cdots,128$; $d_{\ell=1, i}=0.1, i=129,\cdots,256$; $d_{\ell=2, i}=0.9, i=1,\cdots,128$; $d_{\ell=2, i}=0.1, i=129,\cdots,256$;
    
    \item Geometric, Mixture
    
    $r_1=0.9, r_{256}=0.1$, $\{r_i\}_{i=1}^{256}$ is geometric, $r_i = 0.9 * (\frac{1}{9})^{\frac{i-1}{255}}$. $d_{\ell=1, i}=0.1, i=1,\cdots,128$; $d_{\ell=1, i}=0.9, i=129,\cdots,256$; $d_{\ell=2, i}=0.9, i=1,\cdots,128$; $d_{\ell=2, i}=0.1, i=129,\cdots,256$;
    
    \item Geometric, High match Low
    
    $r_1=0.9, r_{256}=0.1$, $\{r_i\}_{i=1}^{256}$ is geometric, $r_i = 0.9 * (\frac{1}{9})^{\frac{i-1}{255}}$. $d_{\ell=1, i}=0.1, i=1,\cdots,128$; $d_{\ell=1, i}=0.9, i=129,\cdots,256$; $d_{\ell=2, i}=0.1, i=1,\cdots,128$; $d_{\ell=2, i}=0.9, i=129,\cdots,256$
\end{enumerate}
There are two kinds of consumption.
\begin{enumerate}
    \item Uncorrelated Consumption. When we pull an arm, the consumption and reward follow Bernoulli Distribution and are independent.
    \item Correlated Consumption. When we pull the arm $i$, the consumption is $D_{\ell=1, i}=\mathbbm{1}(U\le d_{\ell=1,i})$, $D_{\ell=2, i}=\mathbbm{1}(U\le d_{\ell=2,i})$, $R=\mathbbm{1}(U\le r_i)$, where $U$ follows uniform distribution on $[0,1]$
\end{enumerate}

\subsection{Detailed Setting of Real-World Dataset}
\label{sec:details-realworld}
We adopted K Nearest Neighbour, Logistic Regression, Random Forest, and Adaboost as our candidates for the classifiers. And we applied each combination of machine learning model and its hyper-parameter to each supervised learning task with 500 independent trials. We identified the combination with the lowest empirical mean cross-entropy as the best arm. 

Our BAI experiments were conducted across 100 independent trials. During each arm pull in a BAI experiment round—i.e., selecting a machine learning model with a specific hyperparameter combination—we partitioned the datasets randomly into training and testing subsets, maintaining a testing fraction of 0.3. The training subset was utilized to train the machine learning models, and the cross-entropy computed on the testing subset served as the realized reward. We flattened the 2-D image as a vector if the dataset is consists of images. All the experiments are deploied on the Kaggle Server with default CPU specifications. 

The details of the real-world datasets we used are as follows
\begin{itemize}
    \item To classify labels 3 and 8 in part of the MNIST Dataset. (MNIST 3\&8)
    
    Number of label 3: 1086, Number of label 8: 1017, Number of Atrributes: 784.

    Time budget: 60 seconds.

    Link of dataset: \url{https://www.kaggle.com/competitions/digit-recognizer}.
    \item Optical Recognition of Handwritten Digits Data Set. (Handwritten)

    Number of Instances: 3823, Number of Attributes: 64

    Time budget: 60 seconds.

    Link of dataset: \url{https://archive.ics.uci.edu/ml/datasets/optical+recognition+of+handwritten+digits}
    \item To classify labels -1 and 1 in the MADELON dataset. (MADELON)

    Number of Instances: 2000, Number of Attributes: 500.

    Time budget: 80 seconds.

    Link of dataset: \url{https://archive.ics.uci.edu/ml/datasets/Madelon}. 
    \item To classify labels -1 and 1 in the Arcene dataset. (Arcene)
    
    Number of Instances: 200, Number of Attributes:10000

    Time budget: 150 seconds.

    Link of dataset: \url{https://archive.ics.uci.edu/ml/datasets/Arcene} (Arcene)
    \item To classify labels of weight conditions in the Obesity dataset. (Obesity)

    Number of Instances: 2111, Number of Attributes: 16.

    Time budget: 20 seconds.
    
    Link of dataset:
    \url{https://archive.ics.uci.edu/dataset/544/estimation+of+obesity+levels+based+on+eating+habits+and+physical+condition}.
\end{itemize}

The machine learning models and candidate hyperparameters, aka arms, are as follows. We implemented all these models through the scikit-learn package in https://scikit-learn.org/stable/index.html
\begin{itemize}
    \item K Nearest Neighbour
    \begin{itemize}
        \item n\_neighbours = 5, 15, 25, 35, 45, 55, 65, 75
    \end{itemize}
    \item Logistic Regression
    \begin{itemize}
        \item Regularization = "l2" or None
        \item Intercept exists or not exists
        \item Inverse value of regularization coefficient= 1, 2
    \end{itemize}
    \item Random Forest
    \begin{itemize}
        \item Fix max\_depth = 5
        \item n\_estimators = 10, 20, 30, 50
        \item criterion = "gini" or "entropy"
    \end{itemize}
    \item Adaboost
    \begin{itemize}
        \item n\_estimators = 10, 20, 30, 40
        \item learning rate = 1.0, 0.1
    \end{itemize}
\end{itemize}

\end{document}